\definecolor{deepsetsblue}{RGB}{70,26,201}
\definecolor{discontinuousunitred}{RGB}{233,27,106}
\definecolor{subpatternsseablue}{RGB}{114,155,115}
\tikzstyle{Partioner Unit}=[fill={rgb,255: red,148; green,155; blue,129}, draw={rgb,255: red,67; green,224; blue,177}, shape=circle]
\tikzstyle{Partitioner Unit Blue}=[fill={rgb,255: red,153; green,194; blue,255}, draw={rgb,255: red,70; green,26; blue,201}, shape=circle]
\tikzstyle{Discontinuous Unit}=[fill={rgb,255: red,255; green,166; blue,166}, draw={rgb,255: red,233; green,27; blue,106}, shape=rectangle,  minimum width= 2.5em, minimum height= 2.5em]
\tikzstyle{Discontinuous Unit small}=[fill={rgb,255: red,255; green,166; blue,166}, draw={rgb,255: red,233; green,27; blue,106}, shape=rectangle]
\tikzstyle{dot}=[fill={rgb,255: red,179; green,179; blue,179}, draw=black, shape=circle]
\tikzstyle{Plus}=[fill={rgb,255: red,230; green,230; blue,230}, draw={rgb,255: red,114; green,114; blue,114}, shape=circle]
\tikzstyle{Predictor Arrow}=[fill={rgb,255: red,114; green,155; blue,115}, ->, draw={rgb,255: red,39; green,219; blue,165}]
\tikzstyle{Partitioner Arrow}=[fill={rgb,255: red,62; green,20; blue,229}, draw={rgb,255: red,60; green,31; blue,225}, ->]
\tikzstyle{Discontinuous Unit Information Flow}=[fill={rgb,255: red,189; green,14; blue,166}, draw={rgb,255: red,212; green,20; blue,136}, ->]
\tikzstyle{Plus}=[-, draw={rgb,255: red,133; green,133; blue,133}]
\DeclareSymbolFontAlphabet{\mathbb}{AMSb}
\DeclareSymbolFontAlphabet{\mathbbl}{bbold}
\newcommand{\rr}{{\mathbb{R}}}
\newcommand{\pp}{{\mathbb{P}}}
\newcommand{\xx}{{\mathbb{X}}}
\newcommand{\nn}{{\mathbb{N}}}
\NewDocumentCommand{\comp}{o}{
\text{Comp}\IfValueT{#1}{\left({#1}\right)}
}
\newcommand{\kkk}{X}
\newcommand{\fff}{{\NN}}
\renewcommand{\ggg}{{\NN[d,1]}}
\newcommand{\rrr}{R}
\newcommand{\rrflex}[1]{{\ensuremath{\rr^{#1}
}}}
\newcommand{\rrD}{{\rrflex{D}}}
\newcommand{\rrd}{{\rrflex{d}}}
\newcommand{\xxx}{X}
\NewDocumentCommand\argmin{o}{{\operatorname{argmin}\IfValueT{#1}{_{{#1}}}}}
\NewDocumentCommand\AF{o}{\operatorname{AF}\IfValueT{#1}{
		{
			\left({#1}\right)
		}
}}
\NewDocumentCommand\NNtrunc{o}{{
		\left \lceil{
			\mathcal{NN}^{\sigma}
			\IfValueT{#1}{_{#1}}
		}\right \rceil 
}}
\NewDocumentCommand\NNshal{oo}{
	{nn%
		_{\IfValueT{#1}{#1}}	
		\IfValueF{#2}{^{\fff}}\IfValueT{#2}{^{#2}}
	}
}
\NewDocumentCommand\NNho{oo}{
	{\mathcal{HNN}%
		_{R\IfValueT{#1}{#1}}	
		\IfValueF{#2}{^{\fff}}\IfValueT{#2}{^{#2}}
	}
}
\NewDocumentCommand\NNaff{oo}{
	{\mathcal{NN}%
		_{R,\IfValueT{#1}{#1}}	
		\IfValueF{#2}{^{a}}
	}
}
\NewDocumentCommand{\prodd}{oo}{
	\overset{{#2}}{
		\underset{{#1}}{
			\circlearrowleft
		}
	}
}
\NewDocumentCommand{\NN}{oo}{
	{\operatorname{NN}\IfValueT{#2}{^{#2}}\IfValueF{#2}{^{\sigma}}
		\IfValueF{#1}{_{d,D}}
		\IfValueT{#1}{_{{{{#1}}}}}
	}
}
\NewDocumentCommand{\intt}{o}{{\operatorname{int}
		\IfValueT{#1}{\left({#1}\right)}
}}
\NewDocumentCommand\rsupp{mo}{{\operatorname{supp}
		\IfValueT{#1}{\left({#1}
			\middle|
			\IfValueT{#2}{{{#2}}}\IfValueF{#2}{
				{
					K_{\cdot}
				}
			}
			\right)}
}}
\newtheorem{defn}{Definition}[section]
\newtheorem{ass}{Assumption}[defn]
\newtheorem{prop}{Proposition}[defn]
\newtheorem{lem}{Lemma}[defn]
\newtheorem{ex}{Example}[defn]
\newtheorem{thrm}{Theorem}[defn]
\newtheorem{rremark}{Remark}[defn]
\newtheorem*{ass*}{Assumption}
\newtheorem*{thrm*}{Theorem}
\newtheorem*{cor*}{Corollary}
\newtheorem*{prop*}{Proposition}
\newtheorem*{lem*}{Lemma}
\newtheorem{rremark*}{Remark}
\NewDocumentCommand{\PCNN}{mo}{
	{
		\operatorname{PC}{#1}
		\IfValueT{#2}{{_{#2}}}
	}
}
\newcommand{\aname}{{\mbox{PCNN}}}
\newcommand{\anames}{{\mbox{PCNNs}}}
\title{Learning Sub-Patterns in Piecewise Continuous Functions}
\author{Anastasis Kratsios
	\thanks{
	University of Basel, Department of Mathematics and Informatics, Spiegelgasse 1, 4051 Basel email: \textit{anastasis.kratsios@math.ethz.ch}
	}%
	\and
	Behnoosh Zamanlooy
	\thanks{
		Department of Informatics, Computation, and Economics, University of Z\"{u}rich, Binzm\"{u}hlestrasse 14, 8050 Z\"{u}rich.
		email: \textit{bzamanlooy@ifi.uzh.ch}
	}%
}
\begin{document}
\maketitle
\begin{abstract}
Most stochastic gradient descent algorithms can optimize neural networks that are sub-differentiable in their parameters; however, this implies that the neural network's activation function must exhibit a degree of continuity which limits the neural network model's uniform approximation capacity to continuous functions.  This paper focuses on the case where the discontinuities arise from distinct sub-patterns, each defined on different parts of the input space.   We propose a new discontinuous deep neural network model trainable via a decoupled two-step procedure that avoids passing gradient updates through the network's only and strategically placed, discontinuous unit.  We provide approximation guarantees for our architecture in the space of bounded continuous functions and universal approximation guarantees in the space of piecewise continuous functions which we introduced herein.  We present a novel semi-supervised two-step training procedure for our discontinuous deep learning model, tailored to its structure, and we provide theoretical support for its effectiveness.  The performance of our model and trained with the propose procedure is evaluated experimentally on both real-world financial datasets and synthetic datasets.
\end{abstract}
\noindent
\textbf{Keywords:} Piecewise Continuous Functions, Universal Approximation, Discontinuous Feedforward Networks, Deep Zero-Sets, Set-Valued Universal Approximation, Geometric Deep Learning, Portfolio Replication.\hfill\\
\textbf{MSC:} Artificial neural networks and deep learning (68T07),  Set-valued and variational analysis (49J53), Partitions of sets (05A18), Parallel numerical computation (65Y05), Randomized algorithms (68W20), Financial Markets (91G15).  
\let\thefootnote\relax\footnotetext{This research was supported by the ETH Z\"{u}rich Foundation and by the ERC.}
\section{Introduction}
Since their introduction in \cite{mcculloch1943logical}, neural networks have led to numerous advances across various scientific areas.  These include, mathematical finance in \cite{cont2010stochastic,buehler2019deep,cuchiero2020generative}, computer vision and neuroimaging in \cite{simonyan2014very,moore2019using}, signal processing in~\cite{lapedes1987nonlinear,krishnan2015deep}, and climate change modeling in~\cite{climatechnage}.  From the theoretical vantage point, these methods' success lies in the harmony between their expressivity \citep{hornik1990universal,barron1993universal,poggio2017and,Yaroski2020Smooth}, the training algorithms which can efficiently leverage this expressibility \citep{ADAM2015KingmaB14,JMLRmaximumprinciple,JMLRautodiffSurvey,patrascu2021stochastic} and, the implicit inductive bias of deep neural model trained with these methods \cite{NEURIPS2019_c4ef9c39,heiss2019implicit}.  

This paper probes the first two aspects when faced with the task of learning piecewise continuous functions.  We first identify approximation-theoretic limitations to commonly deployed feedforward neural networks (FFNNs); i.e.: with continuous activation functions, and then fill this gap with a new deep neural model ($\anames$) together with a randomized and parallelizable training meta-algorithm that exploits the $\aname$'s structure.  

The description of the problem, and our results, begins by revisiting the classical \textit{universal approximation theorems} \citep{Cybenko,funahashi1989approximate,hornik1991approximation}.  
Briefly, these classical universal approximation results state that, when a phenomenon is governed by some continuous target function $f$, then FFNNs with continuous function $\sigma$ can control the worst-case approximation of error to arbitrary precision.  If $f$ is discontinuous, as is for instance the case in many signal processing or mathematical finance \citep{lee2008jumps,tankov2003financial} situations then, the uniform limit theorem from classical topology \cite{munkres2014topology} guarantees that the worst-case approximation error of $f$ by FFNNs cannot be controlled; nevertheless, the average error incurred by approximating $f$ by FFNNs can be \citep{barron1993universal,gribonval2021approximation,SIEGEL2020313}.  Essentially, this means that if $f$ is discontinuous then when approximating it by an FFNN there must be a ``small'' portion of the (non-empty) input space $X\subseteq \rr^d$ whereon the approximation can become arbitrarily poor.  

If $f:X\rightarrow \rr^D$ is a \textit{piecewise continuous function}, by which we mean that it can be represented as:
\begin{equation}
	f = \sum_{n=1}^N I_{K_n}f_n
	;
	\label{eq_sub_patterns}
\end{equation}
for some integer $N$, continuous functions $f_n:\rr^d\rightarrow \rr^D$ and, some (non-empty) compact subsets $K_n\subseteq X$ then, the regions where the approximation of $f$ by FFNN is poor corresponds precisely to the regions where the \textit{parts} $\{K_n\}_{n=1}^N$ interface (whenever the \textit{subpatterns} $\{f_n\}_{n=1}^N$ miss-match thereon).  

In principle, the existence result of \cite{kratsios2019UATs} implies that guarantees for uniform approximation should be possible with a deep neural model and one such likely candidate are deep feedforward with discontinuous activation functions.  However, these types of networks are not compatible with most commonly used (stochastic) gradient descent-type algorithms.  
A few methods for training such models are available.  For instance, \cite{findlay1989} describes a heuristic approach, but its empirical performance and theoretical guarantees are not explored.  In \cite{ferreira2005solving} the author proposes a linear programming approach to training shallow feedforward networks with threshold activation function whose hidden weights and biases are fixed.  This avoids back-propagating through the non-differentiable threshold activation function; however, the method is specific to their shallow architecture.  Similarly, \cite{huang2006can} considers an extreme learning machine approach by randomizing all but the network's final linear layer, which reduces the training task to a classical linear regression problem.  However, this approach's provided approximation results are strictly weaker than the known guarantees for classical feedforward networks with a continuous activation function obtained, as are derived for example in \cite{kidger2020universal}.  

Our proposed solution to overcoming the problem that: worst-case \textit{universal approximation} by FFNNs is limited to continuous functions, begins by acknowledging that approximating an arbitrary discontinuous (or even integrable) function $f$ is an \textit{unstructured approximation} problem; whereas approximating a piecewise continuous function is a \textit{structured approximation problem}; i.e. there is additional structure and it should be encoded into the machine learning model in order to achieve competent predictive performance.  Typical examples of structured approximation problems include assimilating graph structure into the learning model \citep{ZHOU202057,2021arXiv210413478B}, manifold-valued neural networks \cite{ganea2018hyperbolic,kratsios2020non,kratsios2021quantitative,ZamanlooyConstraints}, encoded symmetries into the trained model \citep{pmlrv97cohen19d,petersen2020equivalence,yarotsky2021universal}, or encode inevitability \cite{NEURIPS2019_7ac71d43,NEURIPS2018_69386f6b,grathwohl2018scalable,kratsios2021neu}.  Here, the relevant structure is given by the (continuous) subpatterns $\{f_n\}_{n=1}^N$ and (non-empty compact) parts $\{K_n\}_{n=1}^N$ representing the function $f$ in~\eqref{eq_sub_patterns}.  

\begin{figure}[ht]
\centering
\begin{floatrow}
\ffigbox{
\begin{adjustbox}{width=\columnwidth,center}
\centerline{\includegraphics[scale=0.33]{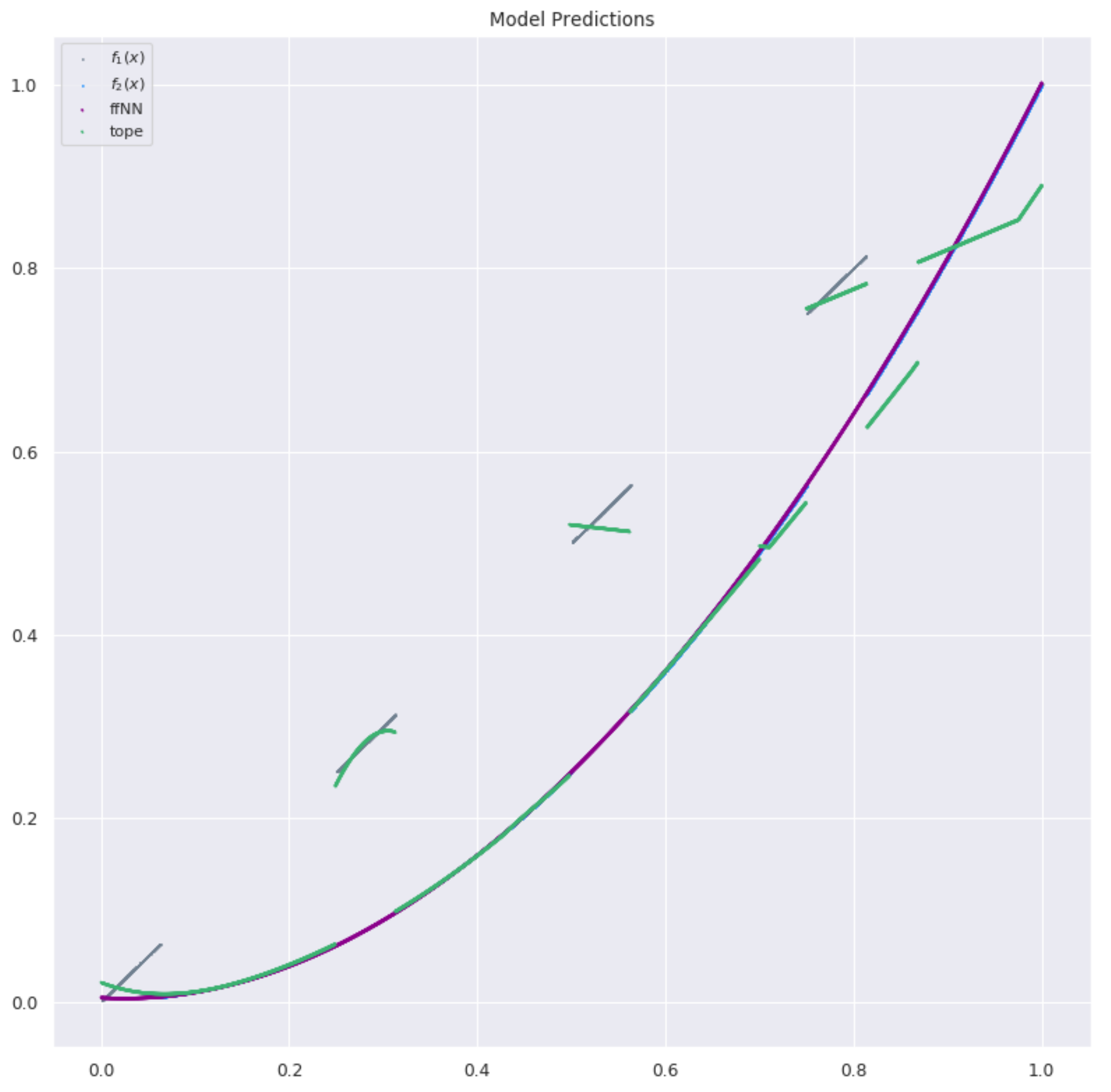}}
\end{adjustbox}
}{%
  \caption{Approximation of sub-patterns by FFNN.}
		\label{fig_Motivational_Demonstration_of_PCNNs}
}
\ffigbox{%
\begin{adjustbox}{width=\columnwidth,center}
\scalebox{0.2}{
\begin{tikzpicture}
	\begin{pgfonlayer}{nodelayer}
		\node [style=Partitioner Unit Blue] (0) at (0, 0) {};
		\node [style=Partitioner Unit Blue] (1) at (0, -2) {};
		\node [style=Partitioner Unit Blue] (2) at (0, 2) {};
		\node [style=Partitioner Unit Blue] (3) at (2.5, 0.5) {};
		\node [style=Partitioner Unit Blue] (4) at (2.5, 1.5) {};
		\node [style=Partitioner Unit Blue] (5) at (2.5, -0.5) {};
		\node [style=Partitioner Unit Blue] (6) at (2.5, -1.5) {};
		\node [style=Partitioner Unit Blue] (7) at (4.75, 0.75) {};
		\node [style=Partitioner Unit Blue] (9) at (4.75, -0.5) {};
		\node [style=dot] (15) at (0, 2) {};
		\node [style=dot] (16) at (0, 0) {$x_2$};
		\node [style=dot] (17) at (0, -2) {};
		\node [] (18) at (0, 2) {};
		\node [] (19) at (0, 2) {};
		\node [] (20) at (0, 2) {};
		\node [] (21) at (0, 2) {};
		\node [style=dot] (22) at (0, 2) {$x_1$};
		\node [style=dot] (23) at (0, -2) {$x_3$};
		\node [style=Partioner Unit] (27) at (2.5, 5.5) {};
		\node [style=Partioner Unit] (28) at (2.5, 4.5) {};
		\node [style=Partioner Unit] (29) at (2.5, 3.5) {};
		\node [style=Partioner Unit] (30) at (4.5, 6.5) {};
		\node [style=Partioner Unit] (31) at (4.5, 5.5) {};
		\node [style=Partioner Unit] (32) at (4.5, 4.5) {};
		\node [style=Partioner Unit] (33) at (4.5, 3.5) {};
		\node [style=Partioner Unit] (34) at (2.5, -3.25) {};
		\node [style=Partioner Unit] (35) at (2.5, -4.5) {};
		\node [style=Partioner Unit] (36) at (4.5, -2.5) {};
		\node [style=Partioner Unit] (37) at (4.5, -4.5) {};
		\node [style=Partioner Unit] (38) at (4.5, -5.5) {};
		\node [style=Partioner Unit] (39) at (4.5, -3.5) {};
		\node [style=Partioner Unit] (40) at (4.5, -6.5) {};
		\node [style=Partioner Unit] (41) at (6.5, 3.5) {};
		\node [style=Partioner Unit] (42) at (6.5, 4.5) {};
		\node [style=Partioner Unit] (43) at (6.5, 5.5) {};
		\node [style=Partioner Unit] (44) at (6.5, -2.5) {};
		\node [style=Partioner Unit] (45) at (6.5, -3.5) {};
		\node [style=Partioner Unit] (46) at (6.5, -4.5) {};
		\node [style=Partioner Unit] (47) at (8.25, 5.5) {};
		\node [style=Partioner Unit] (48) at (8.25, 4.5) {};
		\node [style=Partioner Unit] (49) at (8.25, 3.5) {};
		\node [style=Partioner Unit] (50) at (10, 5.5) {};
		\node [style=Partioner Unit] (51) at (10, 4.5) {};
		\node [style=Partioner Unit] (52) at (10, 3.5) {};
		\node [style=Partioner Unit] (53) at (6.5, 5.5) {};
		\node [style=Partioner Unit] (54) at (8.25, 5.5) {};
		\node [style=dot] (59) at (15, 3.75) {$\hat{f}(x)_1$};
		\node [style=dot] (60) at (15, 1.25) {$\hat{f}(x)_2$};
		\node [style=dot] (61) at (15, -1.25) {$\hat{f}(x)_3$};
		\node [style=dot] (62) at (15, -3.75) {$\hat{f}(x)_3$};
		\node [] (64) at (10, 5.5) {};
		\node [] (65) at (10, 4.5) {};
		\node [] (66) at (10, 3.5) {};
		\node [style=Discontinuous Unit] (72) at (11.5, 0) {};
	\end{pgfonlayer}
	\begin{pgfonlayer}{edgelayer}
		\draw [style=Partitioner Arrow] (2) to (3);
		\draw [style=Partitioner Arrow] (2) to (4);
		\draw [style=Partitioner Arrow] (2) to (5);
		\draw [style=Partitioner Arrow] (2) to (6);
		\draw [style=Partitioner Arrow] (0) to (3);
		\draw [style=Partitioner Arrow] (0) to (4);
		\draw [style=Partitioner Arrow] (0) to (5);
		\draw [style=Partitioner Arrow] (0) to (6);
		\draw [style=Partitioner Arrow] (1) to (6);
		\draw [style=Partitioner Arrow] (1) to (5);
		\draw [style=Partitioner Arrow] (1) to (3);
		\draw [style=Partitioner Arrow] (1) to (4);
		\draw [style=Partitioner Arrow] (3) to (7);
		\draw [style=Partitioner Arrow] (4) to (7);
		\draw [style=Partitioner Arrow] (3) to (9);
		\draw [style=Partitioner Arrow] (4) to (9);
		\draw [style=Partitioner Arrow] (5) to (9);
		\draw [style=Partitioner Arrow] (5) to (7);
		\draw [style=Partitioner Arrow] (6) to (9);
		\draw [style=Partitioner Arrow] (6) to (7);
		\draw [style=Predictor Arrow] (22.center) to (29);
		\draw [style=Predictor Arrow] (22.center) to (28);
		\draw [style=Predictor Arrow] (22.center) to (27);
		\draw [style=Predictor Arrow] (22.center) to (34);
		\draw [style=Predictor Arrow] (22.center) to (35);
		\draw [style=Predictor Arrow] (16) to (34);
		\draw [style=Predictor Arrow] (16) to (35);
		\draw [style=Predictor Arrow] (16) to (29);
		\draw [style=Predictor Arrow] (16) to (28);
		\draw [style=Predictor Arrow] (16) to (27);
		\draw [style=Predictor Arrow] (23.center) to (35);
		\draw [style=Predictor Arrow] (23.center) to (34);
		\draw [style=Predictor Arrow] (23.center) to (28);
		\draw [style=Predictor Arrow] (23.center) to (27);
		\draw [style=Predictor Arrow] (29) to (33);
		\draw [style=Predictor Arrow] (29) to (32);
		\draw [style=Predictor Arrow] (29) to (31);
		\draw [style=Predictor Arrow] (29) to (30);
		\draw [style=Predictor Arrow] (34) to (36);
		\draw [style=Predictor Arrow] (34) to (39);
		\draw [style=Predictor Arrow] (34) to (37);
		\draw [style=Predictor Arrow] (34) to (38);
		\draw [style=Predictor Arrow] (34) to (40);
		\draw [style=Predictor Arrow] (35) to (40);
		\draw [style=Predictor Arrow] (35) to (38);
		\draw [style=Predictor Arrow] (35) to (37);
		\draw [style=Predictor Arrow] (35) to (39);
		\draw [style=Predictor Arrow] (35) to (36);
		\draw [style=Predictor Arrow] (28) to (33);
		\draw [style=Predictor Arrow] (28) to (32);
		\draw [style=Predictor Arrow] (28) to (31);
		\draw [style=Predictor Arrow] (28) to (30);
		\draw [style=Predictor Arrow] (27) to (30);
		\draw [style=Predictor Arrow] (27) to (31);
		\draw [style=Predictor Arrow] (27) to (32);
		\draw [style=Predictor Arrow] (27) to (33);
		\draw [style=Predictor Arrow] (36) to (44);
		\draw [style=Predictor Arrow] (39) to (44);
		\draw [style=Predictor Arrow] (37) to (44);
		\draw [style=Predictor Arrow] (38) to (44);
		\draw [style=Predictor Arrow] (40) to (44);
		\draw [style=Predictor Arrow] (36) to (45);
		\draw [style=Predictor Arrow] (39) to (45);
		\draw [style=Predictor Arrow] (39) to (46);
		\draw [style=Predictor Arrow] (38) to (45);
		\draw [style=Predictor Arrow] (40) to (45);
		\draw [style=Predictor Arrow] (38) to (46);
		\draw [style=Predictor Arrow] (37) to (45);
		\draw [style=Predictor Arrow] (33) to (41);
		\draw [style=Predictor Arrow] (33) to (42);
		\draw [style=Predictor Arrow] (33) to (43);
		\draw [style=Predictor Arrow] (32) to (41);
		\draw [style=Predictor Arrow] (32) to (42);
		\draw [style=Predictor Arrow] (32) to (43);
		\draw [style=Predictor Arrow] (31) to (43);
		\draw [style=Predictor Arrow] (31) to (42);
		\draw [style=Predictor Arrow] (31) to (41);
		\draw [style=Predictor Arrow] (30) to (41);
		\draw [style=Predictor Arrow] (30) to (42);
		\draw [style=Predictor Arrow] (30) to (43);
		\draw [style=Predictor Arrow] (53) to (54);
		\draw [style=Predictor Arrow] (53) to (48);
		\draw [style=Predictor Arrow] (53) to (49);
		\draw [style=Predictor Arrow] (42) to (54);
		\draw [style=Predictor Arrow] (42) to (48);
		\draw [style=Predictor Arrow] (42) to (49);
		\draw [style=Predictor Arrow] (41) to (54);
		\draw [style=Predictor Arrow] (41) to (48);
		\draw [style=Predictor Arrow] (41) to (49);
		\draw [style=Predictor Arrow, in=180, out=0] (49) to (52);
		\draw [style=Predictor Arrow] (49) to (51);
		\draw [style=Predictor Arrow] (49) to (50);
		\draw [style=Predictor Arrow] (48) to (52);
		\draw [style=Predictor Arrow] (48) to (51);
		\draw [style=Predictor Arrow] (48) to (50);
		\draw [style=Predictor Arrow] (54) to (51);
		\draw [style=Predictor Arrow] (54) to (50);
		\draw [style=Predictor Arrow] (54) to (52);
		\draw [style=Discontinuous Unit Information Flow] (64.center) to (72);
		\draw [style=Discontinuous Unit Information Flow] (65.center) to (72);
		\draw [style=Discontinuous Unit Information Flow] (66.center) to (72);
		\draw [style=Discontinuous Unit Information Flow] (72) to (59);
		\draw [style=Discontinuous Unit Information Flow] (72) to (60);
		\draw [style=Discontinuous Unit Information Flow] (72) to (61);
		\draw [style=Discontinuous Unit Information Flow] (72) to (62);
		\draw [style=Discontinuous Unit Information Flow] (7) to (72);
		\draw [style=Discontinuous Unit Information Flow] (9) to (72);
		\draw [style=Discontinuous Unit Information Flow] (46) to (72);
		\draw [style=Discontinuous Unit Information Flow] (45) to (72);
		\draw [style=Discontinuous Unit Information Flow] (44) to (72);
		\matrix [draw,below left] at (current bounding box.north east) {
  \node [Partioner Unit,label=right: Sub Patterns] {}; \\
  \node [Partitioner Unit Blue,label=right: Deep Classifier] {}; \\
  \node [Discontinuous Unit small,label=right: Discontinuous Unit] {}; \\
};
	\end{pgfonlayer}
\end{tikzpicture}
}
\end{adjustbox}
}{%
    \caption{{A PCNN $\hat{f}$ from $\rr^3$ to $\rr^4$ with two sub-patterns.}}
\label{fig_PCNNs}
}
\end{floatrow}
\end{figure}

Our proposed deep neural models, the $\anames$, reflects this structure by approximately parameterized the sub-patterns $\{f_n\}_{n=1}^N$ by in dependant FFNNs, approximately parameterizing the parts $\{K_n\}_{n=1}^N$ by zero-sets of FFNNs, and the combining these independent parts via a single discontinuous unit defined shortly.  
An instance of our architecture is illustrated graphically in Figure~\ref{fig_PCNNs}.  Since each FFNN component of our architecture is in dependant from one another and are only regrouped at their final outputs by the discontinuous unit, illustrated in red in Figure~\ref{fig_PCNNs}, then we can leverage this structure to decouple the training of each component of our $\aname$ model and then re-combine them when producing predictions.  Thus, our decoupled training procedure allows effectively avoids passing any gradients through the discontinuous unit.  Thus, our architecture enjoys the expressivity of a discontinuous units (guaranteed by our universal approximation theorems for piecewise continuous functions in Theorems~\ref{thrm_improvement},~\ref{thrm_improvement_largeness}, and~\ref{thrm_UAT_PC}) while being pragmatically trainable (illustrated in by our numerical experiments and previewed in Figure~\ref{fig_Motivational_Demonstration_of_PCNNs}).

Figure~\ref{fig_Motivational_Demonstration_of_PCNNs} provides a concrete visual motivation of our results and approach.  It illustrates the challenge of learning a piecewise continuous functions with two parts (in grey and orange) by an FFNN with ReLU activation function $2$ hidden layers and $100$ neurons in each layer (in purple) and a comparable $\aname$ model (green).  After 3000 iterations of ADAM algorithm of \cite{ADAM2015KingmaB14}, the training stabilizes at a mean absolute error (MAE) of about $0.1477$.  In contrast the $\anames$ are capable produced an MAE of $0.986$.

\subsubsection*{Organization of Paper}
Section~\ref{s_Background} contains the background material required in the formulation of the paper's main results.  Section~\ref{s_Main} includes the paper's main results.  These include qualitative universal approximation guarantees for the $\aname$ as a whole, as well as, for each of its individual components, quantitative approximation guarantees for the $\aname$ given a partition, as well as a result showing that the PCNN are dense in a significantly larger space than the FFNNs are, for the uniform distance.  We then introduce a randomized algorithm that exploits the $\aname$'s structure to train it.  Theoretical guarantees surrounding this algorithm are also proven.  Lastly, in Section~\ref{s_Implementation} we validate our theoretical claims by training the $\aname$ architecture using our proposed meta-algorithm to generate predictions from various real-world financial and synthetic datasets.  The model's performance is benchmarked against comparable deep neural models trained using conventional training algorithms.
\section{Preliminaries}\label{s_Background} 
Let us fix some notation.  
The following notation is used and maintained throughout the paper.  The space of continuous functions from $X$ to $ \rrD$ is denoted by $C(\rrd, \rrD)$. When $D = 1$, we follow the convention of denoting $C(\rrd, \rr)$ by $C(\rrd)$. Throughout this paper, we fix a continuous \textit{activation function} $\sigma:\rr\rightarrow \rr$.  We denote the set of all feedforward networks from $\rrd$ to $\rrD$ by $\NN$ and we denote $\sigma_{\operatorname{sigmoid}}(x)\triangleq \frac{e^x}{1+e^x}$.

We denote the set of positive integers by $\nn^+$.  The cardinality (or size) of a set $A$ is denoted $\# A$.  
In this manuscript, $d,D$ will always be non-negative integers.  Moreover, $X$ will always denote a non-empty compact subset of $\rrd$.
\subsection{The PCNN Model}\label{s_Background_ss_PCNN_definition}
Before reviewing the relevant background for the formulation of our results, we first define the $\aname$ architecture.  
\begin{defn}[$\aname$]\label{defn_PCNNs}
	$\aname$ is a function $\hat{f}:X\rightarrow \rrD$ with representation
	$$
	\hat{f}=\sum_{n=1}^N 
	{\color{subpatternsseablue}{
	    \hat{f}_n(x)
	}}
	I_{
	    {\color{deepsetsblue}{
	        \hat{K}_n
	    }}
	}(x),
	$$
	where the partition $\left\{\hat{K}_n\right\}_{n=1}^N$ is given by the ``deep zero-sets'' defined as:
	\begin{equation}
	\hat{K}_n
	=
	\left\{x \in X: \,
	1-
	{\color{discontinuousunitred}{
	    I_{(\gamma,1]}\circ \sigma_{\operatorname{sigmoid}}
	 }}
	 (
	{\color{subpatternsseablue}{
	    \hat{c}(x)_n
	}}
	)=0\right\}
	\label{eq_deep_zero_sets_definition}
	,
	\end{equation}
	and where the ``sub-patterns'' $\hat{f}_1,\dots,\hat{f}_N$ are FFNNS in $\NN[d,D]$, $
    \hat{c}\in \NN[d,N]
	$, and $0<\gamma\leq 1$, $N\in \nn_+$.  The set of $\aname$s is denoted by $\PCNN{\NN}$.  
	Each $\hat{f}_1,\dots,\hat{f}_N$ is called a sub-pattern of the $\aname$ $\hat{f}$.  
\end{defn} 

An instance of the $\aname$ architecture is illustrated in Figure~\ref{fig_PCNNs}.  
The illustration depicts an architecture from a $d=3$ dimensional input space to a $D=4$ dimensional output space, with $2$ ``deep zero-sets'' $(\hat{K}_n)_{n=1}^2$, and, accordingly, two sub-patterns {\color{subpatternsseablue}{$\hat{f}_1$}} and {\color{subpatternsseablue}{$\hat{f}_2$}} defining the respective sub-pattern on their respective deep zero-sets.  
The deep zero-sets are built by feeding the {\color{deepsetsblue}{deep classifier $\hat{c}$}} into the {\color{discontinuousunitred}{discontinuous unit $x\mapsto x\cdot I_{(\gamma,1]}\circ \sigma_{\mbox{sigmoid}}$}}.  
Figure~\ref{fig_PCNNs} highlights that each trainable part of the network, namely the sub-patterns {\color{subpatternsseablue}{$\hat{f}_1$}} and {\color{subpatternsseablue}{$\hat{f}_2$}} and the {\color{deepsetsblue}{deep classifier $\hat{c}$}}, all process any input data \textit{independantly} and therefore can be parallelized.  
The outputs $\hat{y}^{(1)}\triangleq \hat{f}(x_1,x_2,x_3)$, $\hat{y}^{(2)}\triangleq \hat{f}(x_1,x_2,x_3),$ and $\hat{C}\triangleq \hat{c}(x_1,x_2,x_3)$ of each parallelizable sub-patterns then fed into the {\color{discontinuousunitred}{discontinuous unit}}: 
\[
(\hat{y}^{(1)},\hat{y}^{(2)},\hat{C})\mapsto 
(\hat{y}^{(1)} I_{(\gamma,1]}\circ \sigma_{\mbox{sigmoid}}(\hat{C}_1) +
(\hat{y}^{(2)} I_{(\gamma,1]}\circ \sigma_{\mbox{sigmoid}}(\hat{C}_2)
,
\]
which simultaneously defines the deep zero-sets and decides which sub-pattern ({\color{subpatternsseablue}{$\hat{f}_i$}}) is to be activated.  Figure~\ref{fig_PCNNs} shows that the {\color{subpatternsseablue}{sub-patterns $\hat{f}_1$}} definition of $\aname$ need not have the configuration of hidden units.  Indeed when training a PCNN, described in Section~\ref{s_Implementation}, we will see that each of these {\color{subpatternsseablue}{sub-patterns}} is trainable in parallel to one another and therefore their widths and depths may be selected independently of one another.  

\subsection{Set-Valued Analysis}\label{sss_Set_Val_Analysis}
When performing binary classification, we want to learn which $x \in X$ belongs to a fixed subset $K\subseteq X$.  Thus, a classifier $\hat{c}:X\rightarrow \{0,1\}$ is typically trained to approximate $K$'s \textit{indicator function} $I_K$.  In this case, the results of \cite{farago1993strong} guarantee that $I_K$ can be approximated point-wise with high probability.  However, even if the approximation guarantees are strengthened and $\hat{c}$ (deterministically) approximates $I_K$ point-wise, the approximation of $K$  can be wrong. 
\begin{ex}\label{ex_difference_pointwise}
Let $X=[-1,1]$, $K=[0,1]$, and $\{K_k\}_{k=1}^{\infty}$ be the constant sequence of subsets of $X$ given by $K_k\triangleq (0,1)$.  Then, the pointwise limit of $I_{K_k}$ equals to $I_{(0,1)}$.  Thus, the points $\{0,1\}$ are always misclassified; even asymptotically.    
\end{ex}
The issue emphasised by Example~\eqref{ex_difference_pointwise} is that the points $\{0,1\}\subset K$ are limits of some sequence of points in the approximating sets $K_k$; for example, of the respective sequences $\{\frac1{k}\}_{k=1}^{\infty}$ and $\{1-\frac1{k}\}_{k=1}^{\infty}$.  Thus, a correct mode of convergence for sets should rather qualify a limiting set as containing all the limits of all sub-sequences $\{x_{k_j}\}_{j=1}^{\infty}$ where $x_k\in K_{k}$ for each $k\in\nn^+$.  This is implied, see \cite{AubinFrankowskaSetValuedAnalysis}, by the convergence according to the \textit{Hausdorff distance} $d_H$ defined for any pair of subsets $A,B\subseteq X$
via:
$
d_{H|X}(A,B)\triangleq \max\left\{\,\sup_{a \in A} \|a-B\|,\, \sup_{b \in B} \|A-b\| \,\right\},
$
where $\|a-B\|\triangleq \inf_{b\in B}\|a-b\|$ (similarly, we define $\|A-b\|\triangleq \|b-A\|$).  The \textit{Hausdorff distance} between $A$ and $B$ represents the largest distance which can be traversed if a fictive adversary assigns a starting point from either of $A$ or $B$.  Whenever it is clear what $X$ is, then we simply denote the Hausdorff distance $d_{H|X}$ on $X$ by $d_H$. As discussed in \cite{AubinFrankowskaSetValuedAnalysis}, the Hausdorff distance defines a metric on the the set $\operatorname{Comp}(X)$, whose elements are non-empty compact subsets of $X$ with Hausdorff distance equipped with the metric $d_H$.

\subsection{Partitions}\label{s_Background_ss_Partitions}
Fix $N\in \nn^+$ with $N>1$.  Throughout this paper, a \textit{partition} $\{K_n\}_{n=1}^{N}$ of the input space $\rrd$, means a collection of compact subsets of $\rrd$.  
We use the term "partition" loosely to mean any finite family $\{K_n\}_{n=1}^N$ of non-empty compact subsets of $X$.  Therefore, unless otherwise specified, we do not require that $\intt[K_n]\cap \intt[K_m]$ to be disjoint.  Thus, our results include that situation as a special case.  

\section{Main Results}\label{s_Main} 
Our main results are now presented.  All proofs are relegated to the paper's appendix.  
\subsection{Universal Approximation Guarantees}\label{ss_improvements}
In this section, we examine the asymptotic approximation capabilities of the $\aname$ model, and we compare and contrast it to that of FFNNs.  Two types of approximation theoretic results are considered.  The first examines the asymptotic approximation capabilities of $\PCNN{\fff}$ if the architecture's parameters were simultaneously optimizable.  However, as discussed in the paper's introduction, this cannot be done with most available (stochastic) gradient descent-type algorithms. Therefore this class of universal approximation results describes the "gold standard" of what $\PCNN{\fff}$ could be expected to approximate.   

However, since $\PCNN{\fff}$ are specifically designed to be trainable in a two-step decoupled procedure (via Meta-Algorithm~\ref{metaalgo_GET_PCNN} below) to avoid these non-differentiability issues.  The next class of universal approximation results are specifically designed to quantify the approximate capabilities of $\PCNN{\fff}$ when approximating piecewise continuous functions of the form~\eqref{eq_sub_patterns}.  These results are broken into two stages. First, we introduce a space of piecewise continuous functions on which a meaningful universal approximation theorem can be formulated, and we examine some of the properties of these new spaces.  Next, we show that $\PCNN{\fff}$ is universal in this space, and we quantify its approximation efficiency, analogously to \cite{pmlrv75yarotsky18a}.  Lastly, we show that, just as in the "gold standard case," FFNNs are not dense in this space; whenever we are approximating piecewise continuous functions with $N>1$ in~\eqref{eq_sub_patterns}.  

\subsubsection{Gold Standard: Maximum Approximation Capabilities}\label{sss_Approx_tope}
We find that $\PCNN{\fff}$ can approximate many more functions uniformly than $\fff$.  We do this by comparing the largest space in which $\PCNN{\fff}$ is universal, to the largest space in which $\fff$ is universal.  Since universality (or density) is an entirely topological property, then we compare these two spaces using purely topological criteria.  We observe the following.  
\begin{lem}\label{lem_boundedness_of_Architopes}
Every $f \in \PCNN{\fff}$ is bounded on $X$; i.e.: $\sup_{x \in X} \|f(x)\|<\infty$.  
\end{lem}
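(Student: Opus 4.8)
The plan is to exploit the fact that a $\aname$ is, by Definition~\ref{defn_PCNNs}, a \emph{finite} sum $\hat{f}=\sum_{n=1}^N \hat{f}_n(\cdot)\,I_{\hat{K}_n}(\cdot)$ in which each weight $I_{\hat{K}_n}(\cdot)$ takes values in $\{0,1\}$ and each $\hat{f}_n$ is a feedforward network; hence boundedness of $\hat{f}$ on $X$ will follow from boundedness of each $\hat{f}_n$ on $X$, which in turn follows from compactness of $X$ together with continuity of the $\hat{f}_n$.

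Concretely, first I would fix $\hat{f}\in\PCNN{\fff}$ and record its representation with $N\in\nn^+$, sub-patterns $\hat{f}_1,\dots,\hat{f}_N\in\NN[d,D]$, and deep zero-sets $\hat{K}_1,\dots,\hat{K}_N\subseteq X$. Second, I would invoke the standing hypothesis that the activation $\sigma\colon\rr\to\rr$ is continuous: since every element of $\NN[d,D]$ is a composition of affine maps of Euclidean spaces with the componentwise application of $\sigma$, each $\hat{f}_n$ is continuous on $\rr^d$, and in particular on $X$. As $X$ is a non-empty compact subset of $\rr^d$ (another standing hypothesis), the extreme value theorem yields $M_n\triangleq\sup_{x\in X}\|\hat{f}_n(x)\|<\infty$ for each $n\in\{1,\dots,N\}$. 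Third, for every $x\in X$ the triangle inequality together with $0\le I_{\hat{K}_n}(x)\le 1$ (which holds regardless of whether $\{\hat{K}_n\}_{n=1}^N$ is a genuine partition of $X$) gives $\|\hat{f}(x)\|\le\sum_{n=1}^N \|\hat{f}_n(x)\|\,I_{\hat{K}_n}(x)\le\sum_{n=1}^N M_n$; since $N$ is finite this upper bound is a finite constant independent of $x$, so taking $\sup_{x\in X}$ completes the argument.

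There is no genuine obstacle here: the only point needing care is to be explicit that the three ingredients — continuity of $\sigma$, compactness of $X$, and finiteness of the number $N$ of sub-patterns — are precisely what render the $\{0,1\}$-valued indicator weights harmless, so that no assumption on the geometry of the deep zero-sets $\hat{K}_n$ (such as disjointness or covering $X$) is required.
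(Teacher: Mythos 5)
Your proposal is correct and follows essentially the same route as the paper's own proof: both decompose $\hat{f}$ into its finitely many sub-patterns, use continuity of the FFNNs (via continuity of $\sigma$) together with compactness of $X$ and the extreme value theorem to bound each $\|\hat{f}_n\|$, and then apply the triangle inequality with $0\le I_{\hat{K}_n}\le 1$. Your added remark that no disjointness or covering assumption on the deep zero-sets is needed is consistent with the paper's argument, which likewise only uses $I_{K_n}(x)\le 1$.
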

We need an ambient space to make our comparisons.  Thus, we begin by viewing $\PCNN{\fff}$ within the Banach space of all bounded functions from $X$ to $\rrD$, denoted by $\mathcal{B}(X,\rrD)$, equipped with the following norm:
$$
\|f\|_{\infty}\triangleq 
\sup_{x \in X}\left\|
f(x)
\right\|
.
$$
We use $\overline{\PCNN{\fff}}$ to denote the closure of $\PCNN{\fff}$ in $\mathcal{B}(X,\rrD)$; that is, $\overline{\PCNN{\fff}}$ denotes the collection of all $f \in \mathcal{B}(X,\rrD)$ for which there is a convergent sequence $\{f_k\}_{k=1}^{\infty}\subset \PCNN{\fff}$ with limit $f$.  
In other words, $\overline{\PCNN{\fff}}$ and $\overline{\fff}$ are the \textit{largest space of bounded functions} on $X$ in which $\PCNN{\fff}$ and $\fff$ are respectively universal, with respect to $\|\cdot\|_{\infty}$.  

Next, we show that $\overline{\fff}$ is relatively small in comparison to $\overline{\PCNN{\fff}}$.  To make this comparison, we appeal to an opposite concept to universality (density), i.e.: \textit{nowhere denseness}, which means that the only open subset of $\overline{\PCNN{\fff}}$ contained within $\overline{\fff}$ is the \textit{empty-set}.   
\begin{ex}\label{ex_concrete_description_dense_vs_nowheredense}
The set of linear models $\{ax+b:a,b\in \rr\}\subset C(\rr)$ is nowhere dense in $C(\rr)$.  In contrast, the polynomial models $\left\{
\sum_{n=0}^N a_n x^n:\, a_n\in \rr,\, N\in \nn
\right\}$ are dense in $C(\rr)$ (see \cite{WeirstrassConstructive2018Petrakis}).  
\end{ex}
Nowhere dense sets are topologically negligible.  Consequently, $\NN$ is topologically negligible in $\overline{\PCNN{\NN}}$.  
\begin{thrm}[The PCNN is Asymptotically More Expressive]\label{thrm_improvement}
Let $\sigma\in C(\rr)$ be non-polynomial, and $X=[0,1]^d$.  
Then, $\overline{\NN}$ is nowhere dense in $\overline{\PCNN{\NN}}$.
\end{thrm}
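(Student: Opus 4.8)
\emph{Proof proposal.} The plan is to reduce the statement to an explicit perturbation argument. First I would identify the relevant closed set: since $\sigma$ is continuous and non-polynomial, the classical universal approximation theorem (e.g.\ \cite{hornik1991approximation,kidger2020universal}) gives that $\NN[d,D]$ is $\|\cdot\|_\infty$-dense in $C(X,\rrD)$, while a uniform limit of continuous functions on the compact set $X$ is again continuous; hence $\overline{\NN}=C(X,\rrD)$ inside $\mathcal{B}(X,\rrD)$. In particular $\overline{\NN}$ is a \emph{closed} subset of $\mathcal{B}(X,\rrD)$, and since $\NN[d,D]\subseteq\PCNN{\NN}$ — realized by taking $N=1$ and a deep classifier $\hat c$ with $\hat c(\cdot)_1>\sigma_{\operatorname{sigmoid}}^{-1}(\gamma)$ on all of $X$, so that $\hat K_1=X$ — it is also closed in the metric subspace $\overline{\PCNN{\NN}}$ (which is a well-defined subset of $\mathcal{B}(X,\rrD)$ by Lemma~\ref{lem_boundedness_of_Architopes}). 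Because a closed set is nowhere dense precisely when it has empty interior, it remains to show that no $f\in\overline{\NN}=C(X,\rrD)$ is interior to $\overline{\NN}$ relative to $\overline{\PCNN{\NN}}$; equivalently, that every $\|\cdot\|_\infty$-ball about $f$ contains an element of $\overline{\PCNN{\NN}}$ that is \emph{discontinuous}, hence outside $\overline{\NN}$.

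For the construction, fix $f\in C(X,\rrD)$ and $\epsilon>0$. Using universal approximation, pick $\hat f_1\in\NN[d,D]$ with $\|f-\hat f_1\|_\infty<\epsilon/4$, fix $v\in\rrD$ with $\|v\|=\epsilon/4$, and pick $\hat f_2\in\NN[d,D]$ with $\|\hat f_2-v\|_\infty<\epsilon/4$. Let $\gamma=\tfrac12$ and pick $\hat c\in\NN[d,2]$ that approximates the map $x\mapsto(1,\,x_1-\tfrac12)$ to within $\tfrac14$ uniformly on $X$; then the deep zero-set $\hat K_1=\{x\in X:\hat c(x)_1>0\}$ equals $X$, while $\hat K_2=\{x\in X:\hat c(x)_2>0\}$ satisfies $\{x\in X:x_1>\tfrac34\}\subseteq\hat K_2\subseteq\{x\in X:x_1\ge\tfrac14\}$, so $\hat K_2$ is a nonempty, relatively open, proper subset of $X$. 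The associated $\aname$
\[
g\;=\;\hat f_1 I_{\hat K_1}+\hat f_2 I_{\hat K_2}\;=\;\hat f_1+\hat f_2\,I_{\hat K_2}\;\in\;\PCNN{\NN}\subseteq\overline{\PCNN{\NN}}
\]
satisfies $\|f-g\|_\infty<\epsilon$: on $X\setminus\hat K_2$ it equals $\hat f_1$, and on $\hat K_2$ one bounds $\|f-g\|\le\|f-\hat f_1\|_\infty+\|v\|+\|\hat f_2-v\|_\infty<3\epsilon/4$. Finally $g$ is discontinuous: since $X=[0,1]^d$ is connected and $\hat K_2$ is a nonempty proper relatively open subset, its relative boundary contains some $x_0\in X\setminus\hat K_2$ that is a limit of a sequence $x_m\in\hat K_2$; then $g(x_m)=\hat f_1(x_m)+\hat f_2(x_m)\to\hat f_1(x_0)+\hat f_2(x_0)$, whereas $g(x_0)=\hat f_1(x_0)$, and these differ because $\hat f_2(x_0)\neq0$ (otherwise $\|\hat f_2(x_0)-v\|=\|v\|$, contradicting $\|\hat f_2(x_0)-v\|<\|v\|$). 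Hence $g\in\overline{\PCNN{\NN}}\setminus\overline{\NN}$ lies within $\epsilon$ of the arbitrary $f\in\overline{\NN}$, so $\overline{\NN}$ has empty interior in $\overline{\PCNN{\NN}}$ and, being closed there, is nowhere dense.

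The only genuinely delicate point — and the place where one must be careful — is checking that both halves of the construction are actually permitted by Definition~\ref{defn_PCNNs}: that a deep zero-set can be forced to equal all of $X$ (used for the benign region), and that a second one can be made a nonempty proper relatively open subset \emph{with nonempty relative boundary} (so that a genuine jump occurs). Both reduce to approximating a judiciously chosen affine/constant map by an FFNN on the compact cube and then invoking connectedness of $X$. Once these are in place, the rest is the standard two-step argument: pin down $\overline{\NN}=C(X,\rrD)$ via the non-polynomial universal approximation theorem, note that it is closed, and reduce ``nowhere dense'' to ``empty interior.''
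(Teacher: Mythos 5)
Your proof is correct, and it reaches the conclusion by a genuinely different mechanism than the paper's. The paper embeds $\overline{\NN}=C(X,\rrD)$ into an auxiliary \emph{vector space} $\mathcal{V}$ of functions of the form $(I_{(\frac12,1]}\circ\sigma_{\operatorname{sigmoid}}\circ g)f_1+f_2$ with $f_1,f_2\in C(X,\rrD)$ and a fixed classifier $g$, observes that $C(X,\rrD)$ is a \emph{proper closed linear subspace} of $\mathcal{V}$ (closedness again via the uniform limit theorem), invokes the abstract fact that a proper closed subspace of a topological vector space is nowhere dense, and finally shows $\mathcal{V}\subseteq\overline{\PCNN{\NN}}$ by approximating $f_1,f_2$ with networks. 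You instead argue directly: pin down $\overline{\NN}=C(X,\rrD)$, note it is closed in the subspace $\overline{\PCNN{\NN}}$, reduce nowhere denseness to empty interior, and exhibit for each $f\in C(X,\rrD)$ and $\epsilon>0$ an explicit discontinuous $\aname$ within $\epsilon$ of $f$, namely $\hat f_1+\hat f_2 I_{\hat K_2}$ with $\hat f_2$ a small near-constant and $\hat K_2$ a nonempty proper relatively open deep zero-set. The underlying perturbation is the same small discontinuous term in both arguments, but your route avoids the functional-analytic detour (and the citation to the proper-closed-subspace fact), and it is more careful on two points the paper leaves implicit: that $\hat K_1=X$ is realizable within Definition~\ref{defn_PCNNs} (so that $\NN\subseteq\PCNN{\NN}$ at all), and that the perturbing zero-set actually has nonempty relative boundary on the connected cube, so that a genuine jump of size $\approx\|v\|>0$ occurs and the perturbed function leaves $C(X,\rrD)$. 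The trade-off is that the paper's vector-space packaging shows more in one stroke (all of $\mathcal{V}$, not just a single perturbation, lies in $\overline{\PCNN{\NN}}$), whereas your argument is more elementary and self-contained.
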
  

The next result shows that $\overline{\PCNN{\fff}}$ is large in the sense that it is not \textit{separable}, this means that any dense subset thereof cannot be countable.  Examples of separable Banach spaces arising in classical universal approximation results include $C(\rrd,\rrD)$ studied in \cite{hornik1990universal,kidger2020universal}, the space of Lebesgue $p$-integrable functions $L^p(\rrd)$ on $\rrd$ studied in \cite{lu2017expressive}, or the Sobolev spaces on bounded domains in $\rrd$ as in \cite{siegel2020approximation}.  The most well-known example of a non-separable space is $L^{\infty}(\rrd)$, in which \cite{capel2020approximation} have shown that $\NN$ is not dense whenever $\sigma$ is continuous.  

\begin{thrm}[\aname s are Highly Expressive, Asymptotically]\label{thrm_improvement_largeness}
Let $\sigma\in C(\rr)$ be non-polynomial, and $X=[0,1]^d$.  Then the following hold:
\begin{enumerate}[(i)]
    \item $\overline{\PCNN{\NN}}$ is not separable,
    \item $C(X,\rrD)\subset \overline{\PCNN{\NN}}$.
\end{enumerate}
\end{thrm}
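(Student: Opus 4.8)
My plan is to prove the two parts separately, treating (ii) as a quick consequence of the classical universal approximation theorem and reserving the real effort for (i).

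For (ii), I would first record the inclusion $\NN[d,D]\subseteq\PCNN{\NN}$: given $g\in\NN[d,D]$, form the $\aname$ with $N=1$, sub-pattern $\hat{f}_1=g$, deep classifier $\hat{c}\equiv 1$ (constant maps are feedforward networks --- take all weights zero and store the value in the output bias), and threshold $\gamma=\tfrac{1}{2}$; since $\sigma_{\operatorname{sigmoid}}(1)>\tfrac{1}{2}$, the associated deep zero-set is all of $X$, so this $\aname$ equals $g$. Part (ii) then follows immediately from the classical universal approximation theorem for continuous non-polynomial activations (e.g.\ \cite{hornik1990universal,kidger2020universal}): for $f\in C(X,\rrD)$ and $\varepsilon>0$ there is $g\in\NN[d,D]\subseteq\PCNN{\NN}$ with $\|f-g\|_{\infty}<\varepsilon$, whence $f\in\overline{\PCNN{\NN}}$.

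For (i), I would prove non-separability by producing an \emph{uncountable} family inside $\PCNN{\NN}$ that is $1$-separated for $\|\cdot\|_{\infty}$; since a metric space containing an uncountable $\delta$-separated subset is not separable, this makes $\overline{\PCNN{\NN}}$ non-separable. The step I expect to be the main obstacle is coping with the rigidity of the sup-norm: two $\anames$ built from a single constant sub-pattern $\hat{f}_1\equiv v$ differ by exactly $\|v\|$ on the symmetric difference of their deep zero-sets, so approximate control of those sets is useless --- I need a true continuum of \emph{exactly realizable} deep zero-sets that pairwise disagree on a nonempty (relatively) open subset of $X$. My way around this is to threshold one elementary network at a continuum of levels: since $\sigma$ is non-polynomial it is in particular non-constant, so I may fix $u_0<u_1$ with $\sigma(u_0)\neq\sigma(u_1)$, and then $[\alpha,\beta]:=\sigma([u_0,u_1])$ is a non-degenerate closed interval. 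Let $\ell:\rrd\to\rr$ be the affine map $\ell(x)=u_0+(u_1-u_0)x_1$, so that $\ell(X)=[u_0,u_1]$, and set $\hat{c}:=\sigma\circ\ell\in\NN[d,1]$ (one hidden neuron, identity output layer); its range over $X$ is exactly $[\alpha,\beta]$.

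Then, for each level $s\in(\alpha,\beta)$, I would let $\hat{f}^{(s)}$ be the $\aname$ with $N=1$, constant sub-pattern $\hat{f}_1\equiv v$ for a fixed unit vector $v\in\rrD$, deep classifier $\hat{c}$, and threshold $\gamma_s:=\sigma_{\operatorname{sigmoid}}(s)\in(0,1)$. Strict monotonicity of $\sigma_{\operatorname{sigmoid}}$ makes the deep zero-set of $\hat{f}^{(s)}$ equal to $\hat{K}^{(s)}=\{x\in X:\hat{c}(x)>s\}$, so $\hat{f}^{(s)}=v\,I_{\hat{K}^{(s)}}$. If $\alpha<s<s'<\beta$ then $\hat{K}^{(s')}\subseteq\hat{K}^{(s)}$, and since $\hat{c}$ is continuous on the connected set $X$ with range $[\alpha,\beta]\supseteq[s,s']$, the intermediate value theorem supplies a point $x^{\star}\in X$ with $\hat{c}(x^{\star})\in(s,s')$, hence $x^{\star}\in\hat{K}^{(s)}\setminus\hat{K}^{(s')}$; then $\hat{f}^{(s)}(x^{\star})=v$ while $\hat{f}^{(s')}(x^{\star})=0$, so $\|\hat{f}^{(s)}-\hat{f}^{(s')}\|_{\infty}\geq\|v\|=1$. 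Thus $s\mapsto\hat{f}^{(s)}$ embeds the uncountable interval $(\alpha,\beta)$ into $\PCNN{\NN}$ with $1$-separated image, which is exactly what (i) requires. What then remains is routine: verifying that constant maps and $\sigma\circ\ell$ genuinely lie in the relevant classes of feedforward networks, and the elementary fact about $\delta$-separated sets just invoked.
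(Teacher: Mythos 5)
Your proof is correct, and its overall strategy matches the paper's: part (ii) is obtained by realizing every network in $\NN[d,D]$ as a single-part $\aname$ whose deep zero-set is all of $X$ and then invoking the classical universal approximation theorem, and part (i) is obtained by exhibiting an uncountable $1$-separated family of $\anames$ produced by thresholding a fixed continuous network at a continuum of levels, which rules out any countable dense subset. The differences are in the ingredients, and they work in your favour. For (i) the paper first uses density of $\NN[d,D]$ in $C(X,\rrD)$ to manufacture an $f$ whose coordinates all exceed $1$ and a non-constant positive $g\in\NN[d,1]$, and then varies the threshold $\sigma_{\operatorname{sigmoid}}(g(\cdot))$ along a curve in $X$ supplied by the intermediate value theorem; you instead take a constant unit-norm sub-pattern and an explicit one-neuron classifier $\sigma\circ\ell$, so your separated family needs only that $\sigma$ is non-constant rather than the full strength of the universal approximation theorem --- a genuinely more elementary route to the same $1$-separation. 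For (ii) the paper uses the zero classifier $\zeta$ with threshold $\gamma=\tfrac{3}{4}$ and asserts $I_{(\frac{3}{4},1]}\circ\sigma_{\operatorname{sigmoid}}\circ\zeta\equiv 1$; since $\sigma_{\operatorname{sigmoid}}(0)=\tfrac{1}{2}\notin(\tfrac{3}{4},1]$ that indicator is actually $0$ as written, so the paper's choice of constants needs exactly the repair you supply (classifier $\equiv 1$ and $\gamma=\tfrac{1}{2}$, so that $\sigma_{\operatorname{sigmoid}}(1)>\tfrac{1}{2}$). The points you flagged as routine --- that constant maps and $\sigma\circ\ell$ are legitimate members of the relevant network classes, and that an uncountable $\delta$-separated subset precludes separability --- are indeed standard and close the argument.
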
  
\begin{rremark}\label{remark_comparison_UAP}
Theorems~\ref{thrm_improvement} and~\ref{thrm_improvement_largeness} formulate universal approximation theorems that express the maximum approximation capabilities of the PCNN relative to $\NN$.  
\end{rremark}
We complete this portion of our discussion here by observing that even simple piecewise continuous functions (with more than one piece) cannot be uniformly approximated by FFNNs with a continuous activation function.  
\begin{prop}[{Deep Feedforward Networks are Not Universal in $\mathcal{B}(X,\rrD)$}]\label{prop_non_universal_FFNN_cnt}
	If $\sigma\in C(\rr)$ and $K_1=[0,\frac1{2}],K_2\triangleq [\frac1{2},1]$ are given. Then,  For every $\epsilon>0$ there exists an $f_{\epsilon}$ of the form~\eqref{eq_sub_patterns} satisfying:
	$$
		\inf_{f \in \NN[1,1]} \sup_{x \in [0,1]} \, 
		\left\|
		f(x)
		-
		f_{\epsilon}(x)
		\right\|\geq \epsilon.
	$$
\end{prop}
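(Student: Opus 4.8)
The plan is to exploit the fundamental incompatibility between continuity and the kind of jump discontinuity forced by a piecewise continuous function whose pieces disagree at the interface point $\tfrac12$. First I would fix two constants $a \neq b$ in $\rr$ (say $a=0$, $b=1$) and, for a parameter $\epsilon>0$ to be chosen, set $f_1 \equiv a$ and $f_2 \equiv b$, so that $f_\epsilon = a\,I_{[0,1/2]} + b\,I_{[1/2,1]}$ is of the form~\eqref{eq_sub_patterns}. (One must be slightly careful about the overlap point $x=\tfrac12$, where both indicators fire; either one adopts the convention that $f_\epsilon(\tfrac12)$ takes one of the two values, or one nudges the second part to $[\tfrac12+\delta,1]$ — this is a cosmetic choice and does not affect the argument.) The key feature is that $\lim_{x\uparrow 1/2} f_\epsilon(x) = a$ while $\lim_{x\downarrow 1/2} f_\epsilon(x) = b$, a jump of size $|b-a|$.

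The main step is the lower bound. Let $f \in \NN[1,1]$ be arbitrary; since $\sigma$ is continuous, $f$ is continuous on $[0,1]$. Consider the three points $x_- = \tfrac12 - t$, $x_0 = \tfrac12$, $x_+ = \tfrac12 + t$ for small $t>0$. By continuity of $f$ at $\tfrac12$, for $t$ small enough we have $|f(x_-) - f(x_+)| \le |f(x_-)-f(x_0)| + |f(x_0)-f(x_+)|$, and both terms on the right can be made less than, say, $\tfrac{|b-a|}{4}$. On the other hand $f_\epsilon(x_-) = a$ and $f_\epsilon(x_+) = b$, so $|f_\epsilon(x_-) - f_\epsilon(x_+)| = |b-a|$. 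The triangle inequality then gives
\[
2\sup_{x\in[0,1]}\|f(x)-f_\epsilon(x)\|
\;\ge\; \|f(x_-)-f_\epsilon(x_-)\| + \|f(x_+)-f_\epsilon(x_+)\|
\;\ge\; |b-a| - |f(x_-)-f(x_+)| \;\ge\; \tfrac{|b-a|}{2},
\]
hence $\sup_{x}\|f(x)-f_\epsilon(x)\| \ge \tfrac{|b-a|}{4}$ uniformly over all $f\in\NN[1,1]$. Choosing $a,b$ with $|b-a| = 4\epsilon$ (e.g. rescale the two constant pieces) yields the claimed bound $\inf_{f} \sup_x \|f(x)-f_\epsilon(x)\| \ge \epsilon$, and $f_\epsilon$ is still of the form~\eqref{eq_sub_patterns}.

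I do not expect a serious obstacle here; this is essentially the uniform limit theorem from topology (a uniform limit of continuous functions is continuous, so a function with a genuine jump cannot be a uniform limit of FFNNs) made quantitative. The only point requiring mild care is the treatment of the shared endpoint $x=\tfrac12$ of $K_1$ and $K_2$ in the definition~\eqref{eq_sub_patterns}: one should either state the value $f_\epsilon(\tfrac12)$ explicitly or work with points strictly on either side of $\tfrac12$ (as in the displayed inequality above, where $x_0$ is only used via continuity of $f$, not of $f_\epsilon$), so that the jump in $f_\epsilon$ across $\tfrac12$ is unambiguous. Everything else — continuity of elements of $\NN[1,1]$ and the triangle inequality — is routine, and the constant $4$ can of course be improved but is irrelevant.
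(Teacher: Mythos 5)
Your proposal is correct and follows essentially the same route as the paper: the paper also constructs a two-valued step function whose pieces disagree at the shared interface point $\tfrac12$ and then uses continuity of any $f\in\NN[1,1]$ at that point to force an error proportional to the jump size (the paper phrases this as a special case of a more general proposition for partitions with a nonempty pairwise intersection, concluding by contradiction with two balls around the two constant values). Your direct triangle-inequality estimate at $x_\pm=\tfrac12\pm t$ is a clean quantitative rendering of the same mechanism, and your explicit handling of the overlap point $x=\tfrac12$ is exactly the care the argument needs.
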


Though the space $\mathcal{B}(X,\rrD)$ does provide a concrete environment for comparing the maximum approximation capabilities of different deep neural models, it
is nevertheless ill-suited to the approximation of piecewise continuous functions.  These are the two following reasons.  This is because approximation of any $f \in \mathcal{B}(X,\rrD)$ can not be decoupled, in the sense that any piecewise function of the form~\eqref{eq_sub_patterns} can not be approximated by approximating the $f_1,\dots,f_N$ and the $K_1,\dots,K_N$ in separate steps.  Rather uniform approximation necessitates that both must be approximated in the same step, but we cannot do this due to the discontinuous structure of our model.  Furthermore, the uniform norm does not genuinely reflect any of the "sub-pattern pattern" structure of~\eqref{eq_sub_patterns}, and rather it only views $f$ as a typical bounded function.  Accordingly, we now introduce a space of piecewise continuous functions and a mode of convergence for piecewise functions capable of detecting when piecewise functions have differing numbers of "sub-patterns" (defined below) and whose mode of convergence is amenable to a two-step optimization procedure in which trains the $f_n$ are the $K_n$ are separately.  
\subsection{PCNNs are Universal Approximators of Piecewise Continuous Functions}\label{ss_approx_theory_PWC}
We introduce our space of piecewise continuous functions as well the notion of convergence of piecewise continuous functions.  Then, we show that the $\aname$s are universal for this mode of convergence. $\aname$s are shown to optimize a certain concrete upper bound to the "distance function" of piecewise continuous functions; this upper bound reflects our proposed algorithm (described in the next section).  Lastly, we show that just like in the space $\mathcal{B}(X,\rrD)$, FFNNs with continuous activation functions are not universal in the space of piecewise continuous functions.  

\subsubsection{The Space of Piecewise Continuous Functions}
\label{s_Main_ss_Space_PCs}
The motivation for our mathematical framework for describing and approximating piecewise continuous functions can be motivated by the fact that many FFNNs implement the same continuous function \citep{kratsios2019UATs,Florian2021efficient}.  For instance, \cite{gribonval2021approximation,GUHRING2021107,Florian2021efficient} distinguishes between \textit{FFNN's (abstract) representation}, which is an $N+1$-tuple $\{W_n\}_{n=1}^{N+1}$ of composable affine functions $W_n$; where these function's parameters encode the information required to implement an FFNN once the activation function is specified.  Indeed, upon fixing $\sigma$, the \textit{realization} of an (abstract) FFNN $\{W_n\}_{n=1}^{N+1}$ is the $\hat{f}\in C(\rrd,\rrD)$ defined by:
\begin{equation}
    \hat{f}(x) = W_{N+1}\circ \sigma \bullet \dots \circ \sigma \bullet W_1(x)
    \label{eq_representation}
    .
\end{equation}
The important point here is that, once $\sigma$ is fixed, every (abstract) FFNN representation induces a unique FFNN \textit{realization} via~\eqref{eq_representation}.  However, the converse is generally false.  For instance, in \cite{Florian2021efficient}, it is shown that if $\sigma$ is the ReLU activation function, $x\mapsto\max\{0,x\}$, then any FFNN realization has infinitely many FFNN representations since ReLU layers can implement the identity map on $\rrD$.  
Analogously to FFNNs, every piecewise continuous function of the form~\eqref{eq_sub_patterns} has infinitely many different representations in terms of \textit{parts} (defined below).  Therefore, we introduce our theory of piecewise continuous functions in analogy with the above discussion, and we emphasize the distinction between a piecewise continuous function's representation and its realization.   

\paragraph{The Space of Piecewise Continuous Functions}
In a direct analogy, we define an (abstract) representation of a piecewise continuous function with \textit{$N$ parts} to be $\left((f_n,K_n)\right)_{n=1}^{N}$ where $f_n \in C(\xxx,\rrD)$ and $K_n\in \text{Comp}(X)$; for $n=1,\dots,{N}$.  The set of all abstract representations of piecewise continuous functions is therefore $\bigcup_{N\in \nn_{+}}\, [C(X,\rrD)\times \text{Comp}(X)]^N$.  
We define the \textit{realization} of any piecewise continuous function's representation $\left((f_n,K_n)\right)_{n=1}^N$ to be the following element $\rrr\left((f_n,K_n)_{n=1}^N\right)$ of $\mathcal{B}(X,\rrD)$ defined by:
\begin{equation}
    \rrr\left((f_n,K_n)_{n=1}^N\right)  (x)
        \triangleq
    \sum_{n=1}^N f_n(x)I_{K_n}(x)
        \label{eq_bounded_induced_realization}
    .
\end{equation}
The relationship~\eqref{eq_bounded_induced_realization} between a piecewise function and its representation(s) is typically non-trivial.  As the next example shows, if $N>1$ then there may be infinitely many different representations of the same piecewise function.
\begin{ex}[Non-Uniqueness of Representation]\label{ex_same_representation}
Let $f(x)=x^2I_{[0,1]}+e^xI_{[1,2]}$ and $N=4$.  For each $0<r<1$, 
$\left((x^2,[0,\frac1{r}]),(x^2,[\frac1{r},1]),(-\frac1{r^2},\{\frac1{r}\}),(e^x,[1,2])\right)$, $\left((x^2,[0,1]),(e^x,[1,2])\right)$, and $\left((e^x,[1,2]),(x^2,[0,1])\right)$ all  represent $f$. 
\end{ex}
We would like to quantify the distance between any two piecewise continuous functions which is invariant to the choice of representation.  We would equally like our "distance function" to be capable of detecting if and when two piecewise continuous functions require a different minimal number of parts to be represented.  In this way, our metric should be capable of separating any two functions $f$ and $g$ if $f$ is fundamentally more discontinuous than $g$ is.  This is made rigorous through the following; note, the infimum of $\emptyset$ is defined to be $\infty$.
\begin{defn}[Minimal Representation Number]\label{defn_minimality_number}
For any $f\in \mathcal{B}(X,\rrD)$, its \textit{minimal representation number} is:
$$
N(f)\triangleq \inf\{N:\, \exists ((f_n,K_n))_{n=1}^N\in 
\cup_{N\in \nn_+}\, [C(X,\rrD)\times \text{Comp}(X)]^N:\, f=\rrr\left(((f_n,K_n))_{n=1}^N\right)
\}.
$$
\end{defn}
Many functions $f\in \mathcal{B}(X,\rrD)$ can be arbitrarily complicated; these are beyond the scope of our analysis and can not be uniformly approximated.  Here, infinite complexity is quantified by $N(f)=\infty$.
\begin{ex}\label{ex_infinitely_complicted}
Let $f=\sum_{n\in \nn_+} nI_{[(n+1)^{-1},n^{-1}]}$; then $N(f)=\infty$.  
\end{ex}
We consider functions as in Example~\ref{ex_infinitely_complicted}, pathological and, this is because such functions do not admit any representation of the form~\eqref{eq_sub_patterns}.  The following result guarantees that if a piecewise function $f$ has a representation of the form~\eqref{eq_sub_patterns}, then it must admit at least one such minimal representation. 
\begin{prop}\label{prop_existence}
If $f\in \mathcal{B}(X,\rrD)$ admits a representation of the form~\eqref{eq_sub_patterns} then $N(f)\in \nn_+$ exists.  
\end{prop}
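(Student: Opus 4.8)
The plan is to observe that the hypothesis already hands us a finite‑part representation of $f$ in the precise sense of Definition~\ref{defn_minimality_number}, so that the index set over which the infimum defining $N(f)$ is taken is non‑empty; the conclusion then follows from the well‑ordering principle. First I would dispose of the only cosmetic discrepancy: the representation~\eqref{eq_sub_patterns} uses sub‑patterns $f_n\in C(\rrd,\rrD)$ defined on all of $\rrd$, whereas the abstract representations in Definition~\ref{defn_minimality_number} require sub‑patterns in $C(X,\rrD)$. This is handled by restriction.

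Concretely, write $f=\sum_{n=1}^N I_{K_n}f_n$ with $N\in\nn_+$, each $f_n\in C(\rrd,\rrD)$, and each $K_n\subseteq X$ non‑empty compact. Put $\tilde f_n\triangleq f_n|_X$, so $\tilde f_n\in C(X,\rrD)$ and $K_n\in\text{Comp}(X)$ for $n=1,\dots,N$; hence $\big((\tilde f_n,K_n)\big)_{n=1}^N\in [C(X,\rrD)\times\text{Comp}(X)]^N$. For every $x\in X$, definition~\eqref{eq_bounded_induced_realization} gives $\rrr\big((\tilde f_n,K_n)_{n=1}^N\big)(x)=\sum_{n=1}^N \tilde f_n(x)I_{K_n}(x)=\sum_{n=1}^N f_n(x)I_{K_n}(x)=f(x)$, so $f=\rrr\big((\tilde f_n,K_n)_{n=1}^N\big)$. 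Therefore $N$ lies in the set $S\triangleq\{M\in\nn_+:\exists\,((g_m,L_m))_{m=1}^M\in[C(X,\rrD)\times\text{Comp}(X)]^M,\ f=\rrr((g_m,L_m)_{m=1}^M)\}$ appearing in Definition~\ref{defn_minimality_number}, i.e.\ $S\neq\emptyset$.

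Finally, $S$ is a non‑empty subset of $\nn_+$, so by the well‑ordering of the positive integers it has a least element; consequently $N(f)=\inf S$ is attained at that least element and is a positive integer, which is exactly the claim $N(f)\in\nn_+$. There is no substantive obstacle here: the only points requiring (minor) attention are the restriction step reconciling $C(\rrd,\rrD)$ with $C(X,\rrD)$, and the use of well‑ordering to upgrade ``the infimum is finite'' to ``the infimum is a positive integer.'' One may additionally remark, for consistency with the surrounding development, that $\rrr$ indeed maps into $\mathcal{B}(X,\rrD)$ since each $\tilde f_n$ is continuous on the compact set $X$ hence bounded, so the construction is well posed, but this is already subsumed by the hypothesis $f\in\mathcal{B}(X,\rrD)$.
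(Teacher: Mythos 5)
Your proof is correct and follows essentially the same route as the paper: both arguments note that the hypothesis makes the index set in Definition~\ref{defn_minimality_number} non-empty and then invoke (well-)ordering of the positive integers to conclude the infimum is attained in $\nn_+$ (the paper phrases this as taking the minimum over the finite set $\{1,\dots,N'\}$). Your extra care in restricting the sub-patterns from $C(\rrd,\rrD)$ to $C(X,\rrD)$ is a harmless refinement the paper elides.
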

\begin{ex}\label{ex_gap_cnt_disc2}
If $f \in C([0,1],\rr)$ and $g=I_{[0,1]} + I_{[\frac1{3},\frac1{2}]}$ then $N(f)=1$ and $N(g)=2$.  
\end{ex}
Example~\ref{ex_gap_cnt_disc2} implies that if $N(f)>1$ then $f$ is discontinuous; however, $N(f)=1$ does not imply continuity.
\begin{ex}\label{ex_non_gap}
Let $f=I_{[0,\frac1{2}]}\in \mathcal{B}([0,1],\rr)$, then we note that $N(f)=1$.  
\end{ex}
Next, we construct a ``distance function", on the set of piecewise functions on $X$ with finite minimal representation number.  Our construction is reminiscent quotient of metric spaces (see \citep{Burago2Ivanov2001CourseMetricGeometry}) where different functions are identified as being part of the same ``equivalence class''.  We first equip $\bigcup_{N \in \nn_{+}} [C(X,\rrD)\times \text{Comp}(X)]^N$ with the following $[0,\infty]$-valued function:
\begin{equation}
    d_{\text{Step 1}}\left(
        (f_n,K_n)_{n=1}^{N}
            ,
        (f'_n,K_n')_{n=1}^{N'}
    \right)\triangleq 
    \begin{cases}
    \max_{1\leq n\leq N}\max\left\{
        \|f_n-f_n'\|_{\infty}
            ,
        d_H(K_n,K_n')
    \right\}
         & : N= N'\\
    \infty &: N\neq N'
    \end{cases}
    \label{defn_metric_step1}
    .
\end{equation}
Next, we use the map $d_{\text{Step1}}$ to construct a "distance function" on $\mathcal{B}(X,\rrD)$ which expresses the similarity of any two function $f,g\in \mathcal{B}(X,\rrD)$ therein in terms of their most similar and efficient yet compatibility representations.  
A representation $F:={(f_n,K_n)}_{n=1}^{N}\in \cup_{N\in \nn_+} [C(X,\rrD)\times \text{Comp}]^{N}$ is understood as being {\color{PineGreen}{\textit{efficient}}} if $\#F\triangleq N\approx N(f)$.  

Likewise, two representations $F$ and $G$, respectively representing $f$ and $g$, are thought of as being \textit{compatible} if $\#F=\#G$ (thus, $d_{\text{Step 1}}(F,G)<\infty$) and 
{{\color{MidnightBlue}{\textit{similar}}}} if $d_{\text{Step 1}}(F,G)\approx 0$.  
In this way, we may quantify the "distance" between any two bounded functions in $\mathcal{B}(X,\rrD)$ by searching through all compatible representations, which simultaneously penalizes any such representation for its {\color{PineGreen}{inefficiency}}.  We define this "distance function", or divergence, as follows.
\begin{defn}[Piecewise Divergence]\label{defn_PC_Div}
The \textit{piecewise divergence} between any two $f,g \in \mathcal{B}(X,\rrD)$ is defined as:
\begin{equation}
    D_{PC}
    (f|g)
    \triangleq 
    \underset{
    R(F) = f,\,
    R(G)=g
    }{
    \inf
    }
    \,
    \underbrace{
    d_{\text{Step 1}}\left(
        F
        ,
        G
    \right)
    }_{
    {\color{MidnightBlue}{\text{Representation Similarity}}}
    }
    +
    \underbrace{
    |\#F - N(f)|
    +
    |\#G - N(f)|
    .
    }_{
    {\color{PineGreen}{
    \text{Cost of Comparing Inefficient Representations}
    }}
    }
    \label{defn_metric_step2}
\end{equation}
\end{defn}
We call $D_{PC}$ a divergence since even if is symmetric, unlike a metric, it need not satisfy the triangle inequality.  Divergences form a common method of quantifying distance in machine learning. Notable examples given by Bregman divergences (see \cite{SiDachenGeng2010BregmanRegularization} for applications in regularization, \cite{ClusteringBregman2005} for applications in clustering, and \cite{InfiniteDimBregman} for applications in Bayesian estimation).  Examples of a divergences are the Kullback-Leibler divergence of \cite{KullbackLeiblerOriginal1951} and, more generally, $f$-divergences which quantify the information shared between different probabilistic quantities \citep{amari2000methods,JMLR:v22:20-867}.  
Similarly, the divergence $D_{PC}$ allows us to define the space of piecewise continuous functions by identify all ``finitely complicated bounded functions'' with their ``subpatterns'' and the ``parts'' on which they are defined.  

Let $\mathcal{B}_+(X,\rrD)$ consist of all $f\in \mathcal{B}(X,\rrD)$ with $N(f)<\infty$.  We formalize our ``space of piecewise continuous functions'', denoted by
$PC(X,\rr^D)$, to be the set of equivalence classes of $f\in \mathcal{B}_+(X,\rrD)$ where any two functions $f,g \in \mathcal{B}_+(X,\rrD)$ are identified if $D_{PC}(f,g)=0$ and $D_{PC}(g,f)=0$.

Unlike $\mathcal{B}(X,\rrD)$, the space $PC(X,\rrD)$ is well-suited to our approximation problem.  This is because, for any $f \in PC(X,\rrD)$, the function $D_{PC}(f|\cdot)$ is upper-bounded by the error of approximating each $f_n$ and each $K_n$ individually (using some optimally efficient $((f_n,K_n))_{n=1}^{N(f)}$ approximate representation of $f$).  Most notably, unlike the uniform metric on $\mathcal{B}(X,\rrD)$, the approximation error in approximating each $f_n$ and each $K_n$ is fully decoupled from one another in this upper-bound of $D_{PC}(f|\cdot)$.  Thus, we may approximate each $f_n$ and then each $K_n$ in two separate steps.
\begin{prop}[{Decoupled Upper-Bound on $D_{PC}(f|\cdot)$}]\label{prop_decoupling_lemma}
Fix $f \in PC(X,\rrD)$ and suppose that $((f_n,K_n))_{n=1}^{N(f)}\in [C(X,\rrD)\times \text{Comp}(X)]^{N(f)}$ represents $f$.  For any $((\hat{f}_n,\hat{K}_n))_{n=1}^{N(f)} \in [C(X,\rrD)\times \text{Comp}(X)]^{N(f)}$ the following holds: 
\begin{equation}
D_{PC}\left(f\,\middle|\,
\sum_{n=1}^{N(f)}
\hat{f}_n I_{\hat{K}_n}
\right)
\leq 
\sum_{n=1}^{N(f)}\, \|f_n-\hat{f}_n\|_{\infty} + \sum_{n=1}^{N(f)} \, d_H(K_n,\hat{K}_n)
\label{eq_prop_decoupling_lemma}
.
\end{equation}
\end{prop}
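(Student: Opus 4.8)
The plan is to read the bound straight off the definition of $D_{PC}$ in~\eqref{defn_metric_step2}: I would exhibit one admissible pair of representations that is both \emph{efficient} and \emph{compatible}, so that the two ``cost of comparing inefficient representations'' terms vanish, and then estimate the surviving $d_{\text{Step 1}}$-term crudely.

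Set $g\triangleq\sum_{n=1}^{N(f)}\hat{f}_n I_{\hat{K}_n}$. First I would check that $D_{PC}(f|g)$ is meaningfully defined, i.e.\ that the infimum in~\eqref{defn_metric_step2} is not over the empty set: the tuple $G_0\triangleq((\hat{f}_n,\hat{K}_n))_{n=1}^{N(f)}$ realizes $g$ via~\eqref{eq_bounded_induced_realization} and is of the form~\eqref{eq_sub_patterns}, so by Proposition~\ref{prop_existence} we get $N(g)\in\nn_+$ and $g\in\mathcal{B}_+(X,\rrD)$. Next I take the pair $(F_0,G_0)$, where $F_0\triangleq((f_n,K_n))_{n=1}^{N(f)}$ is the given representation of $f$. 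By hypothesis $R(F_0)=f$ and by construction $R(G_0)=g$, so this pair is admissible in the infimum. The key observation is that $\#F_0=\#G_0=N(f)$: this places us in the $N=N'$ branch of~\eqref{defn_metric_step1}, so $d_{\text{Step 1}}(F_0,G_0)<\infty$, and it forces $|\#F_0-N(f)|=|\#G_0-N(f)|=0$. Hence $D_{PC}(f|g)\le d_{\text{Step 1}}(F_0,G_0)=\max_{1\le n\le N(f)}\max\{\|f_n-\hat{f}_n\|_{\infty},\,d_H(K_n,\hat{K}_n)\}$.

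The last step is the elementary inequality that the maximum of the finite family of non-negative numbers $\{\|f_n-\hat{f}_n\|_{\infty}\}_{n=1}^{N(f)}\cup\{d_H(K_n,\hat{K}_n)\}_{n=1}^{N(f)}$ is at most their total sum, which is exactly the right-hand side of~\eqref{eq_prop_decoupling_lemma}. There is no genuine obstacle here; the only points deserving a line of verification are that $F_0$ really has length exactly $N(f)$ (immediate from the phrasing of the hypothesis) and that $g$ admits a representation at all (handled via $G_0$ and Proposition~\ref{prop_existence}). I would close by remarking that the bound is intentionally loose — a sum rather than a maximum — because its right-hand side is precisely the quantity that the two-step meta-algorithm controls: Step~1 drives down the sub-pattern errors $\|f_n-\hat{f}_n\|_{\infty}$ and Step~2 drives down the Hausdorff errors $d_H(K_n,\hat{K}_n)$, and~\eqref{eq_prop_decoupling_lemma} glues these separately obtained estimates additively into a control on $D_{PC}(f|\cdot)$.
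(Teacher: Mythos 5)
Your proposal is correct and follows essentially the same route as the paper: both arguments pick the compatible pair $\bigl(((f_n,K_n))_{n=1}^{N(f)},((\hat{f}_n,\hat{K}_n))_{n=1}^{N(f)}\bigr)$ so that the two inefficiency penalties in~\eqref{defn_metric_step2} vanish, bound $D_{PC}$ by the resulting $d_{\text{Step 1}}$-term, and finish with the elementary estimate that a maximum of finitely many non-negative numbers is dominated by their sum. The only cosmetic difference is that the paper keeps the infimum over all length-$N(f)$ representations of $f$ for one extra line before specializing to the given one, and your preliminary appeal to Proposition~\ref{prop_existence} for $N(g)<\infty$ is harmless but not needed, since the penalty terms in~\eqref{defn_metric_step2} only involve $N(f)$.
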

The decoupled upper-bound for the piecewise divergence $D_{PC}$ derived in Proposition~\ref{prop_decoupling_lemma} will now be used to show that $\anames$ are universal approximators of piecewise continuous functions.   

As a final point of interest and motivation for our main result, we demonstrate that sequences of FFNNs in $\NN[d,D]$ generally do not converge, with respect to the piecewise divergence, to piecewise functions with at-least two parts.  This is because the PC divergence between any two functions $f$ and $g$ is lower-bounded by $|N(f)-N(g)|$.  In particular, since every feed-forward network is in $C(X,\rrD)$, then the result follows from Example~\ref{ex_gap_cnt_disc2}.  
\begin{prop}[FFNNs are Not Universal Piecewise Continuous Functions]\label{prop_Disc_Gap}
Fix $\sigma \in C(\rr)$.  For each $f\in PC(X,\rrD)$, if $N(f)>1$ then:
$
1\leq \inf_{\hat{f}\in \NN[d,D]}\,D_{PC}(f|\hat{f})
$.  
Furthermore, if $X=[0,1]$ then, $f=I_{[0,1]}+I_{[\frac1{3},\frac1{2}]}\in PC(X,\rr)$ and $N(f)>1$.  
\end{prop}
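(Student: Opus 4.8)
The plan is to reduce both claims to a single lower bound, namely that the piecewise divergence dominates the gap between minimal representation numbers, $D_{PC}(f|g)\ge |N(f)-N(g)|$ for all $f,g\in\mathcal{B}_+(X,\rrD)$, together with the elementary fact that every feedforward realization is continuous and hence has minimal representation number $1$.

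First I would prove $D_{PC}(f|g)\ge |N(f)-N(g)|$ directly from Definition~\ref{defn_PC_Div}. Fix any representations $F$ of $f$ and $G$ of $g$. If $\#F\ne\#G$, then $d_{\text{Step 1}}(F,G)=\infty$ by~\eqref{defn_metric_step1}, so the corresponding term in~\eqref{defn_metric_step2} is $\infty$ and trivially exceeds $|N(f)-N(g)|$. If $\#F=\#G=M$, then dropping the non-negative similarity term $d_{\text{Step 1}}(F,G)$ and applying the triangle inequality in $\rr$ to the inefficiency-cost term $|\#F-N(f)|+|\#G-N(g)|$ gives $|M-N(f)|+|M-N(g)|\ge |N(f)-N(g)|$. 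Taking the infimum over all admissible $F,G$ yields the bound; as a by-product, $D_{PC}(f|g)=0$ forces $N(f)=N(g)$, so $N(\cdot)$ descends to a well-defined map on $PC(X,\rrD)$.

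Next I would observe that every $\hat f\in\NN[d,D]$ has $N(\hat f)=1$: by~\eqref{eq_representation} the realization of $\hat f$ is a finite composition of affine maps and componentwise applications of the continuous $\sigma$, hence $\hat f\in C(X,\rrD)$; since $X\in\text{Comp}(X)$, the one-part representation $\bigl((\hat f,X)\bigr)$ realizes $\hat f$, so $N(\hat f)\le 1$, and $N(\hat f)\ge 1$ always. Combining the two steps, for $f\in PC(X,\rrD)$ with $N(f)>1$ (equivalently $N(f)\ge 2$, since $N(f)\in\nn_+$ for $f\in\mathcal{B}_+(X,\rrD)$) we get $D_{PC}(f|\hat f)\ge N(f)-1\ge 1$ for every $\hat f$, and taking the infimum over $\NN[d,D]$ establishes the first assertion.

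For the \emph{furthermore} part, with $X=[0,1]$ and $f=I_{[0,1]}+I_{[\frac1{3},\frac1{2}]}$: writing $f$ with $N=2$, $f_1=f_2\equiv 1$, $K_1=[0,1]$, $K_2=[\frac1{3},\frac1{2}]$ exhibits $f$ in the form~\eqref{eq_sub_patterns}, so Proposition~\ref{prop_existence} gives $N(f)\in\nn_+$, hence $f\in\mathcal{B}_+(X,\rr)$ and $f\in PC(X,\rr)$. To see $N(f)>1$, suppose towards a contradiction that $f=f_1 I_{K_1}$ with $f_1\in C([0,1])$ and $K_1\in\text{Comp}([0,1])$; since $f(x)\in\{1,2\}$ is nowhere zero, every $x\in[0,1]$ must lie in $K_1$, so $K_1=[0,1]$ and $f=f_1$ is continuous, contradicting the jump of $f$ at $\frac1{3}$. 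Hence $N(f)\ne 1$, so $N(f)>1$; this is also precisely the content of Example~\ref{ex_gap_cnt_disc2}. Apart from this, the argument is routine bookkeeping, and the only point needing any care --- the ``nowhere-zero forces $K_1=X$'' step --- is immediate, so I anticipate no real obstacle.
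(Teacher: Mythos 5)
Your proof is correct and follows essentially the same route as the paper's: lower-bound $D_{PC}(f|\hat f)$ by $|N(f)-N(\hat f)|$ via the inefficiency-cost terms, observe that continuity of $\sigma$ forces $N(\hat f)=1$, and settle the furthermore clause via Example~\ref{ex_gap_cnt_disc2}. Your write-up is in fact more careful than the paper's (which garbles the roles of $f$, $g$ and $\hat f$); note only that your key inequality reads the last term of Definition~\ref{defn_PC_Div} as $|\#G-N(g)|$ rather than the printed $|\#G-N(f)|$ --- the reading under which the proposition actually holds.
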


\subsubsection{Approximating Piecewise Continuous Functions by {$\anames$}}\label{ss_UAT}
Our results, we require the following regularity condition on $\sigma\in C(\rr)$ introduced in \cite{kidger2020universal}.  
\begin{ass}\label{ass_KL}
$\sigma\in C(\rr)$ is not affine.  There is an $x_0\in \rr$ at which $\sigma$ is continuously-differentiable and $\sigma'(x_0)\neq 0$.  
\end{ass}
We may now state our main result, which shows that $\anames$ is a universal approximator of piecewise continuous function with respect to the piecewise continuous divergence, $D_{PC}$.  In particular, $\anames$ is strictly more expressive that FFNNs since Proposition~\ref{prop_Disc_Gap} proved that FFNNs are not universal with respect to $D_{PC}$.
\begin{thrm}[{$\aname$s are Universal Approximators of Piecewise Continuous Functions}]\label{thrm_UAT_PC}
Fix $\sigma$ satisfying Assumption~\ref{ass_KL}.  
Let $f\in PC([0,1]^d,\rrD)$, $0<\gamma<1$, and $0<\epsilon$.  
There exist a $\hat{f}\in PC(X,\rrD)$ with representation $\hat{F}\triangleq \left(
(\hat{f}_n,\hat{K}_n)
\right)_{n=1}^N$ where each$\hat{f}_n\in \NN[d,D]$ of width at-most $d+D+2$, $\hat{K}_n\triangleq \left\{x\in X:\, I_{(\gamma,1]}\circ \sigma_{\operatorname{sigmoid}}\circ \hat{c}(x)_n\right\}$ and where $\hat{c}\in \NN[d,N(f)]$ has width at-most $d+N(f)+2$, such that:
\begin{equation}
    D_{PC}(f|\hat{F})<\epsilon
    .
    \label{eq_thrm_UAT_PC_bound}
\end{equation}
\end{thrm}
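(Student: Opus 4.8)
The plan is to combine the decoupled upper-bound of Proposition~\ref{prop_decoupling_lemma} with two independent approximation tasks: (i) approximating each continuous sub-pattern $f_n$ by a narrow feedforward network $\hat{f}_n$ in the uniform norm, and (ii) approximating each compact part $K_n$ in the Hausdorff distance by a deep zero-set of the form $\hat{K}_n=\{x\in X:\, I_{(\gamma,1]}\circ\sigma_{\operatorname{sigmoid}}\circ\hat{c}(x)_n=1\}$, where $\hat{c}\in\NN[d,N(f)]$. First I would fix an optimally efficient representation $((f_n,K_n))_{n=1}^{N(f)}$ of $f$, which exists since $f\in PC([0,1]^d,\rrD)$ forces $N(f)<\infty$ by definition. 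By Proposition~\ref{prop_decoupling_lemma}, it suffices to make $\sum_{n=1}^{N(f)}\|f_n-\hat{f}_n\|_\infty + \sum_{n=1}^{N(f)} d_H(K_n,\hat{K}_n)<\epsilon$, and since $N(f)$ is finite it is enough to make each of the $2N(f)$ summands smaller than $\epsilon/(2N(f))$.

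For task (i), each $f_n\in C([0,1]^d,\rrD)$ can be approximated uniformly to within $\epsilon/(2N(f))$ by a feedforward network $\hat{f}_n\in\NN[d,D]$ of width at most $d+D+2$; this is exactly the narrow-network universal approximation theorem of \cite{kidger2020universal}, which applies because $\sigma$ satisfies Assumption~\ref{ass_KL}. For task (ii), the key is to realize each $K_n$ as (approximately) a zero-set of a continuous function and then to approximate that function by a network. Concretely, for a compact $K_n\subseteq X$ consider the distance function $x\mapsto \operatorname{dist}(x,K_n)$, which is continuous, vanishes exactly on $K_n$, and is strictly positive off $K_n$. One then builds a continuous scalar function $g_n$ on $X$ with $g_n>\gamma$ on a small closed neighbourhood of $K_n$ and $g_n$ bounded away from $\gamma$ on the complement of a slightly larger neighbourhood — the sigmoid post-composition lets us arrange the threshold cleanly — and approximates $g_n$ by a coordinate $\hat{c}(x)_n$ of a network $\hat{c}\in\NN[d,N(f)]$ of width at most $d+N(f)+2$, again via the narrow-network theorem. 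Choosing the neighbourhoods to shrink with $\epsilon$ and using the fact that $K_n\in\operatorname{Comp}(X)$ together with compactness of $X$, the superlevel set $\{I_{(\gamma,1]}\circ\sigma_{\operatorname{sigmoid}}\circ\hat{c}(\cdot)_n=1\}$ is sandwiched between two neighbourhoods of $K_n$, which bounds $d_H(K_n,\hat{K}_n)$ by $\epsilon/(2N(f))$. A minor care point is that $\hat{K}_n$ must be non-empty and compact so that $\hat{F}$ is a legitimate element of $PC(X,\rrD)$; non-emptiness is guaranteed by forcing $\hat{c}(\cdot)_n>\gamma$ somewhere near $K_n$, and compactness follows because $I_{(\gamma,1]}\circ\sigma_{\operatorname{sigmoid}}$ is the indicator of a half-open interval whose preimage under the continuous map $\hat{c}(\cdot)_n$ is relatively closed in the compact set $X$ (one should check the strict-versus-non-strict inequality at the threshold, which is why the construction keeps $\hat{c}(\cdot)_n$ strictly above or strictly below $\gamma$ on the relevant sets, avoiding the boundary value).

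The main obstacle I anticipate is the Hausdorff-approximation step for the parts, specifically controlling \emph{both} directions of $d_H(K_n,\hat{K}_n)$ simultaneously while the single network $\hat{c}$ must handle all $N(f)$ coordinates at once: one must ensure no point far from $K_n$ accidentally lands in the superlevel set (an issue of uniform approximation error of $\hat{c}(\cdot)_n$ relative to the gap between $g_n$ and $\gamma$ off the neighbourhood) and that every point of $K_n$ is within $\epsilon/(2N(f))$ of some point of $\hat{K}_n$ (which needs $\hat{K}_n$ to be a fine enough net near $K_n$, handled by the neighbourhood being a genuine $\delta$-fattening of $K_n$ with $\delta<\epsilon/(2N(f))$). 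Once the scalar building block $g_n$ is constructed with the right margins and the network width bookkeeping for a map into $\rr^{N(f)}$ is reconciled with \cite{kidger2020universal}, assembling $\hat{F}=((\hat{f}_n,\hat{K}_n))_{n=1}^{N(f)}$ and invoking Proposition~\ref{prop_decoupling_lemma} gives $D_{PC}(f|\hat{F})<\epsilon$, with the stated widths $d+D+2$ for the sub-patterns and $d+N(f)+2$ for the classifier.
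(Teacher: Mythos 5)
Your proposal is correct and follows essentially the same route as the paper: fix a minimal representation via Proposition~\ref{prop_existence}, reduce to the decoupled bound of Proposition~\ref{prop_decoupling_lemma}, approximate the sub-patterns with the narrow-network universal approximation theorem under Assumption~\ref{ass_KL}, and approximate the parts in Hausdorff distance by deep zero-sets built from a network approximating the (1-Lipschitz) distance functions $x\mapsto d(x,K_n)$ — which is exactly the content of the paper's Lemma~\ref{lem_generic_sets}/Theorem~\ref{thm_generic_sets}. Your margin construction around the threshold $\gamma$ is a minor variant of the paper's choice $\gamma=\sigma_{\operatorname{sigmoid}}^{-1}(2^{-1}\epsilon)$ and sublevel-set argument, and the $\epsilon/(2N(f))$ budgeting matches the paper's final estimate.
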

A key step towards establishing Theorem~\ref{thrm_UAT_PC}, which we now highlight, is deriving the following fact that our \textit{deep zero-sets} are universal in $\text{Comp}(X)$ with respect to the Hausdorff metric $d_H$, thereon.  This result is the first result describing the universal approximation of compact sets defined by a deep neural model.  
\begin{thrm}[Deep Zero-Sets are Universal Compact Sets]\label{thm_generic_sets}
Let $\sigma$ satisfy Assumption~\ref{ass_KL}, $\emptyset\neq K_1,\dots,K_N\subseteq X$ be compact, $0<\epsilon$, and set $\gamma\triangleq \sigma_{\text{sigmoid}}^{-1}(2^{-1}\epsilon)$.  There is an $\hat{c}\in {NN}_{d,N}^{\sigma}$ of width at-most $d+N+2$ for which the deep zero-sets
$
\hat{K}_n\triangleq \left\{x\in X:\, I_{(\gamma,1]}\circ \sigma_{\operatorname{sigmoid}}\circ \hat{c}(x)_n\right\}
$
simultaneously satisfy:
\begin{equation}
    \max_{n=1,\dots,N}\, d_{H}(K_n,\hat{K}_n)\leq \epsilon
    \label{eq_thm_generic_sets}
    .
\end{equation}
\end{thrm}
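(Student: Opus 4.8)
The plan is to reduce this set-approximation statement to a vector-valued \emph{uniform} approximation statement for ordinary feedforward networks, resolve the latter with the narrow-width universal approximation theorem of \cite{kidger2020universal} (which is exactly what Assumption~\ref{ass_KL} is there to enable), and then convert the resulting sup-norm bound on the network into a Hausdorff bound on its super-level sets. Write $\tau\triangleq\sigma_{\operatorname{sigmoid}}^{-1}(\gamma)$, so that for any $g\in\NN[d,N]$ the associated deep zero-sets are $\{x\in X:\,g(x)_n>\tau\}$. First I would fix a scale $\rho\in(0,\epsilon)$ and, for each $n$, build the continuous target profile $h_n(x)\triangleq\tau+1-\rho^{-1}d(x,K_n)$, where $d(x,K_n)\triangleq\inf_{y\in K_n}\|x-y\|$ is the ($1$-Lipschitz) distance function of $K_n$, vanishing precisely on $K_n$; set $h\triangleq(h_1,\dots,h_N)\in C(X,\rr^N)$. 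By design $h_n\equiv\tau+1$ on $K_n$, the super-level set $\{h_n>\tau\}$ is exactly the $\rho$-neighbourhood $\{x\in X:\,d(x,K_n)<\rho\}$ (which is at Hausdorff distance $\le\rho<\epsilon$ from $K_n$), and---this is the feature that makes the argument run---$h_n$ descends through the level $\tau$ at the uniform rate $\rho^{-1}$, i.e.\ $\{h_n>\tau+s\}=\{d(\cdot,K_n)<\rho(1-s)\}$ for every $s$.

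Next I would invoke the $\rr^N$-valued narrow feedforward universal approximation theorem of \cite{kidger2020universal} (valid under Assumption~\ref{ass_KL}): for any tolerance $\eta>0$ there is a single network $\hat c\in\NN[d,N]$ of width at most $d+N+2$ with $\sup_{x\in X}\|\hat c(x)-h(x)\|<\eta$, hence $\sup_{x\in X}|\hat c(x)_n-h_n(x)|<\eta$ for each $n$.

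The last step is the choice of $\eta$ and $\rho$. The coordinatewise bound squeezes each deep zero-set $\hat K_n$ between two neighbourhoods of $K_n$, namely $\{d(\cdot,K_n)<\rho(1-\eta)\}\subseteq\hat K_n\subseteq\{d(\cdot,K_n)<\rho(1+\eta)\}$, while the fixed margin $h_n\equiv\tau+1$ on $K_n$ forces $K_n\subseteq\hat K_n$ (so $\hat K_n\neq\emptyset$) as soon as $\eta<1$. Therefore $d_H(K_n,\hat K_n)\le\rho(1+\eta)$ simultaneously for every $n$, and taking e.g.\ $\eta=\tfrac{1}{2}$ and $\rho=\tfrac{2}{3}\epsilon$ yields \eqref{eq_thm_generic_sets}; should one want each $\hat K_n$ to genuinely lie in $\text{Comp}(X)$ rather than merely being relatively open, one uses the closed window $[\gamma,1]$ in place of $(\gamma,1]$, which only passes $\hat K_n$ to a closed set satisfying the same inclusions, leaving the estimate intact.

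I expect the only real obstacle to be the super-level-set stability used in the third step: for an \emph{arbitrary} continuous function an $\eta$-sized uniform perturbation can displace a level set arbitrarily far (a plateau sitting exactly at the threshold), so the argument genuinely needs the profiles $h_n$ to be manufactured from distance functions, which forces them to cross the threshold transversally at the controlled rate $\rho^{-1}$; this non-degeneracy is precisely what upgrades the $O(\eta)$ error in function space into an $O(\rho\eta)$ error in the Hausdorff metric. Everything else is bookkeeping --- the reduction to a single $\rr^N$-valued network of width $d+N+2$ is just the vector-valued form of \cite{kidger2020universal}, and passing between the $\ell^{\infty}$ and Euclidean norms on $\rr^N$ only rescales $\eta$ by $\sqrt{N}$.
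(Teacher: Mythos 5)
Your proposal is correct and follows essentially the same route as the paper: both approximate the vector of $1$-Lipschitz distance functions $\bigl(d(\cdot,K_1),\dots,d(\cdot,K_N)\bigr)$ (you use the affine rescaling $\tau+1-\rho^{-1}d(\cdot,K_n)$) uniformly by a single width-$(d+N+2)$ network via the narrow universal approximation theorem enabled by Assumption~\ref{ass_KL}, and then convert the sup-norm error into a Hausdorff bound by reading the deep zero-sets off as level sets of the network, using exactly the sandwich $K_n\subseteq\hat K_n\subseteq\{d(\cdot,K_n)<\rho(1+\eta)\}$ that the paper obtains with threshold $2^{-1}\epsilon$. Your explicit remark on why the level sets are stable (the distance profile crosses the threshold at a controlled rate) and your handling of the open versus closed threshold interval are welcome clarifications of points the paper leaves implicit, but they do not constitute a different argument.
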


We introduce our parallelizable procedure for training the $\anames$ and we investigate its theoretical properties.  
\subsection{The Training Meta-Algorithm}\label{s_Main_ss_MetaAlgo}
Let $\xx$ be a non-empty set of training data in $\rrd$ and let $L:\rrd\rightarrow [0,\infty)$ quantify the learning problem
\begin{equation}
    \inf_{\hat{f}\in \text{PCNN}} \sum_{x \in \xx} L\left(
\hat{f}(x)
\right)
.
\label{eq_learning_problem}
\end{equation}
Even if $L$ is smooth, we cannot use gradient-descent type methods to optimize~\eqref{eq_learning_problem} (in-sample on $\xx$) since the map $I_{(\gamma,1]}$ makes each $\hat{K}_n$ into a discontinuous function, and by extension $\hat{f}$ need not be differentiable (even in the generalized sense of \cite{Clarke1975generalizedGradientsAndApplications}).  Meta-Algorithm~\ref{metaalgo_GET_PCNN} proposes an approach for training the $\aname$ which avoids passing gradient updates through the indicator function $I_{(\gamma,1]}$ in $\hat{K}_n$ by decoupling the training of the $\{\hat{f}_n\}_{n=1}^N$ and the $\{\hat{K}\}_{n=1}^N$.  

\makeatletter
\renewcommand{\ALG@name}{Meta-Algorithm}
\renewcommand{\listalgorithmname}{List of \ALG@name s}
\makeatother
\begin{algorithm}
\caption{{Train $\aname$.}}
\label{metaalgo_GET_PCNN}
\begin{algorithmic}[1]
    
%
		\STATE {\bfseries Input:}  \textit{Training data} $\xx$, Number of Parts $N$, Depth parameters $J_1,J_2,W\in \nn_+$, FFNN training subroutine $\mbox{GET\_FFNN}$, $\mbox{GET\_PARTITION}$ subroutine.
		\label{alg1_step0}
		\STATE {\bfseries Output:} \textit{Trained $\aname$ } $f^{\star}\triangleq \sum_{n\leq N} f_n^{\star}(\cdot)\cdot I_{K_n^{\star}}(\cdot)$
        \STATE $\{\xx_n\}_{n=1}^N  \leftarrow \operatorname{GET\_PARTITION}$ $(\xx )$  \label{alg1_step1}
%
         \FOR{$n\leq N$}
            \STATE $f_n^{\star} \leftarrow \mbox{GET\_FFNN}(\mathbb{X}_n,L,J,W)$
        \ENDFOR
		\FOR{$x\in \xx$
		}
    		\STATE $l_{n,x}\triangleq I\left(
    		L(f^{\star}_n(x))\leq \min_{m\leq N}
    		L(f^{\star}_m(x))
    		\right)$
		\ENDFOR
           \STATE 
           $c^{\star} \leftarrow 
           \operatorname{GET\_FFNN}(\mathbb{X},H(\cdot|l_{n,\cdot}),J_2,W)
           $
			\FOR{$n\leq N$}
			\STATE Define deep zero-sets $K_n\triangleq \{x\in \rr^n:\, c^{\star}_n(x)\leq 2^{-1}\}$
			\ENDFOR
\end{algorithmic}
\end{algorithm}
Step $1$ (line $3$)  initializes partitions of the training data, according to an extraneously given subroutine $\operatorname{GET\_PARTITION}$, an example of which we propose in Subroutine~\ref{subroutine_GET_PARTITION} below.  
Step $2$ (lines $4$-$6$) optimizes the networks $\hat{f}_n$.   

Step $4$ (lines $7$-$9$)  identifies which optimized sub-pattern $\hat{f}_n$ best performs on any given input and adjusts the partitions.   Step $5$ (lines $10$-$12$) interprets the $\{l_{n,x}\}_{n,x}$ in $\{0,1\}$ as labels and trains the $\hat{c}_n$ defining the $\hat{K}_n$ (see Definition~\ref{defn_PCNNs}) as a classifier predicting when $f^{\star}_n$ offers the best \textit{cross-entropy}:
	$
	H\left(
		y|l_{n,x}
		\right)\triangleq 
	l_{n,x}\ln\left(
			y
			\right)
			(1-l_{n,x})\ln\left(
			1-y
			\right),
	$
	amongst the $\{f^{\star}_n\}_{{n=1}^N}$ (justified by Theorem~\ref{thrm_partition_learner} below).

Though there are many possible clustering algorithms which can stand-in for the subroutine $\operatorname{GET\_PARTITION}$ in Meta-Algorithm~\ref{metaalgo_GET_PCNN}, we present a novel geometric option with desirable properties.  
\subsubsection{Initializing and Training the Deep Zero-Sets}
\label{sss_how_to_initialize}
Suppose that we can operate under the "geometric priors" that nearby points tend to belong to the same part $K_n$ and, given $N$, the parts $\{K_n\}_{n=1}^N$ are as efficient as possible.  This means that $\{K_n\}_{n=1}^N$ should partition $X$ while having the smallest possible boundaries. In particular, if $\xx$ is representative of $\{K_n\}_{n=1}^N$, then $\operatorname{GET\_PARTITION}$ seeks to partition the $\xx$ into $N$ parts $\{\xx_n\}_{n=1}^N$ while minimizing the distance between every pair of data-points in each $\xx_n$.  

This problem is known as the min-cut problem and it is a well-studied problem in computer science.  In particular, the authors of \cite{dahlhaus1994complexity} show that this is an NP-hard problem.  Nevertheless, exploiting the max-flow min-cut duality (see \cite{MaxflowmMincutTheoremImprovedPulat1989}) a randomized-polynomial time algorithm which approximately solves the min-cut problem with high probability is developed in \cite{Bartalmetricapprox}.  
Furthermore, the algorithms tends to assign nearby points to the same part, where the probability of this happening depends linearly on the distance between those two points.  

One stand-in for $\operatorname{GET\_PARTITION}$, which we describe in Subroutine~\ref{subroutine_GET_PARTITION}, modifies the procedure of \cite{Bartalmetricapprox} to fit our setting.  Key properties of our variant are described in Proposition~\ref{cor_dat_driven_opt_alloc}.  In particular, Proposition~\ref{cor_dat_driven_opt_alloc} (v) states that with high-probability, two nearby data-points in $\xx$ are mapped to the same sub-pattern.  

We use $\Delta(\xx) \triangleq \min_{x,y \in \mathbb{X};\, x\neq y}\frac1{2}\|x-y\|$ to denote the minimum distance between any pair of training data-points  and $\bar{\Delta}(\xx)$ the mean distance between distinct training data, given by $\bar{\Delta}(\xx)\triangleq \frac1{\#\{(x,y)\in \xx^2:\, x\neq y\}}\sum_{(x,y)\in \xx^2:\, x\neq y}\|x-y\|$. In Algorithm \ref{subroutine_GET_PARTITION}, the randomness arises from $\alpha$ which contributes to the random radius in Line 7 that forms the data partitions. The minimum portion of the data required in each partition is denoted by $q$. Line 3, shuffles the training data. Line 7 use the random radius defined by $\alpha$ to form the partitions of the data and line 8 extends the partition of the data to forms parts of the input space.  Lines 10-14 ensure each part is not too large relative to the others.

\makeatletter
\renewcommand{\ALG@name}{Subroutine}
\renewcommand{\listalgorithmname}{List of \ALG@name s}
\makeatother
	\begin{algorithm}[ht]
	\caption{$\operatorname{GET\_PARTITION}$.}
	\label{subroutine_GET_PARTITION}
\begin{algorithmic}[1]
	\STATE {\bfseries Input:} Training data $\xx$, $q \in (0,1)$ 
	\STATE Sample $\alpha$ uniformly from $(\frac1{2},\frac1{4}]$\;
	\STATE Pick a bijection $\pi:\{1,\dots,\# \xx\}\rightarrow \xx$\; 
	\STATE Compute $\Delta(\xx)$ and $\bar{\Delta}(\xx)$
	\STATE  $\xx^{'} \leftarrow \xx$
	 \LOOP
		\STATE $ \xx_n\triangleq \left\{z \in \xx^{'}:\, \|z-\xx^{'}_0\|<\alpha \bar{\Delta}(\xx)\right\}$
		\STATE $ K_n^{\xx}\triangleq \left\{z \in \rrd:\,(\exists x \in \xx_n)\, \|x-z\|\leq
		\Delta(\xx)\right\}$
		\STATE $ \xx^{'} \leftarrow \xx^{'} - \xx_n$
	    \IF {$\ \frac{\# \xx^{'}}{\# \xx} \leq q$}
	    \STATE $ \xx_n\triangleq \xx^{'}$
		\STATE $ K_n^{\xx}\triangleq \left\{z \in \rrd:\,(\exists x \in \xx_n)\, \|x-z\|\leq \Delta(\xx)\right\}$
		\STATE BREAK
	    \ENDIF
    \ENDLOOP
 	\STATE {\bfseries Output:} 
 	$\{\xx_n\}\triangleq \left\{\xx_n\neq \emptyset \right\}$,
	$N\triangleq \#\{\xx_n\}$,
	$\{K_n^{\xx}\}\triangleq \left\{K_{n}^{\xx}\neq \emptyset \right\}$
\end{algorithmic}
\end{algorithm}
\begin{prop}[{Properties of Subroutine~\ref{subroutine_GET_PARTITION}}]\label{cor_dat_driven_opt_alloc}
		Let $\xx$, $q=1$, and $\left\{K_{n}^{\xx}\right\}_{n=1}^N$ be as in Subroutine~\ref{subroutine_GET_PARTITION}, and fix $\hat{f}_1,\dots,\hat{f}_n \in \NN$.  Set $f=\sum_{n=1}^N f_n I_{K_n^{\xx}}$.   
		The following hold:
		\begin{enumerate}
			\item[(i)] For $x_1,x_2 \in \xx$, 
			$
			\pp\left(
				\min_{n=1,\dots,N}
				\max_{i=1,2}\|\hat{f}(x_i)-\hat{f}_n(x_i)\|
				    =
				0
				\right)
				\geq 1- \frac{8(\ln(\# \xx)+1)}{\bar{\Delta}(\xx)} \|x_1-x_2\|
				,
			$
			\item[(ii)] Subroutine~\ref{subroutine_GET_PARTITION} terminates in polynomial time,
			\item[(iii)] For $n\leq N$, $\intt[K_n^{\xx}]\neq \emptyset$,
			\item[(iv)] If $n\neq m$ then $K_n^{\xx}\cap K_m^{\xx}=\emptyset$ 

		\end{enumerate}
\end{prop}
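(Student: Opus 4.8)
The plan is to verify (ii)--(iv) by direct inspection of Subroutine~\ref{subroutine_GET_PARTITION} and to obtain (i) by reducing the event in the statement to a \emph{separation} statement about the random partition, which is then controlled by a short Calinescu--Karloff--Rabani / Bartal-type estimate. For (ii): at the start of each pass of the main loop the current leading point $\xx'_0$ belongs to the newly formed $\xx_n$ (since $0<\alpha\bar\Delta(\xx)$, using $\bar\Delta(\xx)>0$), so $\#\xx'$ strictly decreases at every pass and the loop runs at most $\#\xx$ times; each pass costs $O(\#\xx)$ norm evaluations and precomputing $\Delta(\xx),\bar\Delta(\xx)$ costs $O((\#\xx)^2)$, so the running time is polynomial in $\#\xx$ and in the dimension $d$. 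For (iii): $\xx$ is a finite set of at least two distinct points, hence $\Delta(\xx)>0$; every retained $\xx_n$ is non-empty, and for $x\in\xx_n$ the closed ball of radius $\Delta(\xx)$ about $x$ is contained in $K_n^{\xx}$ and has non-empty interior. For (iv): the $\xx_n$ are pairwise disjoint by construction; if $z\in K_n^{\xx}\cap K_m^{\xx}$ with $n\neq m$, choosing $x\in\xx_n$ and $x'\in\xx_m$ with $\|x-z\|,\|x'-z\|\le\Delta(\xx)$ gives $\|x-x'\|\le 2\Delta(\xx)$, which by the definition of $\Delta(\xx)$ as half the minimal inter-point distance forces $x=x'$, contradicting $\xx_n\cap\xx_m=\emptyset$.

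For (i), I would first reduce to the random partition. By (iv) together with the inclusion $x\in\xx_n\Rightarrow x\in K_n^{\xx}$, each training point $x\in\xx$ lies in exactly one part, say $K_{n(x)}^{\xx}$, so on training data $\hat f(x)=\sum_m\hat f_m(x)I_{K_m^{\xx}}(x)=\hat f_{n(x)}(x)$. Consequently, whenever $n(x_1)=n(x_2)$ the index $n=n(x_1)$ achieves $\max_{i=1,2}\|\hat f(x_i)-\hat f_n(x_i)\|=0$, so
\[
\pp\!\Big(\min_{n=1,\dots,N}\max_{i=1,2}\|\hat f(x_i)-\hat f_n(x_i)\|=0\Big)\ \ge\ \pp\!\big(n(x_1)=n(x_2)\big)\ =\ 1-\pp\!\big(n(x_1)\neq n(x_2)\big),
\]
and it remains to bound $\pp(n(x_1)\neq n(x_2))$ by $\tfrac{8(\ln(\#\xx)+1)}{\bar\Delta(\xx)}\|x_1-x_2\|$.

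When $q=1$ the subroutine performs a single pass of the main loop, so $\{\xx_n\}$ consists of the ball $B\big(\pi(1),\alpha\bar\Delta(\xx)\big)\cap\xx$ and its complement in $\xx$; hence $n(x_1)\neq n(x_2)$ exactly when precisely one of $x_1,x_2$ lies in that ball. Conditioning on $\pi(1)=w$ (uniform over $\xx$, independent of $\alpha$), this forces $\alpha\bar\Delta(\xx)$ into the interval with endpoints $\|w-x_1\|$ and $\|w-x_2\|$, whose length $\big|\,\|w-x_1\|-\|w-x_2\|\,\big|\le\|x_1-x_2\|$ by the reverse triangle inequality; since $\alpha\bar\Delta(\xx)$ is uniform on an interval of length $\tfrac14\bar\Delta(\xx)$, the conditional probability is at most $\tfrac{4}{\bar\Delta(\xx)}\|x_1-x_2\|$, and averaging over $w$ gives $\pp(n(x_1)\neq n(x_2))\le \tfrac{4}{\bar\Delta(\xx)}\|x_1-x_2\|\le \tfrac{8(\ln(\#\xx)+1)}{\bar\Delta(\xx)}\|x_1-x_2\|$ because $8(\ln(\#\xx)+1)\ge 8\ge 4$. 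The logarithmic factor and the constant $8$ are thus slack in the $q=1$ case; they are the natural output of the same argument run for general $q$, where one sums over candidate centers $w$ in $\pi$-order, weighting each by the probability that $w$ precedes every center nearer to $\{x_1,x_2\}$, which produces the harmonic sum $\sum_{j=1}^{\#\xx}\tfrac1j\le \ln(\#\xx)+1$. The only places requiring care are the degenerate cases (an empty leftover part, or the ball exhausting $\xx$), which contribute zero to the separation probability, and — if one wants the general-$q$ estimate — transcribing the Bartal/CKR padded-decomposition count to the precise mechanics of Subroutine~\ref{subroutine_GET_PARTITION}; this last bookkeeping is the main place where the argument could go wrong. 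I would present (ii)--(iv) first and then (i) via the reduction and computation above.
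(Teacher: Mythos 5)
Your proposal is correct, and parts (ii)--(iv) proceed essentially as in the paper: (iii) via a ball of radius $\Delta(\xx)>0$ around a point of $\xx_n$ contained in $K_n^{\xx}$, and (iv) via disjointness of the $\xx_n$ together with the definition of $\Delta(\xx)$ as half the minimal inter-point distance (your version and the paper's share the same harmless boundary subtlety when two training points realize that minimum exactly); for (ii) you argue termination directly from the strict decrease of $\#\xx'$, whereas the paper simply defers to \cite{Bartalmetricapprox}, so your route is if anything more self-contained. The genuine divergence is in (i). Both proofs make the same reduction --- lower-bound the event by the probability that $x_1$ and $x_2$ land in the same part --- but the paper then invokes the general Calinescu--Karloff--Rabani separation bound, obtaining $\frac{8}{\bar{\Delta}(\xx)}\|x_1-x_2\|$ times a harmonic sum over ball cardinalities, which it crudely bounds by $\ln(\#\xx)+1$. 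You instead exploit the hypothesis $q=1$, under which the subroutine performs a single pass and the partition is just one random ball and its complement, and compute the separation probability directly from the uniform density of $\alpha\bar{\Delta}(\xx)$ on an interval of length $\tfrac14\bar{\Delta}(\xx)$, getting the sharper bound $\tfrac{4}{\bar{\Delta}(\xx)}\|x_1-x_2\|$ and recovering the stated inequality with slack. Your approach buys elementarity, a constant-factor improvement, and arguably better fidelity to what the subroutine actually does when $q=1$ (the CKR analysis presumes the full iterative decomposition with a random ordering of centers); the paper's approach buys generality, since the cited bound is exactly what one needs for arbitrary $q$, which --- as you correctly note --- is where the logarithmic factor genuinely originates.
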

Our final result guarantees that, given trained models $\{f_n^{\star}\}$, there exists $\{K^{\star}_n\}\subset \text{Comp}(X)$ which optimizes the \aname's performance with arbitrarily high-probability.  We quantify this by a fixed Borel \textit{probability measure} $\pp$ on $X$.  
	\begin{thrm}[Existence: Performance Optimizing Partition]\label{thrm_partition_learner}
Fix $\{\hat{f}_n\}_{n=1}^N \in \NN$ and $L\in C(\mathbb{R}^D,[0,\infty))$ for which $L(0)=0$.  There exists a compact subset $X_{\delta,\pp}\subseteq X$ and a partition of $X_{\delta,\pp}$ satisfying:

		\begin{enumerate}[(i)]
			\item $\pp(\kkk_{\delta,\pp})\geq 1-\delta$,
			\item For $n\leq N$ and every $x \in K_n^{\star}$,
			$
			L(\hat{f}_n(x)) = \min_{m\leq N} L(\hat{f}_m(x))
			.
			$
		\end{enumerate}
		Moreover, if $\xx\subseteq \kkk_{\delta,\pp}$ and $L(f^{\star}_n(x))< \min_{n\neq \tilde{n},\,\tilde{n}\leq N}L(f^{\star}_{\tilde{n}}(x))$ for each $n\leq N$, $x \in \xx_n$ then, $\xx_n\subseteq K_n^{\star}$.  
	\end{thrm}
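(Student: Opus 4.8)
The plan is to read off, at each point of $X$, which sub-pattern incurs the least $L$-loss, use this to carve $X$ into cells, and then shave each cell down to a compact subset at the cost of a total $\pp$-mass at most $\delta$. Write $g_n\triangleq L\circ\hat f_n$, which is continuous since $L$ and each $\hat f_n$ are, and form the \emph{winner sets} $A_n\triangleq\{x\in X:\ g_n(x)\le g_m(x)\text{ for all }m\le N\}$. Each $A_n$ is closed, being a finite intersection over $m$ of the sets $\{g_n-g_m\le 0\}$, which are preimages of $(-\infty,0]$ under the continuous maps $g_n-g_m$; and since the finite minimum $\min_{m\le N}g_m(x)$ is attained everywhere, $\bigcup_{n=1}^N A_n=X$. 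The $A_n$ overlap at ties, so I disjointify: set $P_1\triangleq A_1$ and $P_n\triangleq A_n\setminus\bigcup_{m<n}A_m$ for $n\ge2$. The $P_n$ are Borel, pairwise disjoint, satisfy $\bigcup_{n=1}^N P_n=X$, and $P_n\subseteq A_n$, so every $x\in P_n$ obeys $g_n(x)=\min_{m\le N}g_m(x)$.

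Next I would pass from the Borel cells $P_n$ to compact ones. Since $X$ is compact, every closed subset of $X$ is compact, and every finite Borel measure on a metric space is inner regular with respect to closed sets; hence $\pp$ is inner regular with respect to compact sets, and for each $n$ I can choose a compact $C_n\subseteq P_n$ with $\pp(P_n\setminus C_n)<\delta/N$. Put $K_n^\star\triangleq C_n$ and $X_{\delta,\pp}\triangleq\bigcup_{n=1}^N C_n$ (discarding any empty $C_n$, should one insist the pieces be non-empty). Then $X_{\delta,\pp}$ is compact as a finite union of compact sets, the $K_n^\star$ are pairwise disjoint since the $P_n$ are, and their union is $X_{\delta,\pp}$; so $\{K_n^\star\}_{n\le N}$ is a partition of $X_{\delta,\pp}$. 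For (i), $\pp(X_{\delta,\pp})=\sum_{n=1}^N\pp(C_n)\ge\sum_{n=1}^N\big(\pp(P_n)-\delta/N\big)=\pp(X)-\delta=1-\delta$. For (ii), any $x\in K_n^\star\subseteq P_n\subseteq A_n$ satisfies $g_n(x)=\min_{m\le N}g_m(x)$, i.e.\ $L(\hat f_n(x))=\min_{m\le N}L(\hat f_m(x))$.

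For the ``moreover'' clause (identifying $f^\star_n$ with $\hat f_n$), suppose $\xx\subseteq X_{\delta,\pp}$ and that $L(\hat f_n(x))<\min_{\tilde n\ne n}L(\hat f_{\tilde n}(x))$ for every $n\le N$ and every $x\in\xx_n$. Fix such an $n$ and $x\in\xx_n$; since $X_{\delta,\pp}$ is the disjoint union of the $K_m^\star$, there is a unique $m$ with $x\in K_m^\star$, and (ii) then gives $L(\hat f_m(x))=\min_{k\le N}L(\hat f_k(x))\le L(\hat f_n(x))$. If $m\ne n$, then $m$ lies among the indices $\tilde n\ne n$, so the hypothesis forces $L(\hat f_n(x))<L(\hat f_m(x))$, contradicting the previous inequality; hence $m=n$ and $x\in K_n^\star$. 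As $x\in\xx_n$ was arbitrary, $\xx_n\subseteq K_n^\star$.

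The one genuinely delicate point is that the cells are required to be \emph{simultaneously} compact, pairwise disjoint, and to exhaust almost all of $X$: the natural closed winner sets $A_n$ overlap, whereas their disjointifications $P_n$ are no longer closed, so neither family works by itself. Inner regularity of $\pp$ is exactly the device that reconciles these requirements, pushing the non-closed ``seams'' between the cells into the $\delta$-exceptional set $X\setminus X_{\delta,\pp}$; this is also why the probability measure $\pp$ and the tolerance $\delta$ have to appear in the statement at all.
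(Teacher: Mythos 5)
Your proof is correct, but it follows a genuinely different route from the paper's. The paper first invokes the Measurable Maximum Theorem to produce a Borel-measurable selector $C:X\rightarrow\{1,\dots,N\}$ with $L(\hat f_{C(x)}(x))=\min_{m\le N}L(\hat f_m(x))$, then applies Lusin's theorem to obtain a compact $\kkk_{\delta,\pp}$ of $\pp$-mass at least $1-\delta$ on which $C$ is continuous, and finally sets $K_n^{\star}=\kkk_{\delta,\pp}\cap C^{-1}[\{n\}]$, which is automatically compact, relatively open, and pairwise disjoint. You bypass both of these theorems: your closed winner sets $A_n$ together with the lexicographic disjointification $P_n=A_n\setminus\bigcup_{m<n}A_m$ constitute an explicit measurable selector (namely $C(x)=\min\{n: x\in A_n\}$), and inner regularity of the finite Borel measure $\pp$ on the compact metric space $X$ plays exactly the role that Lusin's theorem plays in the paper --- trimming the Borel cells to compact ones while sacrificing only $\delta$ of mass. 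Your argument is more elementary and more explicit about where the exceptional set lives (the ``seams'' $A_n\cap A_m$ and the boundary points of the $P_n$), whereas the paper's argument buys the extra structural conclusions recorded in its extended version (continuity of the classifier $C$ on $\kkk_{\delta,\pp}$, which matters because Meta-Algorithm~\ref{metaalgo_GET_PCNN} subsequently trains a network to imitate $C$, and relative openness/disconnectedness of the cells) and would generalize to settings where the argmin correspondence is not given by finitely many explicit closed sets. Your handling of the ``moreover'' clause is the same uniqueness-of-cell argument as the paper's and is complete.
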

\begin{rremark}[Approximation of Optimal Partition]\label{cor_approximability}
Applying Theorem~\ref{thrm_UAT_PC}, we conclude that for every $\epsilon>0$ there is a $\aname$ $\hat{f}$ such that 
$
    D_{PC}(\sum_{n=1}^N\hat{f}_nI_{K_n^{\star}}|\hat{f})<\epsilon
    ;
    $ where the $\hat{f}_1,\dots,\hat{f}_N$ are from Meta-Algorithm~\ref{metaalgo_GET_PCNN} (4-6).
\end{rremark}	

\section{Numerical Experiments}\label{s_Implementation}

We evaluate the $\aname$'s performance on three different regression tasks.  The goal of our experiments is twofold.  The first objective is to show that the $\aname$ trained with Meta-Algorithm~\ref{metaalgo_GET_PCNN} better approximates the $\{K_n\}_{n=1}^N$ defining the piecewise function compared to the the benchmark models. The second goal is to show that the model offers a predictive advantage when the function being approximated is discontinuous.  

\subsection{Implementation Details}\label{s_Implementation_ss_Details}
In the following, We implement $\aname$, trained with Algorithm~\eqref{metaalgo_GET_PCNN} and subroutine~\ref{subroutine_GET_PARTITION}, against various benchmarks.  The first benchmark is the FFNNs, which we use to evaluate the predictive performance improvement obtained by turning to PCNNs and utilizing a discontinuous layer.  
The second class of benchmarks focuses on the effectiveness of the $\aname$'s structure itself.  We consider two naive alternatives to the proposed model design.  Since $\aname$ can be viewed as an ensemble model, we compare its predictive performance against a bagged model (FFNN-BAG) wherein the user trains $N$ distinct feed-forward networks $\hat{f}_1,\dots\hat{f}_N$ on the deep zero-sets generated using subroutine~\ref{metaalgo_GET_PCNN} which are summed together to construct the bagged model $\hat{f}_{\text{FFNN}_{\text{BAG}}}(x)\triangleq \sum_{n=1}^N \hat{f}_n(x)$.  This benchmark has the benefit of distinguishing parts of the inputs space via the $\{K_n\}_{n=1}^N$, instead of naively grouping them into one estimator.  Next, to evaluate the effectiveness of the deep partitions, we consider the (FFNN-LGT) model, which is identical to the $\aname$ in structure except that the deep classifier $\sigma$ defining the deep partitions $\hat{K}_n$ is replaced by a simple logistic classifier.

The model quality is evaluated according to their test predictions and the learned model's complexity/parsimony.  Prediction quality is quantified by mean absolute error (MAE), mean squared error (MSE), and mean absolute percentage error (MAPE), each evaluated on the test set.   The models are trained by using the mean absolute error (MAE).  The model \textit{complexity} is assessed by the total number of parameters in each model ($\#$Par), the average number of parameters processing each input ($\#$Par/x) and the training times, either with parallelization (P. time) or without (L. time).  

Our numerical experiments emphasize the scope and compatibility of Meta-Algorithm~\ref{metaalgo_GET_PCNN} with most FFNN training procedures.  Fittingly, the experiments in Figure~\ref{fig_Motivational_Demonstration_of_PCNNs} and Sections~\ref{s_Implementation_ss_Exp_Part} take $\operatorname{GET\_FFNN}$ to be the ADAM stochastic optimization algorithm of \cite{ADAM2015KingmaB14} whose behaviour is well-studied \cite{AdamAndBeyond}. The experiments in Sections~\ref{s_Implementation_ss_RW_Disc} and~\ref{s_Implementation_ss_synthetic_known_partition} train the involved FFNNs by randomizing all their hidden layers and only training their final ``linear readout'' layer.  This latter method is also well understood  \citep{gelenbe1990stability,louart2018random,NEURIPS2019_5481b2f3,cuchiero2020deep,gonon2020approximation}).   

In the latter experiments, we also benchmark PCNNs against a deep feedforward network with randomly generated hidden weights and linear readout trained with ridge-regression (FFNN-RND).  The FFNN-RND and FFNN benchmarks put the speed/precision trade-off derived from randomization into perspective.  This allows us to gauge the $\aname$ architecture's expressiveness as it is still the most accurate method even after this near-complete randomization; in comparison, the FFNN-RND method's predictive power will reduce when compared to the FFNN trained with ADAM.  


\subsection{Learning Discontinuous Target Functions}
\label{s_Implementation_ss_RW_Disc}
It is well known that the returns of most commonly traded financial assets forms a discontinuous trajectory \citep{lee2008jumps,ContTankov2004FinancialJumpProcesses,FilipovicMartin2020PolynomialJumpDiffusions}; the origin of these ``jump discontinuities'' are typically abrupt ``regime switches'' of the underlying market dynamics caused by news or other economic and behavioural factors.  The magnitude of discontinuous behaviour depends on the idiosyncrasies of the particular financial assets.  The next experiments illustrate that $\anames$ can effectively model discontinuous function, with varying degrees of discontinuities, by illustrating that they can replicate the returns of assets with different levels of volatility; i.e.: the degree to which any asset's returns fluctuate.    

\subsubsection{Mild Discontinuities: SnP500 Market Index Replication}\label{s_Implementation_ss_RW_Disc__ss_Main_ss_MetaAlgoation_snp_replication}
In this experiment we train the $\aname$, along with the benchmarks, to predict the next-day SnP500 market index's returns using the returns of all its constituents; i.e. the stocks of the 500 largest companies publically traded in the NYSE, NASDAQ, Cboe BZX exchanges.  The dataset consists of $2$ years of daily closing data ending on September $9^{th}$ $2020$.  The test set consists of the final two weeks.  All returns are computed with the daily closing prices.  

\begin{figure}[H]
\centering
\begin{floatrow}
\ffigbox{%
  \centerline{\includegraphics[scale=0.1]{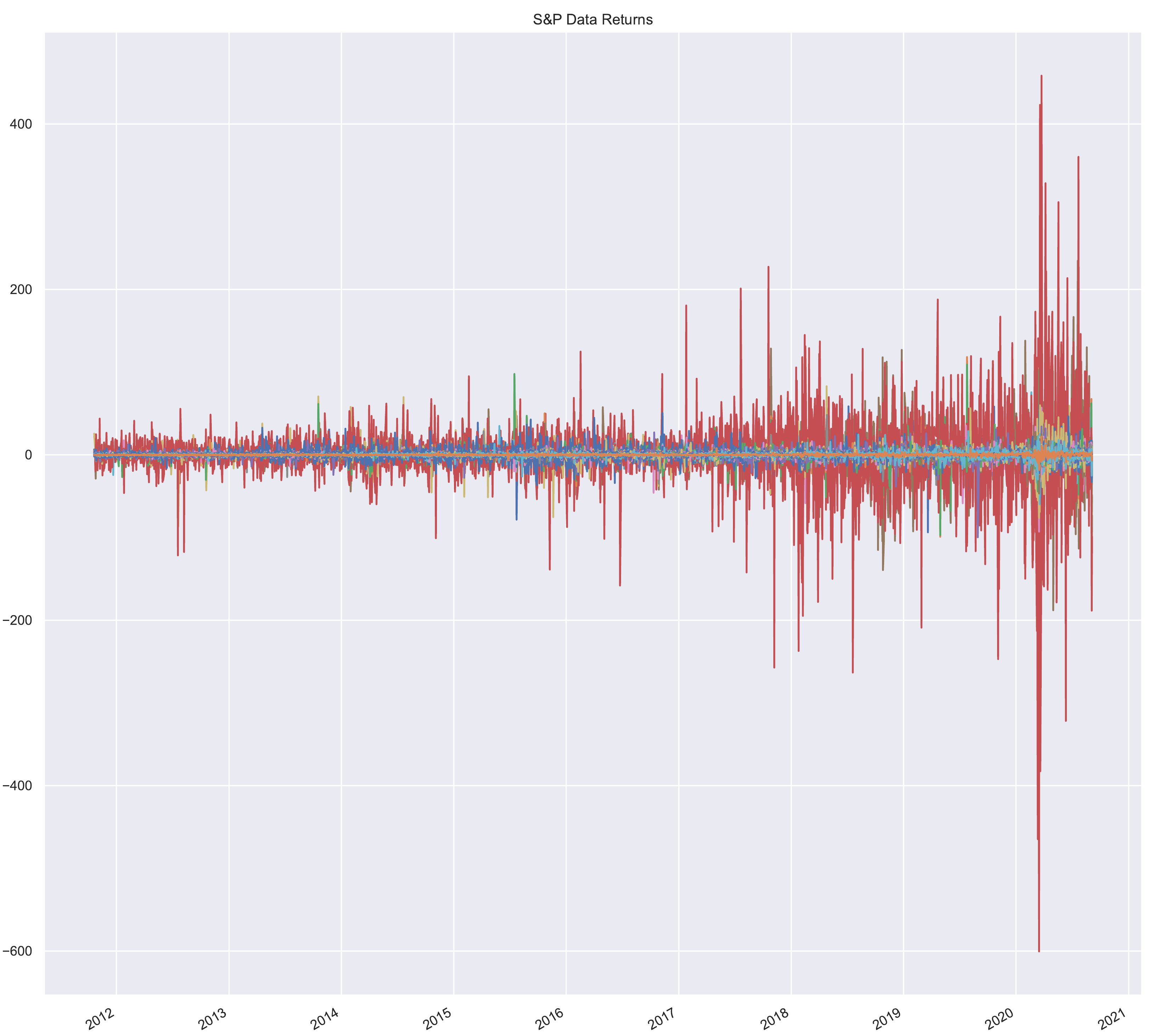}}
}{%
  \caption{Four Year Returns of SnP500 Constituents' Returns}%
  \label{fig_2_year_returns}
}
\capbtabbox{%
\begin{adjustbox}{width=\columnwidth,center}
\begin{tabular}{lrrlrr}
\toprule
{} &        MAE &  P. Time & L. Time &       $\#$Par/x &     $\#$Par \\
\midrule
FFNN     &    9.0e+2 &  - &       8.0e+2 &     6.0e+3 &     6.0e+3 \\
FFNN-RND & 1.3e+4 &   - &       8.7e-3  &     6.0e+3 &      6.0e+3 \\
FFNN-BAG &    9.0e+2 &   1e-1 &  2.4e-1 &  5.6e+6 &  5.6e+6 \\
FFNN-LGT &    9.1e+2 &   2.5e+1 &  2.7e-1 & 6.1e+6 &  5.5e+6 \\
PCNN     &    8.7e+2 &   9.1e+1 &  9.2e+1 &   5.1e+4 &  5.4 e+5 \\
\bottomrule
\end{tabular}
\end{adjustbox}
}{%
    \caption{Predictive Performance and Complexity Metrics}
    \label{SnP_table}
}
\end{floatrow}
\end{figure}

\begin{figure}[ht]

\centering
\subfigure[MAE]{\label{fig_SnP_MAE}\includegraphics[width=14em]{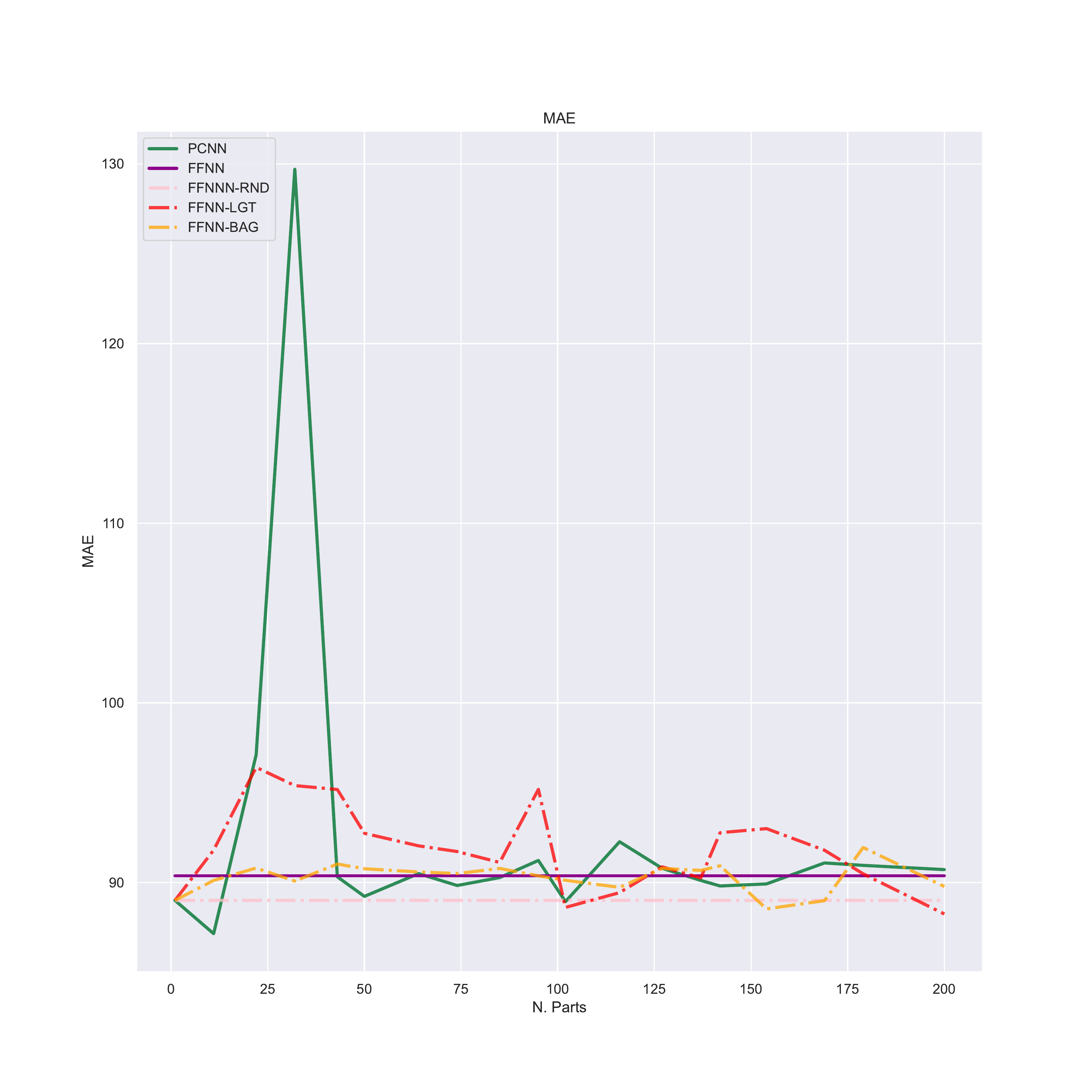}}
\subfigure[P. time]{\label{fig_SnP_PTime}\includegraphics[width=14em]{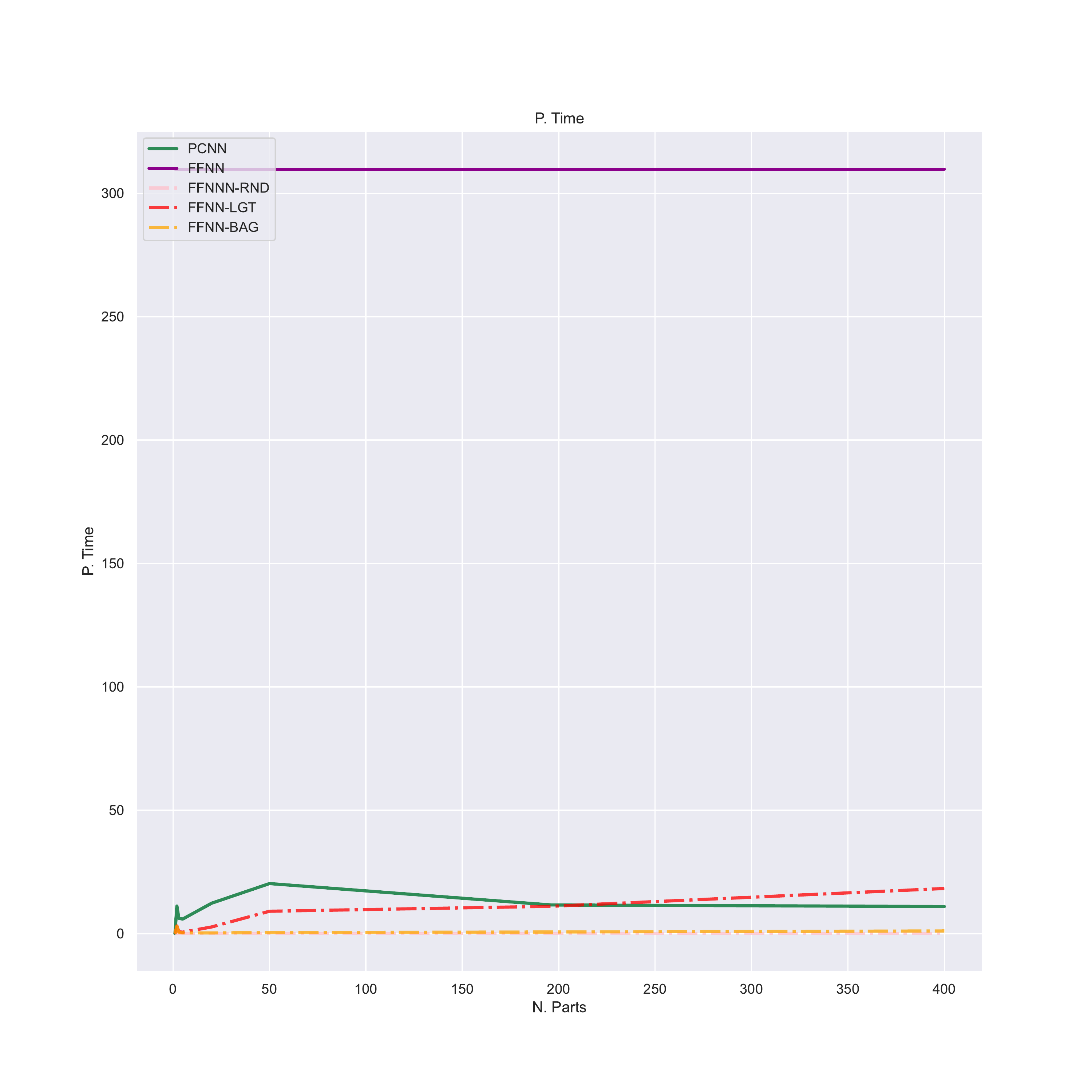}}
\subfigure[Prop. Total Neurons/Input]{\label{fig_SnP_NActiveNeurons}\includegraphics[width=14em]{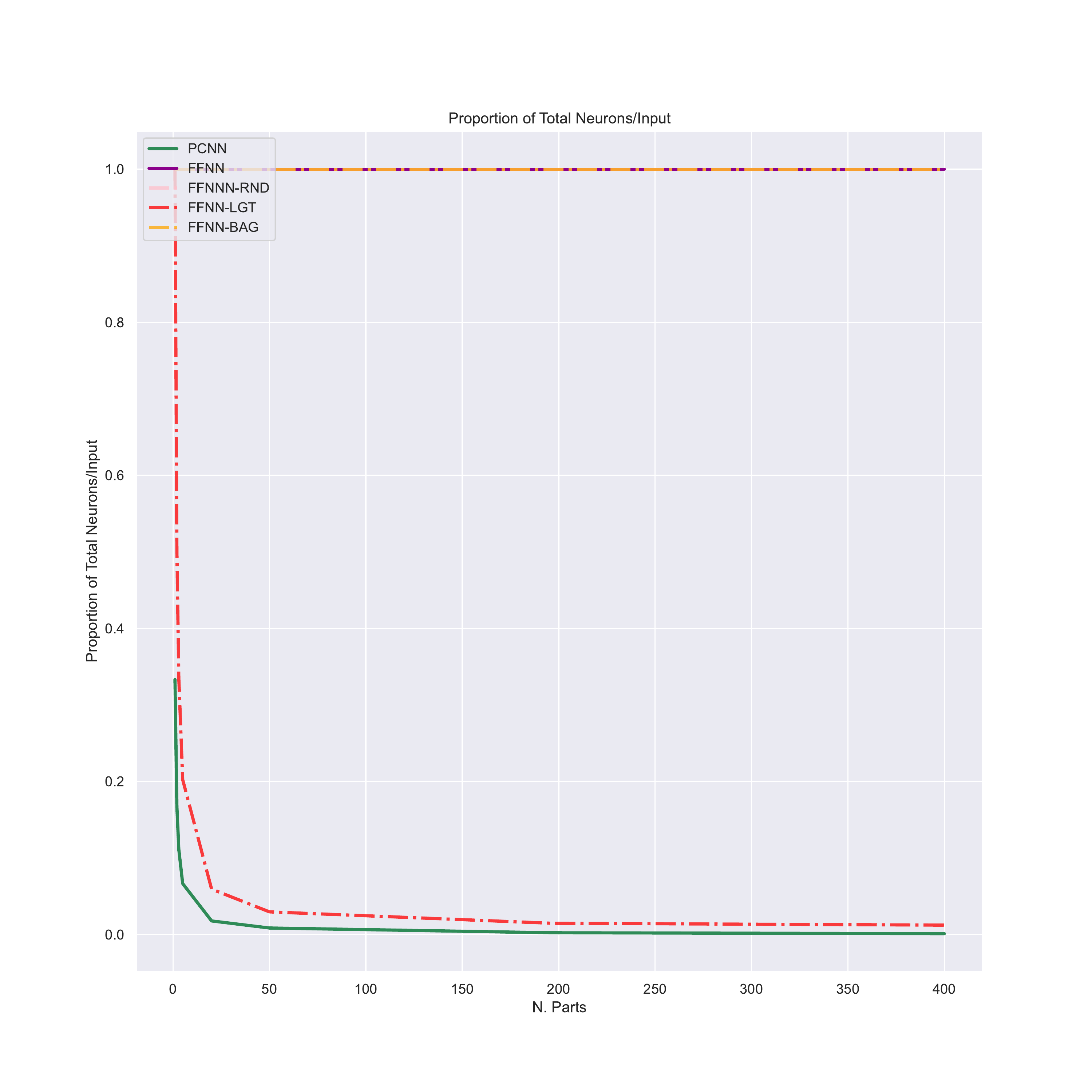}}

\caption{Performance as Function of Number of Parts with Variable Number of Neurons.}
\label{fig_Ablation___Synthetic_Untied}
\end{figure}

Since the principle parameter, extending FFNNs to PCNNs, is the integer $N$, controlling the number of parts and sub-patterns thereon.  We study the effect of varying the number of parts when training a PCNN using Algorithms~\ref{metaalgo_GET_PCNN} and~\ref{subroutine_GET_PARTITION}. Figures~\ref{fig_SnP_MAE}-\ref{fig_SnP_NActiveNeurons} explore the effect of varying $N$ on the considered performance metrics.  We find that the $\aname$ achieves the best performance amongst the considered models while relying on the smallest number of trainable parameters.  Thus, $\aname$ is the most efficient model.  Throughout this ablation experiment, the PCNNs deployed in Figures~\ref{fig_SnP_MAE}-\ref{fig_SnP_NActiveNeurons} are forced to have a comparable number of active neurons, with a fixed minimum width to ensure expressibility. This is necessary, for example, even in the case when there is a single-part whereon PCNNs essentially coincide with FFNNs \citep{johnson2018deep,park2020minimum}.


Figures~\ref{fig_SnP_MAE} and~\ref{fig_SnP_PTime} show that, once enough parts have been built into the PCNN, it outperforms the feedforward models.  From figure~\ref{fig_SnP_PTime}, we also see that the parallelizability and randomization of Subroutine~\ref{subroutine_GET_PARTITION}'s $3^{rd}$ step enables a relatively small increase in parallelized training times compared to the FFNN, even when $N$ increases.  Furthermore, since each $\hat{f}_n$ are progressively narrowed as $N$ increases, then the training time further accelerates as the $\hat{f}_n$ are built using progressively fewer neurons.  

Figure~\ref{fig_SnP_PTime} shows that, P. time increases as the number of parts defining $\aname$ do, this is because training $\hat{c}$ and Subroutine~\ref{subroutine_GET_PARTITION} is not parallelizable and scale in $N$; albeit not dramatically.  Thus, the number of parts defining the PCNNs' predictive power, as expressed through the MAE and MSE losses, rises before tapering off.  This is because a higher number of parts allows more regions of discontinuities to be captured, but since our experiment fixed the total number of neurons defining the PCNNs then each sub-pattern can become too narrow to support expressivity as the number of parts becomes large.  

\begin{figure}[H]
\centering
\begin{floatrow}
\ffigbox{%
  \centerline{\includegraphics[scale=0.1]{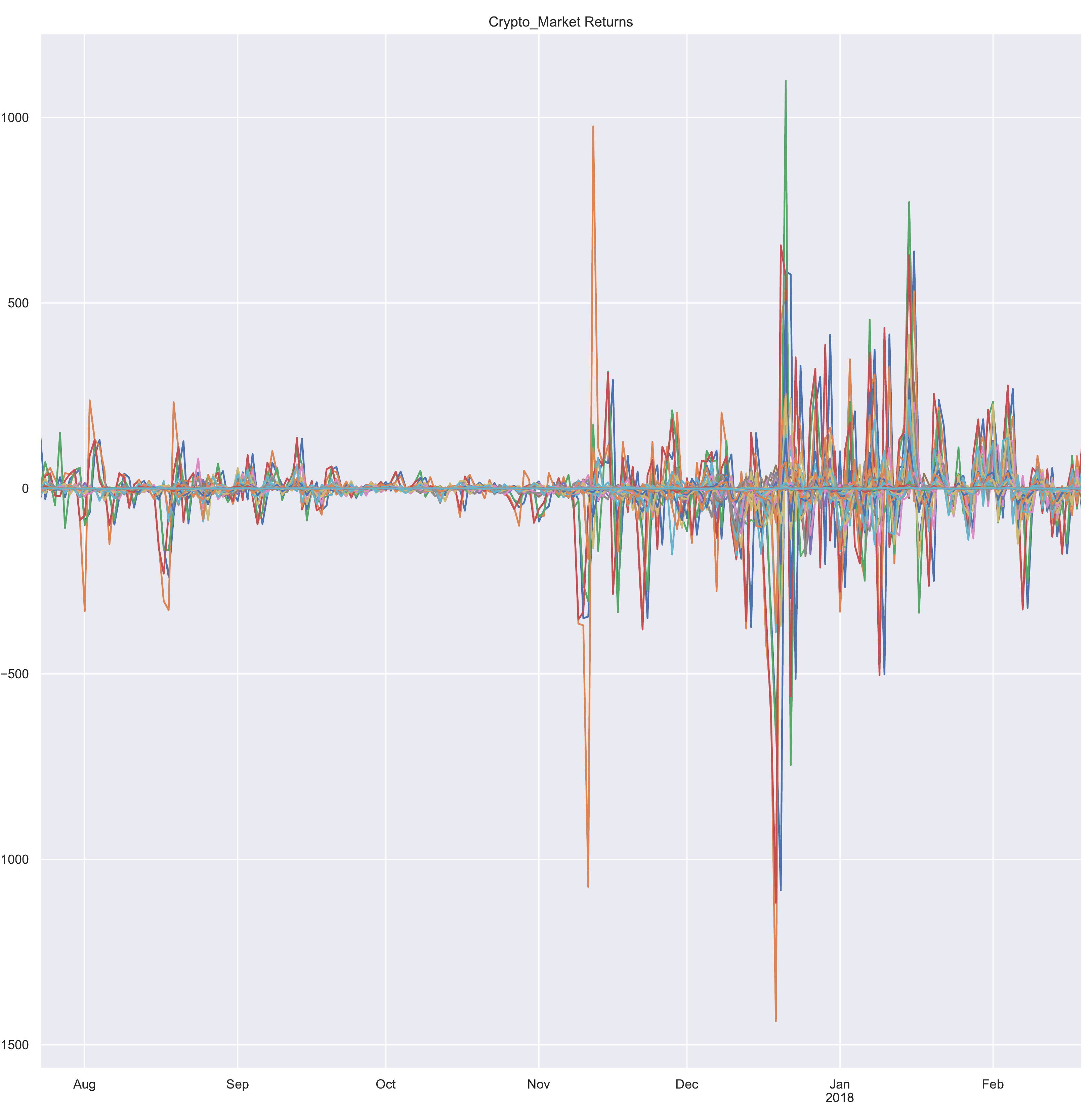}}
}{%
  \caption{Two Year Cryptocurrency Returns}%
  \label{fig_2_year_returns_crypto}
}
\capbtabbox{%
\begin{adjustbox}{width=\columnwidth,center}

\begin{tabular}{lrrlrr}
\toprule
{} &        MAE &  P. Time & L. Time &        $\#$Par/x  \\
\midrule
FFNN     &    9.6e+2 &  - &       1.2e+1 &     1.2e+4  \\
FFNN-RND & 7.6e+4 &   - &       2.7e-3 &     1.2e+4 \\
FFNN-BAG &   4.5e+3 &   3.1e-2 &  3.9e-1 &   4.87+6  \\
FFNN-LGT &   1.0e+3 &   1.4e-1 &  5.0e-1 & 2.0e+6  \\
PCNN     &    8.2e+2 &   1.0e+1 &  1.4e+1 &    1.2e+5  \\
\bottomrule
\end{tabular}
\end{adjustbox}
}{%
    \caption{Predictive Performance and Complexity Metrics}
    \label{bitcoin_table}
}
\end{floatrow}
\end{figure}

Table~\ref{bitcoin_table} shows that the models learning the partitions of the input space, i.e. FFNN-LGT and $\aname$, enhance the predictive performance. Furthermore, the flexibility offered by the paradigm of  Meta-Algorithm~\ref{metaalgo_GET_PCNN} further improves the prediction of the next-day Bitcoin closing price.

We examine the effect of the number of partitions; we repeat the experiment with a fixed number of neurons distributed amongst the subpatterns $\{\hat{f}_n\}_{n=1}^N$ with the test set consisting of the final two weeks of February $20^{th}$ 2018.  We also force the network $\hat{c}$ in~\eqref{eq_deep_zero_sets_definition}, defining all the deep zero-sets $\{\hat{K}_n\}_{n=1}^N$, to scale at a rate of $N^{-1}$ to ensure an even comparison between the FFNNs (i.e.: \aname with a single part) and the genuine $\anames$ with multiple parts.  

\begin{figure}[ht]

\centering
\subfigure[MAE]{\label{fig_Ablation_MAE___Crypto}\includegraphics[width=14em]{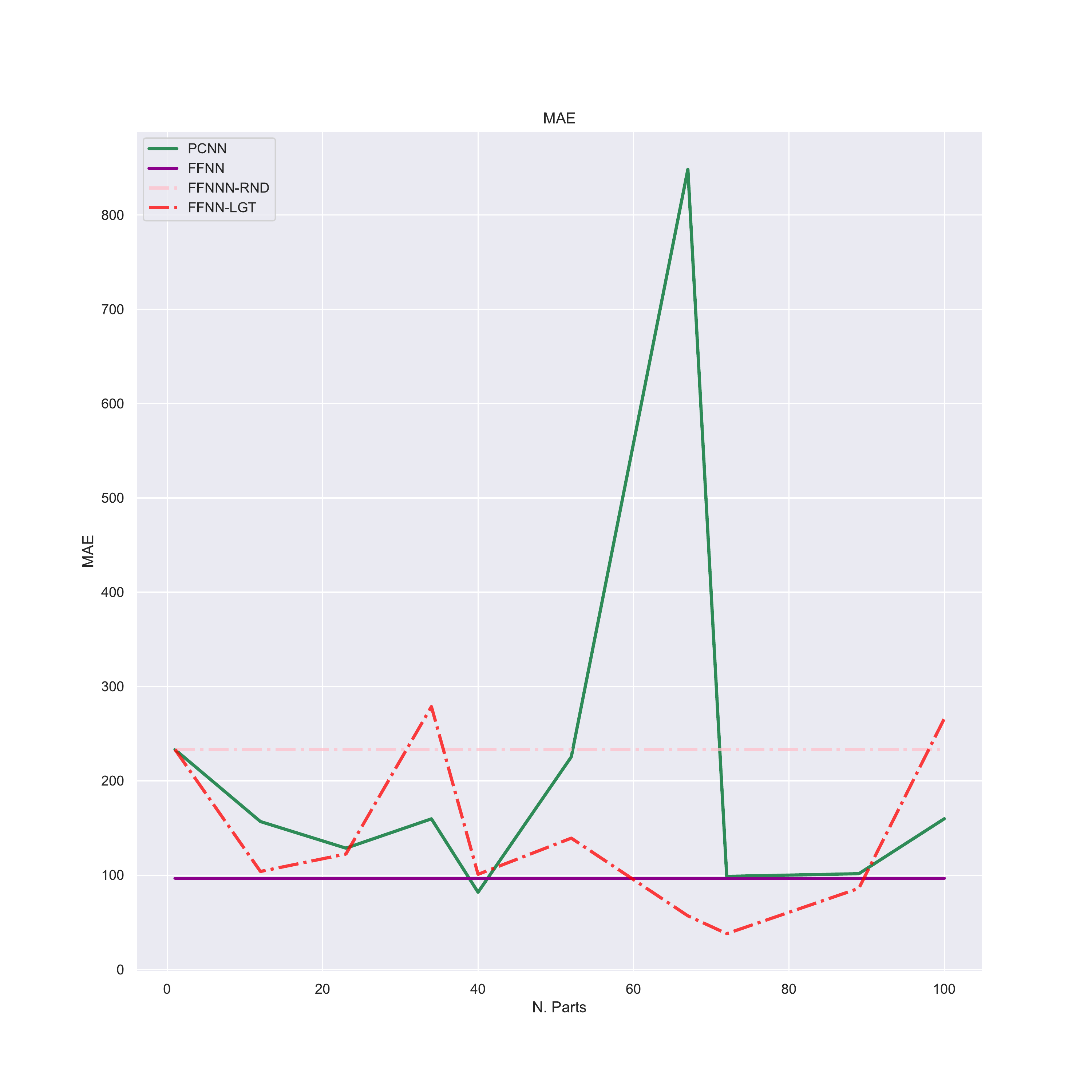}}
\subfigure[P-Time]{\label{fig_Ablation_PTimes___Crypto}\includegraphics[width=14em]{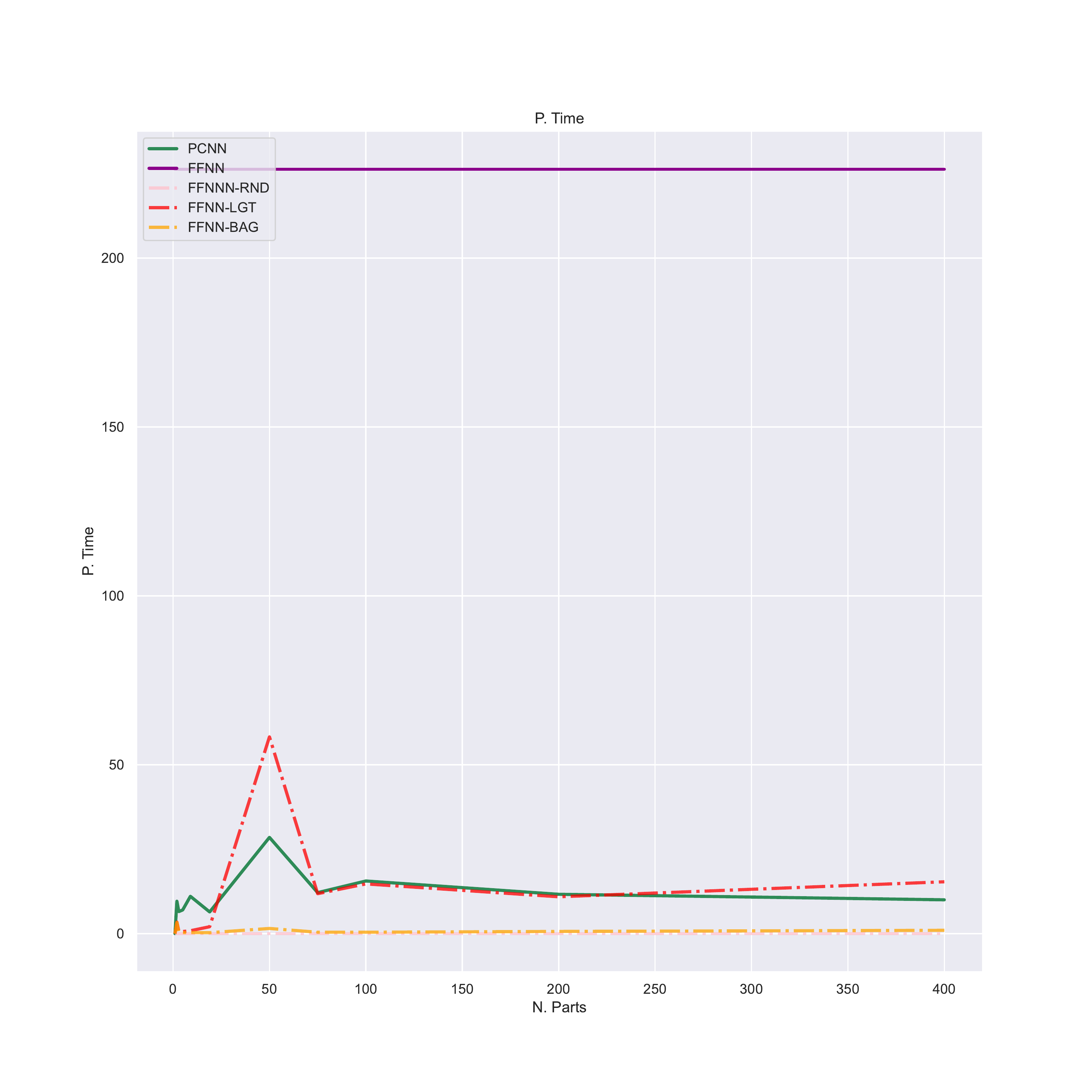}}
\subfigure[Prop. Total Neurons/Input]{\label{fig_Ablation_PropTotNeurons___Crypto}\includegraphics[width=14em]{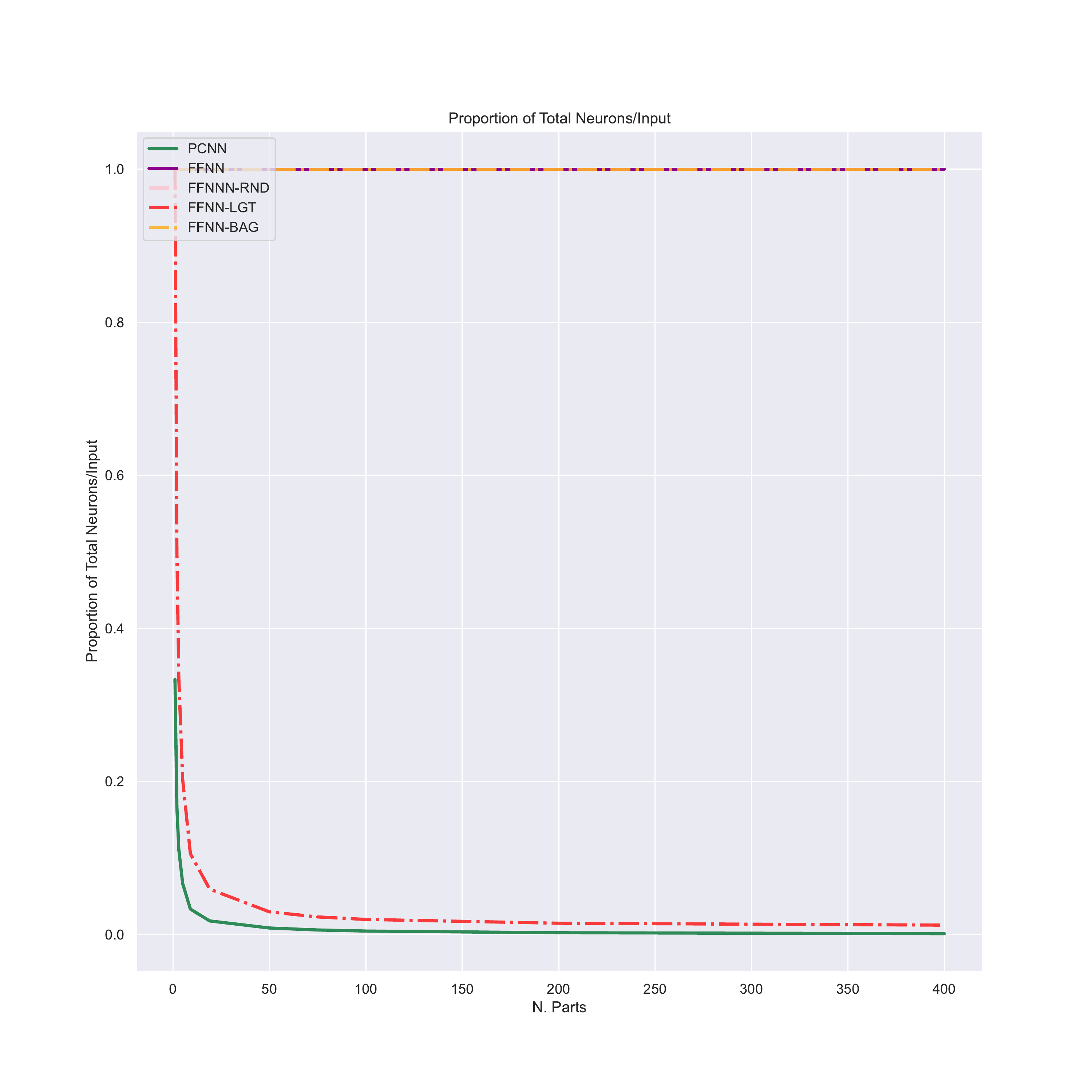}}
\caption{Performance as Function of Number of Parts with Variable Number of Neurons.}
\label{fig_Ablation___Synthetic_Untied_2}
\end{figure}

From Figure~\ref{fig_Ablation_MAE___Crypto}, we see that $\anames$ has a lower test-set MAE than PCNNs with fewer parts, and in particular they outperform PCNNs with 1 part; i.e.: feedforward neural networks.  Figures~\ref{fig_Ablation_MAE___Crypto} and~\ref{fig_Ablation_PropTotNeurons___Crypto} show that the $\anames$ require fewer neurons to produce their predictions.  Moreover, Table~\ref{bitcoin_table} shows that any given input $x\in X$ is processed by far fewer neurons than are available in the entire PCNN; thus any input is first ``triaged'' by the network $\hat{c}$ then assigned to its correct deep zero-set and processes by the correct subpattern of the PCNN specialized to that part of the input space.  

\subsection{Beating an Expert Partition}
\label{s_Implementation_ss_Exp_Part}
Often one has access to a partition of the training inputs derived from expert insight.  This experiment evaluates the impact of using Subroutine~\ref{subroutine_GET_PARTITION} for $\mbox{GET\_PARTITION}$ in Meta-Algorithm~\ref{metaalgo_GET_PCNN} in comparison to using the expert partition.  The experiment will be on the Kaggle housing dataset \cite{KaggleCaliHousePrices} where there is a commonly accepted expert partition \cite{KaggleExpertDiscussion}.  

We compare the $\aname$ model also trained using Meta-Algorithm~\ref{metaalgo_GET_PCNN} and the ADAM optimizer \cite{ADAM2015KingmaB14}  for $\operatorname{GET\_FFNN}$ but now with the expert partition in place of Subroutine~\ref{subroutine_GET_PARTITION}.   This benchmark (PCNN+EXP) quantifies how well the partition initialization of Subroutine~\ref{subroutine_GET_PARTITION} and the partition updating of Meta-Algorithm~\ref{metaalgo_GET_PCNN} (steps 10-12) performs against the commonly accepted partition of that dataset.  The PCNN+EXP benchmark takes the \cite{KaggleExpertDiscussion} partition and then trains an FFNN independently on each part using \cite{ADAM2015KingmaB14} before recombining them with the discontinuous unit (Figure~\ref{fig_PCNNs} in red).  

The impact of the partition updating of Meta-Algorithm~\ref{metaalgo_GET_PCNN} (steps 10-12), given a good initialization of the partition, is quantified by another benchmark (PCNN+EXP+UP).  The PCNN+EXP+UP benchmark is trained just as PCNN+EXP, but it also incorporates the partition updating in Meta-Algorithm~\ref{metaalgo_GET_PCNN} before regrouping the trained FFNNs by the discontinuous unit (Figure~\ref{fig_PCNNs} in red). Thus, PCNN+EXP+UP measures the impact of Meta-Algorithm~\ref{metaalgo_GET_PCNN}'s the partition updating steps.   Since the PCNN+EXP's parts are not implemented by neurons, as are the other benchmark models, reporting $\#Par/x$ in this experiment would not be accurate; rather, Table~\ref{Housing_perform_complexity_table} reports the total number of parameters ($\#$Par) in each model. 

The PCNN+EXP+UP benchmark contrasts against the FFNN-LGT benchmark, also considered in the above experiments, which quantifies the impact of our partitioning procedure against a naively chosen one with no good prior initialization. The benchmarks of the previous experiments are included in Table~\ref{Housing_perform_complexity_table} to gauge the PCNN's performance.  

\begin{table}[ht]
\begin{adjustbox}{width=\columnwidth,center}
\caption{Comparison of $\aname$ with Partitioning and Prediction Benchmarks.}
\label{Housing_perform_complexity_table}
\begin{tabular}{lccrr|lccrr}
	\toprule
\multicolumn{5}{c}{Partition Benchmarks} & \multicolumn{5}{c}{Prediction Benchmarks} \\
\midrule
{} &  MAE  &  L. time &  P. time & $\#$Par
&
&  MAE  &  L. time &  P. time & $\#$Par \\
\midrule
$\aname$+EXP & 3.17e+4 & 8.45e+4 &  4.12e+4 & 8.37e+5 
&
FFNN     &  3.21e+4 &   9.28e+4        & 9.28e+4        & 3.7e+5 \\
$\aname$+EXP+UP  &   3.17e+4 & 1.58e+5 &  6.60e+4 & 1.36e+5 
&
FFNN-BAG &  4.95e+4 &  6.36e+4        & 2.89e+4        & 2.8e+4 \\
$\aname$  &  3.13e+4 &  1.28e+5 &           9.28e+4 &     3.0e+4 
&
FFNN-LGT &  3.18e+4 &  6.37e+4        & 2.90e+4        & 2.8e+4  \\
\bottomrule
\end{tabular}
\end{adjustbox}
\end{table}

The PCNN trained with Meta-Algorithm~\ref{metaalgo_GET_PCNN} out-predicts all the considered benchmarks; in particular, we note the predictive gap of $4k\$ $ MAE from the second-best benchmark (PCNN+EXP+UP) and a gap of $8k\$ $ from the FFNN benchmark.  Thus,  no prior knowledge of the input is needed for a successful PCNN deployment using Meta-Algorithm~\ref{metaalgo_GET_PCNN}.  

When examining the importance of proper partitioning, the gap between the FFNN-LGT and the PCNN models shows that a poorly chosen partition still impacts the model's performance.  In more granularity, the PCNN+EXP has an MAE of $3.17$5e+4 while the PCNN+EXP+UP has a mildly lower MAE of $3.171$e+4. This shows that an accurate choice of the subroutine $\operatorname{GET\_PARTITION}$ is much more impactful in Meta-Algorithm~\ref{metaalgo_GET_PCNN} than the updating steps (10-12).  The reason for this is that the training of PCNN's sub-patterns depends on that initialization. Therefore, a poor choice of an initial partition translates to a reduction of the PCNN's inductive bias.  This is validated by the small gap between the FFNN-LGT, the PCNN+EXP, and PCNN+EXP+UP models.    
\subsection{Ablation within A Controlled Environment}
\label{s_Implementation_ss_synthetic_known_partition}
We begin by ablating the performance of the $\aname$ architecture on various synthetic experiments, wherein we may examine the effect of each component of the synthetic data on the proposed model.  We study the performance of various learning models when faced with the non-linear regression problem
\begin{equation}
    y_n = f(x_n) + \sigma\epsilon_n^{\nu}
    \label{eq_nonlinear_Regression_hard}
    ,
\end{equation}
where for each $n=1,\dots,N$, $x_n$ is sampled uniformly from $[0,1]^d$, $\sigma\geq 0$ is the variance parameter, and are $\epsilon_n^{\nu}$ are i.i.d.  random variables with t-distribution with $\nu$ degrees of freedom.  We vary the behaviour of $f$, the dimensionality $d$, the level of noise $\sigma$, and the size of $\epsilon_n$'s extreme values captured by the heavy-tailedness parameter $\nu$, to understand how the PCNN architecture trained according to Subroutine~\ref{subroutine_GET_PARTITION} behaves.  We consider piecewise continuous $f$ of the form:
\[
f(x) = 
f_1
(Ax)I_{[0,2^{-1}r)}(Ax\bmod r) + 
f_2(Ax)I_{[2^{-1}r,r)}((Ax\bmod r)),
\]
where $r>0$ captures the rate at which the sub-patterns $f_1$ and $f_2$ interchange and  $A$ is a randomly generated $D\times 1$-matrix with i.i.d. standard Gaussian entries.  For instance, if $d=100$ then there is on average, $10$ discontinuities in each direction of the input space.  The difficulties in the non-linear regression problem~\eqref{eq_nonlinear_Regression_hard} arise from the many discontinuities of $f$, the opposing and oscillating trends of the $f_i$, the problem's dimensionality, and the heavy-tailedness of its noise.  

The previous experiment's predictive results were based on concrete dollar values; however, the outcomes of these experiments are just numerical values.  Therefore to maintain interpretability, all performance metrics will be reported as a fraction over the principle benchmark, i.e. the FFNN model's performance metric.   

Each experiment also reports the mean total number of neurons processing each of the network's inputs ($\#$Par/x), the total number of parts used in building each neural model, and the quantities $d$, $\sigma$, the number of data points, $\nu$, and $r$.  To frame the irregularity of each function being learned, each table will be accompanied by a plot of the samples from the noiseless target function $f$ (in {\color{red}{red}}) and the noisy training data (in {\color{blue}{blue}}) in the ``visualizable case'' \textit{where $d=1$ and $A=1$}.
\subsubsection{Parsing oscillations from Discontinues}
\label{s_Experiments_ss_Synthetic_sss_Jumps}

This experiment examines how well $\anames$ can learn discontinuities in the presence of distinct oscillating subpatterns.  For this, we set $f_1(u)=1+e^u\cos(u)$ and $f_2(u)=-1-u^2\cos(u)$ and vary $r$.  

\begin{figure}[H]
\centering
\begin{floatrow}
\ffigbox{%
\begin{adjustbox}{width=\columnwidth,center}
  \centerline{\includegraphics[height=0.2\textwidth]{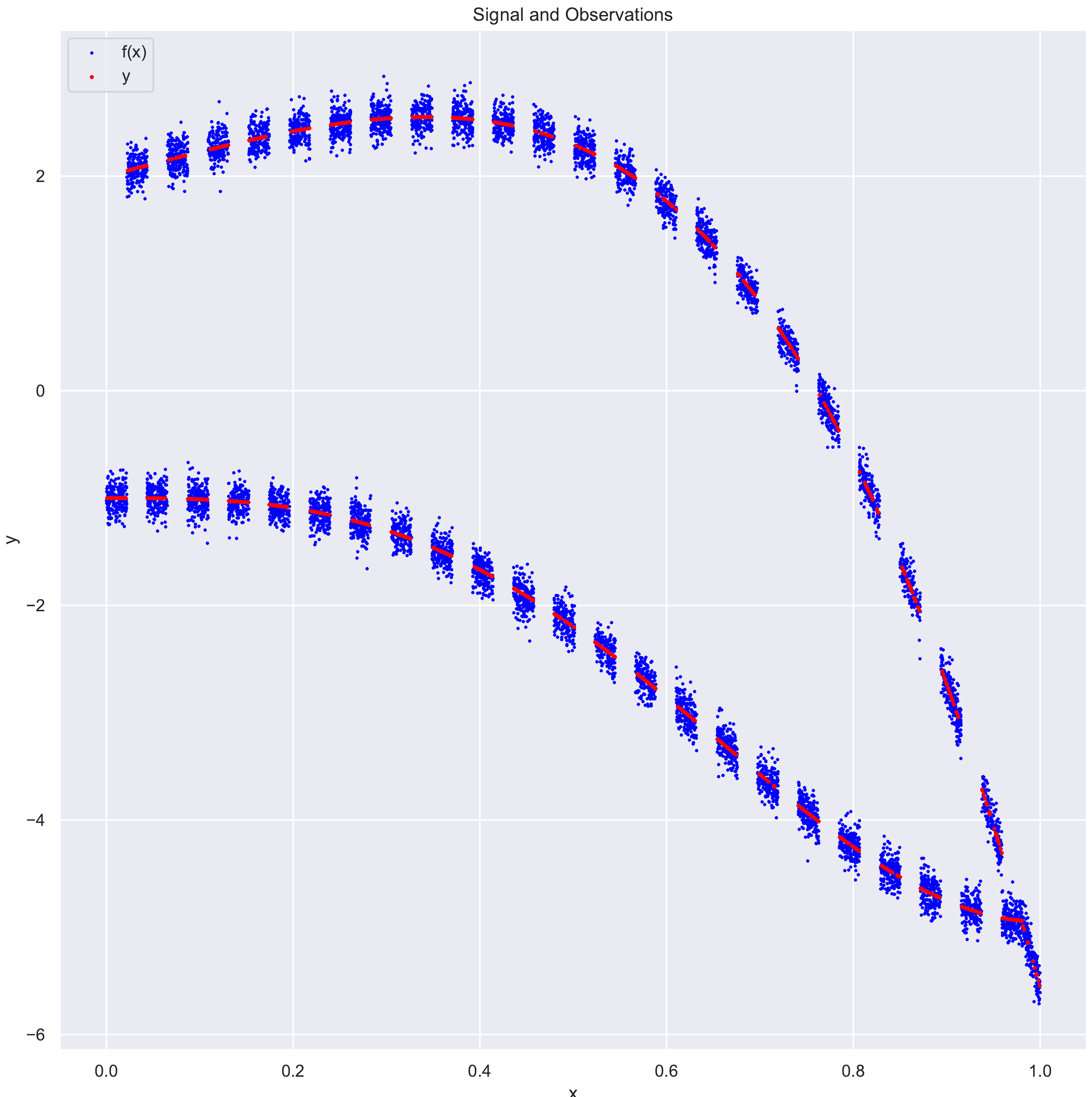}}
\end{adjustbox}
}{%
\caption{{
    \footnotesize{
    {\color{red}{Test}} and Noisy {\color{blue}{Training}} Data  (if $d=1=A$).
    }
    }}%
  \label{fig_Synthetic_Noise_1D_2}
}
\capbtabbox{%
\centering
\begin{adjustbox}{width=\columnwidth,center}
\begin{tabular}{lrrlrrr|rrrrr}
\toprule
{} & MAE &    P. Time & L. Time &      $\#$Par/x & $\#$Parts &  d &   $\sigma$ &      $\#$Data &  $\nu$ & $r$ \\
\midrule
FFNN     & 1.00e+00 & 1.00e+00 &                   - &       1.00e+00 &         1 &  100 &  0.01 &  10000 &     30 & 0.25 \\
FFNN-RND & 1.00e+04 & 1.45e-04 &                   - &       2.49e-03 &         1 &  100 &  0.01 &  10000 &     30 & 0.25 \\
FFNN-BAG & 2.96e+00 & 1.05e-02 &  1.77 &       9.95e-01 &       400 &  100 &  0.01 &  10000 &     30 & 0.25 \\
FFNN-LGT & 1.91e+00 & 1.11e-01 &  1.87 &       1.00e+00 &       400 &  100 &  0.01 &  10000 &     30 & 0.25 \\
PCNN     & 9.97e-01 & 1.93e-01 &  1.95 &       1.99e+00 &       400 &  100 &  0.01 &  10000 &     30 & 0.25 \\
\midrule
\midrule
FFNN     & 1.00e+00 & 1.00e+00 &                    - &       1.00e+00 &         1 &  10e+2 &  0.01 &  10e+5 &     30 & 0.1 \\
FFNN-RND & 9.06e+03 & 5.35e-05 &                    - &       2.49e-03 &         1 &  10e+2 &  0.01 &  10e+5 &     30 & 0.1 \\
FFNN-BAG & 5.72e+00 & 3.46e-03 &  1.58e-1 &       3.66e-01 &       147 &  10e+2 &  0.01 &  10e+5 &     30 & 0.1 \\
FFNN-LGT & 5.98e+00 & 4.30e-02 &  1.98e-1 &       3.67e-01 &       147 &  10e+2 &  0.01 &  10e+5 &     30 & 0.1 \\
PCNN     & 9.87e-01 & 7.70e-02 &  2.31e-1 &       7.31e-01 &       147 &  10e+2 &  0.01 &  10e+5 &     30 & 0.1 \\
\bottomrule
\end{tabular}
\end{adjustbox}
}{%
    \caption{{
    \footnotesize{Performance Metrics/FFNN - Varying ``Discontinuity Rate'' $(r)$.}
    }}%
    \label{tab__Sythetic__Fix_Neurons_QTrue_Varying_noiselevel}
}
\end{floatrow}
\end{figure}

As $r$ approaches $0$, $f$ contains more regions of discontinuities.  Table~\ref{tab__Sythetic__Fix_Neurons_QTrue_Varying_R} validates our hypothesis by showing that for small $r$ the feedforward networks (FFNN and FFNN-RND) have trouble capturing these discontinuities.  In contrast, even with a mostly random training procedure and subpatterns generated by relatively narrow layers, the PCNN is expressive enough to bypass these issues.  

\subsubsection{Learning from Noisy Data}
\label{s_Experiments_ss_Synthetic_sss_Noise}
Next, we examine the $\aname$'s predictive performance when there is variable levels of noise.  In this experiment, we move between the low and high ``signal-to-noise ration'' regimes by progressively increasing the variance parameter $\sigma$ and reducing the parameter $\nu$, which increases the size and frequency of extreme values of the $\epsilon_x^{\nu}$ \citep{ExtremeValueTheory_HaanLaurensFerreira2006}.  In this experiment, we consider an oscillatory pattern $f_1(u)=1 + \sin(10u)$ which is difficult to parse from noisy data due to its oscillations and a stable pattern $f_2(u)=-2 - u^2$.

\begin{figure}[ht]
\centering
\begin{floatrow}
\ffigbox{%
\begin{adjustbox}{width=\columnwidth,center}
  \centerline{\includegraphics[height=0.2\textwidth]{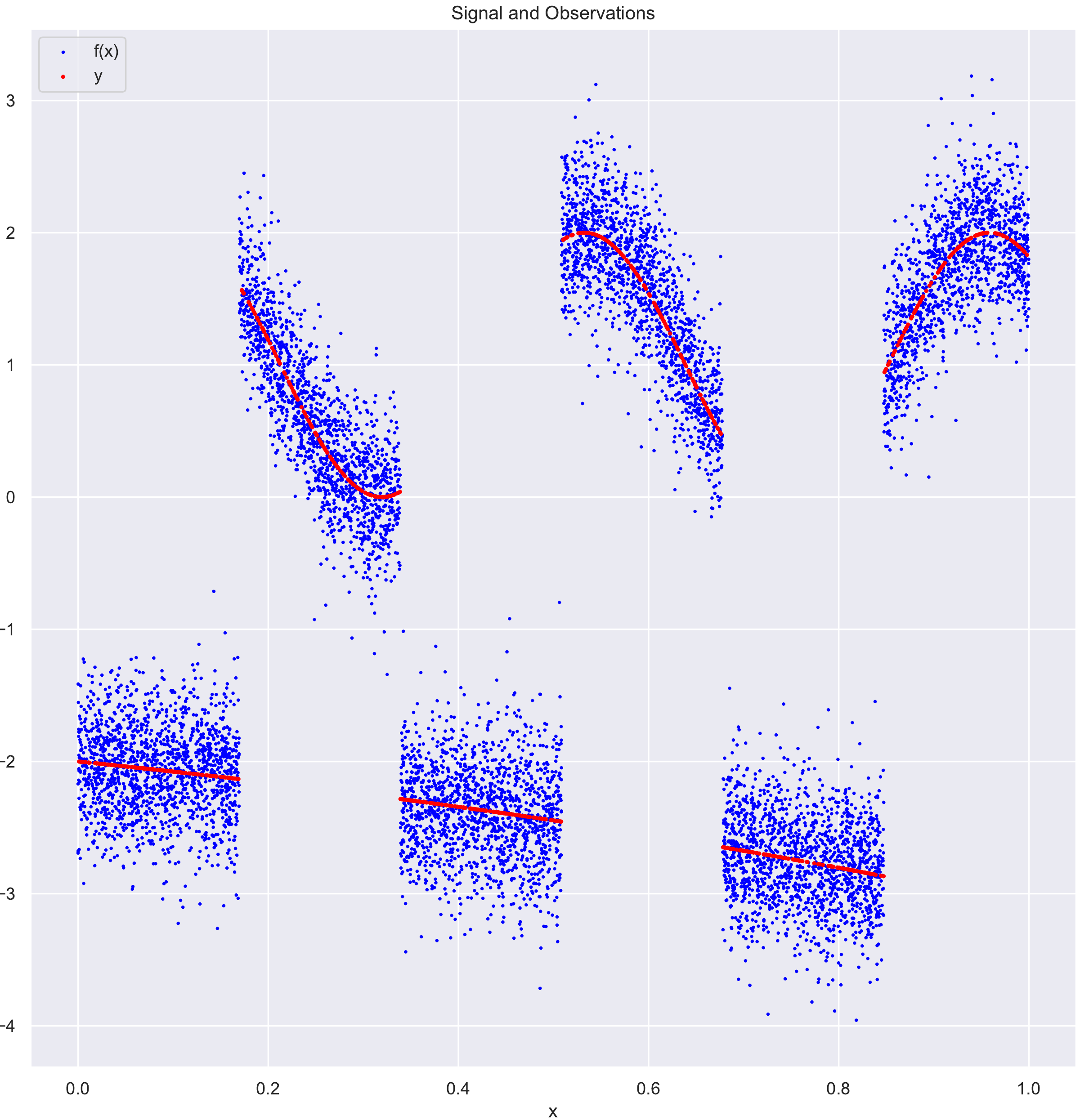}}
\end{adjustbox}
}{%
  \caption{{
    \footnotesize{
    {\color{red}{Test}} and Noisy {\color{blue}{Training}} Data  (if $d=1=A$).
    }
    }}%
  \label{fig_Synthetic_Noise_1D_3}
}
\capbtabbox{%
\begin{adjustbox}{width=\columnwidth,center}
\centering
\begin{tabular}{lrrlrr|rrrrr}
\toprule
{} & MAE &    P. Time & L. Time &       $\#$Par/x & $\#$Parts &  d &   $\sigma$ &      $\#$Data &  $\nu$ & $r$ \\
\midrule
FFNN     & 1.00e+00 & 1.00e+00 &                   - &       1.00e+00 &          1 &  10e+2 &  0.01 &  10e+5 &    15 & 0.25 \\
FFNN-RND & 1.69e+01 & 5.53e-05 &                   - &       2.49e-03 &           1 &  10e+2 &  0.01 &  10e+5 &    15 & 0.25 \\
FFNN-BAG & 1.18e+00 & 3.48e-02 &  21.74 &       3.70e+00 &       1400 &  10e+2 &  0.01 &  10e+5 &    15 & 0.25 \\
FFNN-LGT & 1.11e+00 & 3.82e-01 &  22.09 &       3.72e+00 &        1400 &  10e+2 &  0.01 &  10e+5 &    15 & 0.25 \\
PCNN     & 8.86e-01 & 1.33e-01 &  21.84 &       7.40e+00 &        1400 &  10e+2 &  0.01 &  10e+5 &    15 & 0.25 \\
\midrule
FFNN     & 1.00e+00 & 1.00e+00 &                   - &       1.00e+00 &         1 & 10e+2 &  0.1 &  10e+5 &    5 & 0.25 \\
FFNN-RND & 1.70e+01 & 7.14e-05 &                   - &       2.49e-03 &         1 & 10e+2 &  0.1 &  10e+5 &    5 & 0.25 \\
FFNN-BAG & 1.21e+00 & 3.19e-02 &  27.73 &       4.98e+00 &      2000 & 10e+2 &  0.1 &  10e+5 &    5 & 0.25 \\
FFNN-LGT & 1.19e+00 & 2.14e-01 &   27.91 &       5.00e+00 &      2000 & 10e+2 &  0.1 &  10e+5 &    5 & 0.25 \\
PCNN     & 8.48e-01 & 8.92e-02 &  27.78 &       9.95e+00 &      2000 & 10e+2 &  0.1 &  10e+5 &    5 & 0.25 \\
\bottomrule
\end{tabular}
\end{adjustbox}
}{%
 \caption{{
    \footnotesize{Performance Metrics/FFNN - Varying Signal-to-Noise Ratio $(\sigma,\nu)$.}
    }}%
\label{tab__Sythetic__Fix_Neurons_QTrue_Varying_R}
}
\end{floatrow}
\end{figure}

Table~\ref{tab__Sythetic__Fix_Neurons_QTrue_Varying_noiselevel} shows that the $\aname$ model is capable of producing reliable results even when approximating complicated functions in the presence of a high signal-to-noise ratio.  We find that the subpatterns $\hat{f}_n$ selected when noise is high tend to be narrow and the number of parts tends to be larger.  Heuristically, this means that the number of parts selected for $\aname$ tends to be large as the signal-to-noise ratio lowers and visa-versa as the signal-to-noise ratio increases.  

\subsubsection{Learning From Few Training Samples}
\label{s_Experiments_ss_Synthetic_sss_Sample_Size}
We examine the impact of small sample size on the $\aname$ model's performance.  In this experiment, we fix a relatively simple discontinuous function $f_1(u)=x$ and $f_2(u)=x^2$, and vary the size of the training dataset $(\#\mbox{Data})$.   Figure~\ref{tab__Sythetic__Fix_Neurons_QTrue_Varying_N_TraininSamples} shows that, just as with the other experiments which all used $10e+5$ training instances, when the $\#\mbox{Data}$ is reduced the $\aname$ model's predictive performance remains comparable to that of FFNN model.  

\begin{figure}[ht]
\centering
\begin{floatrow}
\ffigbox{%
\begin{adjustbox}{width=\columnwidth,center}
  \centerline{\includegraphics[height=0.2\textwidth]{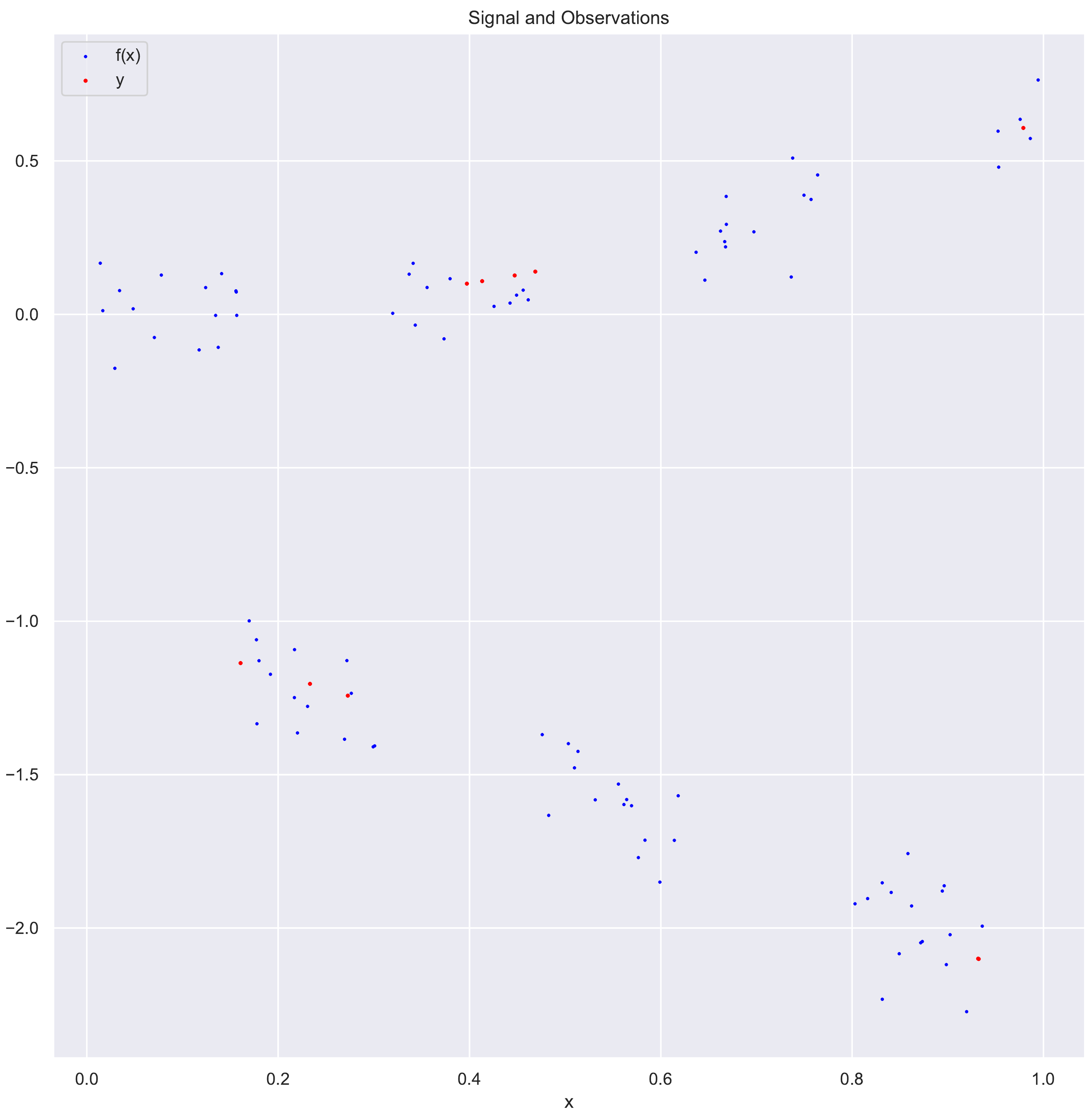}}
\end{adjustbox}
}{%
  \caption{{
    \footnotesize{
    {\color{red}{Test}} and Noisy {\color{blue}{Training}} Data  (if $d=1=A$).
    }
    }}%
  \label{fig_Synthetic_Noise_1D_1}
}
\capbtabbox{%
\begin{adjustbox}{width=\columnwidth,center}
\centering
\begin{tabular}{lrrlrr|rrrrr}
\toprule
{} & MAE &    P. Time & L. Time &     $\#$Par/x &   $\#$Parts &  d &   $\sigma$ &      $\#$Data &  $\nu$ & $r$ \\
\midrule
FFNN     & 1.00e+00 & 1.00e+00 &                       - &       1.00e+00 &           1 &  1 &  0.01 &  10e+2 &     30 & 0.25 \\
FFNN-RND & 1.28e+00 & 4.80e-04 &                       - &       4.98e-03 &           1 &  1 &  0.01 &  10e+2 &     30 & 0.25 \\
FFNN-BAG & 8.81e-01 & 2.39e-04 &  2.38e-4 &       4.98e-03 &           1 &  1 &  0.01 &  10e+2 &     30 & 0.25 \\
FFNN-LGT & 8.81e-01 & 1.02e-04 &  1.02e-4 &       4.99e-03 &           1 &  1 &  0.01 &  10e+2 &     30 & 0.25 \\
PCNN     & 8.81e-01 & 4.80e-04 &   4.80e-04 &       9.95e-03 &         1 &  1 &  0.01 &  10e+2 &     30 & 0.25 \\
\midrule
FFNN     & 1.00e+00 & 1.00e+00 &                  - &       1.00e+00 &          1 &  1 &  0.01 &  1000 &     30 & 0.25 \\
FFNN-RND & 1.19e+06 & 1.40e-04 &                  - &       6.23e-04 &           1 &  1 &  0.01 &  1000 &     30 & 0.25 \\
FFNN-BAG & 3.54e+11 & 1.64e-02 &  4.02 &       4.99e-01 &         1 &  1 &  0.01 &  1000 &     30 & 0.25 \\
FFNN-LGT & 1.28e+00 & 0.01 &  4.01 &       5.01e-01 &         800 &  1 &  0.01 &  1000 &     30 & 0.25 \\
PCNN     & 1.02e+00 & 1.09e-01 &  4.11 &       9.98e-01 &         800 &  1 &  0.01 &  1000 &     30 & 0.25 \\
\bottomrule
\end{tabular}
\end{adjustbox}
}{%
\caption{{
    \footnotesize{Performance Metrics/FFNN - Varying $\#$Train. Data. $(\#\mbox{Data})$.}
    }}%
\label{tab__Sythetic__Fix_Neurons_QTrue_Varying_N_TraininSamples}
}
\end{floatrow}
\end{figure}

Table~\ref{tab__Sythetic__Fix_Neurons_QTrue_Varying_N_TraininSamples} shows when few training data points are available then fewer parts are required, because there is enough space for a classical FFNN (i.e.: a $\aname$ with a single part) to interpolate the data even if $f$ is discontinuous.  Nevertheless, as the size of the training dataset increases the FFNN does not have as much flexibility to meander through the training data as the ${\aname}$ model does, as is seen both in Table~\ref{tab__Sythetic__Fix_Neurons_QTrue_Varying_N_TraininSamples} and all the previous experiments.  

We see that the experiments which used a stochastic optimization method for the subroutine $\mbox{GET\_FFNN}$ tended to use much fewer partitions than the experiments which randomized the involved FFNN's hidden layers and only trained their final layer.  This validates the theoretical results derived in Theorems~\ref{thrm_improvement_largeness}-\ref{thrm_UAT_PC} guaranteeing that the PCNNs are more expressive than their FFNN counterparts.

\section{Conclusion}\label{s_Conclusion}
We introduced a new deep neural model capable of uniformly approximating a large class of discontinuous functions.  We provided theoretical universal approximation guarantees for each of the $\aname$'s parts and the deep neural models as a whole.  We showed that the $\aname$'s sub-pattern structure could be exploited to decouple the model's training procedure, thus, avoided passing gradient updates through the model's non-differentiable $I_{(\gamma,1]}$ unit.  In this way, the $\aname$ offers the best of both worlds.  It is as expressive as deep neural networks with discontinuous activation functions while simultaneously being trainable with (stochastic) gradient-descent algorithms like feed-forward networks with differentiable activation functions.

\appendix
\section{Proof of the Negative Result for FFNNs}
The first section in our appendix contains the proofs of results illustrating the limitations of FFNNs in approximating piecewise continuous functions.  
\begin{proof}[{Proof of Proposition~\ref{prop_Disc_Gap}}]
By definition (i.e.,~\eqref{defn_metric_step2}) we have that $\inf_{R(G)=g}\,|N(f)-G|\leq D_{PC}(f|\hat{f})$.  Now, since $1< N(g)$, then 
\begin{equation}
    |N(f)-N(g)|\leq \inf_{R(G)=g}\,|N(f)-G| \leq D_{PC}(f|\hat{f})
    \label{eq_PC_divergence_lower_bound}
    .
\end{equation}
Since $\sigma \in C(\rr)$, then $\NN[d,D]\subseteq C(X,\rrD)$ and since $X$ is compact then for each $\hat{f}\in \NN[d,D]$, $(\hat{f},I_{X})$ is a representation of $\hat{f}$.  Hence, $N(f)=1$ and therefore:
\begin{equation}
    1\leq N(f)-1=|N(f)-N(g)|
    \label{eq_PC_divergence_lower_bound_2}
    .
\end{equation}
Plugging~\eqref{eq_PC_divergence_lower_bound_2} into~\eqref{eq_PC_divergence_lower_bound} yields the desired lower-bound on $D_{PC}(f,|\hat{f})$ for each $\hat{f}\in \NN[d,D]$.  Lastly, Example~\ref{ex_gap_cnt_disc2} gives the last claim.
\end{proof}
Next we turn our attention to the proof of Proposition~\ref{prop_non_universal_FFNN_cnt}.  Proposition~\ref{prop_non_universal_FFNN_cnt} implied by, and is a special case of, the following general result.  
\begin{ass}[Regularity of Partition I]\label{ass_appendix_badness} 
The partition $\left\{ K_n\right\}_{n=1}^{N}$ satisfies the following:
\begin{enumerate}[(i)]
	\item \textbf{Non-Triviality:} For each $n,m\leq N$, $\operatorname{int}(K_n)\neq \emptyset$ and if $n\neq m$ then $\operatorname{int}(K_n)\cap \operatorname{int}(K_m)=\emptyset$.  
	\item \textbf{Pairwise Connectedness:} For every $n\leq N$ there is some $\tilde{n}<N$ with $\tilde{n}\neq n$ and $K_n\cap K_{\tilde{n}}\neq \emptyset$,
			\item \textbf{Regularity: } For $n\leq N$ we have $\overline{\intt[K_n]}=K_n$.  
		\end{enumerate}
	\end{ass}
	\begin{prop}\label{prop_non_universal_FFNN_cnt_appendix_version_extended}
	Let $\{K_n\}_{n=1}^N$ be a partition of $X$ satisfying Assumption~\ref{ass_appendix_badness} and let $\sigma \in C(\rr)$.  For every $\delta>0$, there exists some $f_{\delta}$ of the form~\eqref{eq_sub_patterns} with $\operatorname{supp}(f_{\delta})\subseteq K_1\cup K_n$, for some $1<n\leq N$, such that 
		$$
			\inf_{\underset{\operatorname{supp}(f)\subseteq K_1\cup K_n}{f \in \NN}} \sup_{x \in \bigcup_{k=1}^{N} K_k} \, 
			\left\|
			f(x)
			-
			f_{\delta}(x)
			\right\|\geq \delta.
		$$
	\end{prop}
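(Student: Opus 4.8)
The plan is to make the right-hand infimum impossible to push below $\delta$ by exhibiting a two-valued target $f_\delta$ whose jump sits on the common boundary of $K_1$ and one of the other parts, and then deriving a contradiction by evaluating any would-be $\delta$-approximant at a single point. First I would apply the pairwise connectedness clause, Assumption~\ref{ass_appendix_badness}(ii), to the index $1$ to pick some $1<n\le N$ with $K_1\cap K_n\neq\emptyset$, and fix $x_0\in K_1\cap K_n$. Then, fixing a unit vector $\mathbf{e}\in\rrD$, I would take $f_\delta$ to be the function of the form~\eqref{eq_sub_patterns} with $n$-th subpattern the constant map $3\delta\,\mathbf{e}$ and all other subpatterns equal to $0$; concretely $f_\delta=3\delta\,\mathbf{e}\,I_{K_n}$, whose support is $K_n\subseteq K_1\cup K_n$, as the statement demands.

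Next I would assume, for contradiction, that some $f\in\NN$ with $\operatorname{supp}(f)\subseteq K_1\cup K_n$ satisfies $\sup_{x\in\bigcup_{k=1}^{N}K_k}\|f(x)-f_\delta(x)\|<\delta$, and extract two incompatible bounds at $x_0$. On $\operatorname{int}(K_n)$ we have $f_\delta\equiv 3\delta\,\mathbf{e}$, so $\|f(x)-3\delta\,\mathbf{e}\|<\delta$ there; by continuity of $f$ and the regularity clause (iii), which gives $\overline{\operatorname{int}(K_n)}=K_n$, this passes to all of $K_n$, whence $\|f(x_0)\|\geq\|3\delta\,\mathbf{e}\|-\delta=2\delta$ by the reverse triangle inequality. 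On the other hand, on the open set $U\triangleq\operatorname{int}(K_1)\setminus K_n\subseteq K_1$ we have $f_\delta\equiv 0$, so $\|f\|<\delta$ on $U$; granting $x_0\in\overline{U}$, continuity forces $\|f(x_0)\|\leq\delta$, which contradicts $\|f(x_0)\|\geq 2\delta$ since $\delta>0$. Taking the infimum over all admissible $f$ then yields the asserted inequality.

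The one genuinely geometric step, which I expect to be the main obstacle, is the claim $x_0\in\overline{\operatorname{int}(K_1)\setminus K_n}$, i.e.\ that interior points of $K_1$ accumulate at $x_0$ while staying off $K_n$. Here is the argument I would give: by non-triviality (i) the interiors $\operatorname{int}(K_1)$ and $\operatorname{int}(K_n)$ are disjoint, so $\operatorname{int}(K_1)\cap K_n\subseteq K_n\setminus\operatorname{int}(K_n)=\partial K_n$; since $\partial K_n$ is closed and has empty interior (a nonempty open subset of the closed set $K_n$ would lie inside $\operatorname{int}(K_n)$, which $\partial K_n$ avoids), it is nowhere dense, so $\operatorname{int}(K_1)\cap K_n$ is a relatively closed nowhere dense subset of $\operatorname{int}(K_1)$ and $U=\operatorname{int}(K_1)\setminus K_n$ is open and dense in $\operatorname{int}(K_1)$; hence $\overline{U}\supseteq\overline{\operatorname{int}(K_1)}=K_1\ni x_0$, using regularity (iii) of $K_1$. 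This is precisely where clauses (i) and (iii)---for both $K_1$ and $K_n$---are used together; everything else is continuity and the triangle inequality.

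Finally, I would remark that the support restriction on the competing networks $f$ is never actually used in the estimate above, so the same bound holds for the (larger) infimum over all of $\NN$; in particular this recovers Proposition~\ref{prop_non_universal_FFNN_cnt} as the special case $X=[0,1]$, $K_1=[0,\tfrac12]$, $K_2=[\tfrac12,1]$, for which Assumption~\ref{ass_appendix_badness} holds.
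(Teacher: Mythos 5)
Your proof is correct and follows the same basic strategy as the paper's: use Assumption~\ref{ass_appendix_badness}(ii) to pick an index $n$ with $K_1\cap K_n\neq\emptyset$, build a locally constant target of the form~\eqref{eq_sub_patterns} whose values on the two parts are far apart, and derive a contradiction at a point $x_0\in K_1\cap K_n$ from the continuity of any candidate $f\in\NN$ together with the regularity clause $\overline{\intt[K_i]}=K_i$. The executions differ in one substantive place. The paper's target is constant on each of $\intt[K_1]$ and $\intt[K_n]$ (it is built from indicators of the interiors), so $x_0\in\overline{\intt[K_1]}\cap\overline{\intt[K_n]}$ immediately pins $f(x_0)$ near both constants; your target $3\delta\,\mathbf{e}\,I_{K_n}$ is instead supported on the single compact part $K_n$ (so it is literally of the form~\eqref{eq_sub_patterns}), but it is then not constant on $\intt[K_1]$, and you must supply the extra topological step that $x_0\in\overline{\intt[K_1]\setminus K_n}$. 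Your argument for that step --- $\intt[K_1]\cap K_n\subseteq \partial K_n$ is closed with empty interior, hence nowhere dense, so $\intt[K_1]\setminus K_n$ is open and dense in $\intt[K_1]$, whose closure is $K_1\ni x_0$ by regularity --- is correct and is exactly where clauses (i) and (iii) enter. A further merit of your version is that the separation of the two values ($3\delta$ versus $0$) is tracked explicitly and yields the incompatible bounds $\|f(x_0)\|\geq 2\delta$ and $\|f(x_0)\|\leq\delta$; the paper's choice $\|c_1-c_n\|=\delta$ leaves the two open $\delta$-balls overlapping, so its disjointness step really requires the constants to be $2\delta$-separated, a bookkeeping issue your construction avoids. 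Your closing remark is also accurate: the support restriction on the competitors is never used, so the same lower bound holds for the infimum over all of $\NN$, which is what is needed to deduce Proposition~\ref{prop_non_universal_FFNN_cnt}.
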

	In what follows, for any $d\in \nn_+$, $x \in \rrd$, and $\delta>0$ we use $\operatorname{Ball}(x,\delta)$ to denote the set $\operatorname{Ball}(x,\delta)\triangleq \left\{z\in\rrd:\,\|x-z\|<\delta\right\}$.  
		\begin{proof}[{Proof of Proposition~\ref{prop_non_universal_FFNN_cnt_appendix_version_extended}}]
			By Assumption~\ref{ass_appendix_badness} (ii), there exists some $1<n<N$ such that $K_1\cap K_n\neq \emptyset$ and by Assumption~\ref{ass_appendix_badness} (i) we have that $\operatorname{int}(K_1)\cap \operatorname{int}(K_n)=\emptyset$.  Let $f_{\delta}$ be defined by
			$$
			f_{\delta} = -2c_1 I_{\operatorname{int}(K_1)} + 2c_n I_{\operatorname{int}(K_n)},
			$$
			where $c_1,c_n \in \rrD$ are such that $\|c_1-c_n\|=\delta$.  
			Thus, for every $f \in \NN$ we compute
			\begin{equation}
			\sup_{x \in K} \left\|f(x)-f_{\delta}(x)\right\|
			=
			\max_{i=1,n}
			\sup_{x \in \operatorname{int}(K_i)} \left\|f(x)-f_{\delta}(x)\right\|
			\geq 
			\max_{i=1,n}
			\sup_{x \in \operatorname{int}(K_i)} \left\|f(x)-c_i\right\|
			\label{proof_prop_non_universal_FFNN_cnt}
			.
			\end{equation}
			Let $f \in \NN$, then there exist composable affine functions $W_J,\dots,W_1$ such that $f=W_J\circ \sigma\bullet \dots \circ \sigma \bullet W_1$.  Since $\sigma$ is continuous, each $W_j$ is continuous, and since the composition of continuous functions is again continuous then $f$ is continuous.  Since $f$ is continuous, then its supremum on any bounded open subset $B\subseteq \rrd$ is equal to the maximum of $f$ over the closure of $B$ in $\rrd$; hence we refine~\eqref{proof_prop_non_universal_FFNN_cnt} to
			\begin{equation}
			\sup_{x \in K} \left\|f(x)-f_{\delta}(x)\right\|
			=
			\max_{i=1,n}
			\sup_{x \in \operatorname{int}(K_i)} \left\|f(x)-c_i\right\|
			=
			\max_{i=1,n}
			\max_{x \in K_i} \left\|f(x)-c_i\right\|
			\label{proof_prop_non_universal_FFNN_cnt_2}
			.
			\end{equation}
			Suppose that $\sup_{x \in K} \|f(x)-f_{\delta}(x)\|<\delta$ for some $f \in \NN$.  Then~\eqref{proof_prop_non_universal_FFNN_cnt_2} implies that $f(x)\in B_{\delta}(c_i)$ for $i=1,n$ but by definition of $c_1$ and $c_n$ we had that $\|c_1-c_n\|=\delta$ and therefore $B_{\delta}(c_1)\cap B_{\delta}(c_n)=\emptyset$, a contradiction.  Hence, there does not exist some $f \in \NN$ satisfying $\sup_{x \in K} \|f(x)-f_{\delta}(x)\|<\delta$.  
		\end{proof}
\section{Proof of Supporting Results}\label{a_PROOFSSUPPORTING}
\begin{proof}[{Proof of Lemma~\ref{lem_boundedness_of_Architopes}}]
If $f \in \PCNN{\fff}$, then by definition there exist $f_1,\dots,f_N \in \fff$ and some $K_1,\dots,K_N\subseteq \xxx$, such that $f=\sum_{n=1}^N f_n I_{K_n}$.  Since $\fff\subseteq C(X,\rrD)$, since $X$ is compact, and since every continuous function achieves its maximum on a compact space then, for every $n=1,\dots, N$, $\sup_{x\in X}\|f_n(x)\|<\infty$.  Therefore, we compute
$$
\begin{aligned}
\sup_{x \in X} \left\|
\sum_{n=1}^N f_n(x) I_{K_n}(x)
\right\| 
\leq & 
\sup_{x \in X} \sum_{n=1}^N \left\|
f_n(x) I_{K_n}(x)
\right\| \\
= & \sup_{x \in X} \sum_{n=1}^N I_{K_n}(x)\left\|
f_n(x) 
\right\| \\
\leq & \sup_{x \in X} \sum_{n=1}^N 1\left\|
f_n(x) 
\right\| <\infty.
\end{aligned}
$$
Thus, $f$ is bounded on $X$ if $f \in \PCNN{\fff}$.  
\end{proof}
\begin{proof}[{Proof of Proposition~\ref{prop_existence}}]
If $f\in \mathcal{B}(X,\rrD)$  admits a representation of the form $\sum_{n=1}^{N'} f_nI_{K_n}$ for some ${N'}\in \nn_+$, some $f_1,\dots,f_{N'}\in C(X,\rrD)$ and some $K_1,\dots,K_{N'}\in \text{Comp}(X)$ then, 
$$
N(f)= \inf\{N\in :\, \exists ((f_n,K_n))_{n=1}^N\in 
\cup_{k\in \nn_+,\, k\leq {N'}}\, [C(X,\rrD)\times \text{Comp}(X)]^k:\, f=\rrr\left(((f_n,K_n))_{n=1}^N\right)
\}.
$$
Since the set $\{1,\dots,N'\}$ is finite, then it admits a minimum and therefore $N(f)\in \nn_+$ exists.  
\end{proof}

\begin{proof}[{Proof of Proposition~\ref{prop_decoupling_lemma}}]
By definition, $N(f)=\# (\hat{f}_n,\hat{K}_n))_{n=1}^{N(f)}$.  Therefore,~\eqref{defn_metric_step2} is upper-bounded using $d_{\text{Step 1}}$ of~\eqref{defn_metric_step1} via:
\begin{align}
\allowdisplaybreaks
\nonumber
    D_{PC}\left(f|R\left(
((\hat{f}_n,\hat{K}_n))_{n=1}^{N(f)}
\right)\right) \leq &
\inf_{R(((\tilde{f}_n,\tilde{K}_n))_{n=1}^{N})=f,\, N = N(f)}
        d_{\text{Step 1}}\left(
        ((\tilde{f}_n,\tilde{K}_n))_{n=1}^{N}
            ,
        ((\hat{f}_n,\hat{K}_n))_{n=1}^{N}
    \right)
    \\
    \label{eq_proposition_prop_decoupling_lemma_non_infinity_observation}
    = &
    \inf_{R(((\tilde{f}_n,\tilde{K}_n))_{n=1}^{N})=f,\, N = N(f)}
    \max_{1\leq n\leq N}\max\left\{
        \|\tilde{f}_n-\hat{f}_n\|_{\infty}
            ,
        d_H(\tilde{K}_n,\hat{K}_n)
    \right\}
    \\
    \leq &
    \max_{1\leq n\leq N(f)}\max\left\{
        \|f_n-\hat{f}_n\|_{\infty}
            ,
        d_H(K_n,\hat{K}_n)
    \right\}
    \label{eq_proposition_prop_decoupling_lemma}
    .
\end{align}
Note that, we have use the fact that $f \in PC(X,\rrD)$ to conclude that $N(f)<\infty$ and therefore the infimum in~\eqref{eq_proposition_prop_decoupling_lemma_non_infinity_observation} is not vacuously $\infty$ and we have used the fact that $\#((\hat{f},\hat{K}_n))_{n=1}^{N(f)}=N(f)$ to conclude that each $d_{\text{Step 1}}\left(
        ((\tilde{f}_n,\tilde{K}_n))_{n=1}^{N}
            ,
        ((\hat{f}_n,\hat{K}_n))_{n=1}^{N}
    \right)<\infty 
$ in~\eqref{eq_proposition_prop_decoupling_lemma_non_infinity_observation} is finite.  Next, we observes that for each $N\in \nn_+$ and each $x \in \rrflex{N}$ it follows that $\max_{1\leq n\leq N}|x_n|\leq \sum_{n=1}^{N}|x_n|$; thus, we upper-bound on the right-hand side of~\eqref{eq_proposition_prop_decoupling_lemma} as follows:
\begin{align}
\allowdisplaybreaks
\nonumber
    D_{PC}\left(f|R\left(
((\hat{f}_n,\hat{K}_n))_{n=1}^{{N(f)}}
\right)\right) \leq &
    \max_{1\leq n\leq {N(f)}}\max\left\{
        \|f_n-\hat{f}_n\|_{\infty}
            ,
        d_H(K_n,\hat{K}_n)
    \right\}\\
    \nonumber
       \leq & 
    \sum_{n=1}^{N(f)}
    \max\left\{
        \|f_n-\hat{f}_n\|_{\infty}
          ,
        d_H(K_n,\hat{K}_n)
    \right\}
    \\
    \leq & 
    \sum_{n=1}^{N(f)}
    \left(
        \|f_n-\hat{f}_n\|_{\infty}
          +
        d_H(K_n,\hat{K}_n)
    \right)
    \label{eq_proposition_prop_decoupling_lemma_2}
    .
\end{align}
Thus,~\eqref{eq_proposition_prop_decoupling_lemma_2} implies that the estimate~\eqref{eq_prop_decoupling_lemma} holds.  
\end{proof}
\section{Proofs of Main Results}\label{s_proofs_main_results}\label{a_PROOFSMAIN}
This appendix contains proofs of the paper's main results.   We draw the reader’s attention to the fact that many of the paper’s results are interdependent and, as such, this appendix organizes our results' proofs in their logical order and that this order may differ from the one used in the paper exposition.  
\subsection{Proofs concerning the closure of the $\anames$ in $\mathcal{B}(X,\rr^D)$}\label{ss_Appendix_proof_Theorem_Expressive}
\begin{proof}[Proof and Discussion of Theorem~\ref{thrm_improvement}]
Define the subset $\mathcal{V}
\subseteq \mathcal{B}(X,\rrD)$ by, $f \in \mathcal{V}$ 
if and only if there exist some $f_1,f_2 \in C(X,\rrD)$ for which
\begin{equation}
    f = (I_{(\frac{1}{2},1]}\circ \sigma_{\operatorname{sigmoid}}\circ g)f_1 + f_2
    \label{eq_representation_quick_and_easy}
    ;
\end{equation}
where $g \in \ggg$ is the same as in part (i).  Let $f,\tilde{f}\in \mathcal{V}$ and $k\in \rr$.  Then, 
$$
f+k\tilde{f} = 
(I_{(\frac{1}{2},1]}\circ \sigma_{\operatorname{sigmoid}}\circ g)(f_1+k\tilde{f}_1) + (f_2+k\tilde{f}_2) \in \mathcal{V};
$$
where, mutatis mutandis, we have represented $\tilde{f}$ in the form~\eqref{eq_representation_quick_and_easy}.
Hence, $\mathcal{V}$ is a vector space containing $\overline{\fff}$ as a proper subspace; where the latter claim follows by taking $f_2 =0$ in the representation~\eqref{eq_representation_quick_and_easy} and $f_1\in \overline{\fff}=C(X,\rrD)$ arbitrary.  
In particular, by construction, $\mathcal{V}$ is a subspace of $\mathcal{B}(X,\rrD)$.  By the Uniform Limit Theorem \citep[Theorem 21.6]{munkres2014topology} $C(X,\rrD)$ is closed with respect to $d_{\infty}$; hence, $C(X,\rrD)$ a closed proper subset of $\mathcal{V}$.  Hence, it is not dense therein.  Applying \citep[Excersize 11.4.3 (f)]{narici2010topological} we conclude that $C(X,\rrD)$ is nowhere dense in $\mathcal{V}$.  It is therefore sufficient to show that $\mathcal{V}\subseteq \overline{\PCNN{\fff}}$ to conclude that $C(X,\rrD)$ is nowhere dense $\overline{\PCNN{\fff}}$, and therefore, $\fff$ is nowhere dense in $\overline{\PCNN{\fff}}$.  

Let $f \in \mathcal{V}$, which we represent according to~\ref{eq_representation_quick_and_easy}, and let $\epsilon>0$.  By the hypothesized density of $\fff$ in $C(X,\rrD)$ there exist some $\hat{f}^{\epsilon}_1,\hat{f}^{\epsilon}_2\in \fff$ satisfying 
\begin{equation}
    \max_{i=1,2}\,\|f_i-\hat{f}^{\epsilon}_i\|_{\infty}<\frac{\epsilon}{2}
    \label{eq_definition_estimate}
    .
\end{equation}
Therefore, from~\eqref{eq_definition_estimate} and~\eqref{eq_representation_quick_and_easy} we compute the estimate
$$
\begin{aligned}
\left\|
f - 
\left(
(I_{(\frac{1}{2},1]}\circ \sigma_{\operatorname{sigmoid}}\circ g)\hat{f}_1^{\epsilon} + \hat{f}_2^{\epsilon}
\right)
\right\|_{\infty}
= &
\left\|
(I_{(\frac{1}{2},1]}\circ \sigma_{\operatorname{sigmoid}}\circ g)(f_1-\hat{f}_1^{\epsilon})
+
\left(f_2 - \hat{f}_2^{\epsilon}\right)
\right\|_{\infty}\\
=& \left\|
(I_{(\frac{1}{2},1]}\circ \sigma_{\operatorname{sigmoid}}\circ g)f_1-\hat{f}_1^{\epsilon}
\right\|
+
\left\|
f_2 - \hat{f}_2^{\epsilon}
\right\|_{\infty}\\
= &
\left|I_{(\frac{1}{2},1]}\circ \sigma_{\operatorname{sigmoid}}\circ g\right|
\left\|
f_1-\hat{f}_1^{\epsilon}
\right\|
+
\left\|
f_2 - \hat{f}_2^{\epsilon}
\right\|_{\infty}
\\ 
\leq &
1
\frac{\epsilon}{2}+
\frac{\epsilon}{2} = \epsilon.
\end{aligned}
$$
Hence, $\mathcal{V}\subseteq \overline{\PCNN{\fff}}$.  The conclude the proof.
\end{proof}
	
In what follows, we denote $d_{\infty}(f,g)\triangleq 
\left\|
f-g
\right\|$.
%
\begin{proof}[{Proof of Theorem~\ref{thrm_improvement_largeness}}]
Since $\fff$ is dense in $C(X,\rrD)$, there exists an $f \in \fff$ for which $f_n(x)>1$ for each $n=1,\dots,D$ and every $x \in X$.  Similarly, since $\ggg$ is dense in $C(X,\rr)$ then it must contain some non-constant $g$ for which $g(x)>0$ for all $x \in X$. Since $g$ is non-constant and $X$ is path-connected then the intermediate value theorem (\citep[Theorem 24.3]{munkres2014topology}) implies that there exist a curve $\gamma:(0,1)\rightarrow X$ such that 
\begin{equation}
    g\circ \gamma(t_1)< g\circ \gamma(t_2)
\label{eq_gap_1}
,
\end{equation}
for each $0<t_1<t_2<1$.  Since  $\sigma_{\operatorname{sigmoid}}$ is injective, then~\eqref{eq_gap_1} implies that
\begin{equation}
    0
        <
    \sigma_{\operatorname{sigmoid}}(g\circ \gamma(t_1))
        <
    \sigma_{\operatorname{sigmoid}}(g\circ \gamma(t_2))
        <
    1
\label{eq_inequality}
.
\end{equation}
Consider the subset $\mathcal{Z}\triangleq \left\{
(I_{(\sigma_{\operatorname{sigmoid}}(g\circ \gamma(t_1),1]}\circ \sigma_{\operatorname{sigmoid}}\circ g)f
\right\}\subseteq \PCNN{\fff}$. 

By~\eqref{eq_inequality}, for every $0<t_1<t_2<1$ we compute:
$$
\begin{aligned}
& d_{\infty}\left(
(I_{(\sigma_{\operatorname{sigmoid}}(g\circ \gamma(t_1),1]}\circ \sigma_{\operatorname{sigmoid}}\circ g)f,
(I_{(\sigma_{\operatorname{sigmoid}}(g\circ \gamma(t_2)),1]}\circ \sigma_{\operatorname{sigmoid}}\circ g)f
\right)
\\
= &
\sup_{x \in X}
\left\|
(I_{(\sigma_{\operatorname{sigmoid}}(g\circ \gamma(t_1)),1]}\circ \sigma_{\operatorname{sigmoid}}\circ g(x))f(x)
-
(I_{(\sigma_{\operatorname{sigmoid}}(g\circ \gamma(t_2)),1]}\circ \sigma_{\operatorname{sigmoid}}\circ g(x))f(x)
\right\|\\
= &
\sup_{t \in (0,1)}
\left\|
(I_{(\sigma_{\operatorname{sigmoid}}(g\circ \gamma(t_1)),1]}\circ \sigma_{\operatorname{sigmoid}}\circ g(
\gamma(t)
))f(x)
-
(I_{(\sigma_{\operatorname{sigmoid}}(g\circ \gamma(t_2)),1]}\circ \sigma_{\operatorname{sigmoid}}\circ g(
\gamma(t)
))f(x)
\right\|\\
= &
\left\|
(I_{(\sigma_{\operatorname{sigmoid}}(g\circ \gamma(t_1)),1]}\circ \sigma_{\operatorname{sigmoid}}\circ g(
\gamma(t_1)
))f\left(\gamma(t_1)\right)
-
(I_{(\sigma_{\operatorname{sigmoid}}(g\circ \gamma(t_2)),1]}\circ \sigma_{\operatorname{sigmoid}}\circ g(
\gamma(t_1)
))f\left(\gamma(t_1)\right)
\right\|\\
>& 1.
\end{aligned}
$$
Therefore, $\mathcal{Z}$ is an uncountable discrete subspace of $\overline{\PCNN{\fff}}$.  Hence, it cannot contain a countable dense subset (since each singleton in $\mathcal{Z}$ is an open subset of $\mathcal{Z}$).  In other words, $\mathcal{Z}$ is not separable.  Since every subset of a separable space it is itself separable, then we conclude that $\overline{\PCNN{\fff}}$ is not separable.  That is, (i) holds.  

Next, we show (ii).  Lastly, we show that $C(X,\rrD)$ is a subset of $\overline{\PCNN{\fff}}$.  Since $\ggg$ contains the zero-function $\zeta(x)=0$, for all $x \in X$, then for every $f \in \fff$, the function $\tilde{f}\triangleq \left(I_{(\frac{3}{4},1]}\circ \sigma_{\operatorname{sigmoid}}\circ \zeta\right)f$ belongs to $\PCNN{\fff}$.  However, by construction
$$
\left(I_{(\frac{3}{4},1]}\circ \sigma_{\operatorname{sigmoid}}\circ \zeta(x)\right) =1,
$$
for all $x \in X$.  Therefore, 
$
\tilde{f}\left(I_{(\frac{3}{4},1]}\circ \sigma_{\operatorname{sigmoid}}\circ \zeta\right)f = 1f =f.
$  
Hence, $\fff\subseteq \PCNN{\fff}$.  Since density is transitive, then it follows that $C(X,\rrd)\subseteq \overline{\PCNN{\fff}}$.  However, any function in $\mathcal{Z}$ does not belong to $C(X,\rrD)$ since each function in $\mathcal{Z}$ is discontinuous.  Thus, $C(X,\rrD)$ is a proper subset of $\overline{\PCNN{\fff}}$.  This concludes the proof.  
\end{proof}

\subsection{Proof of Theorem~\ref{thm_generic_sets}}\label{ss_Proofs_set_valued}
The proof of our main universal approximation Theorem (i.e., Theorem~\ref{thrm_UAT_PC}) relies on approximating the parts of a minimal representation of the target function $f\in PC(X,\rrD)$ using the decoupled upper-bound of Proposition~\ref{prop_decoupling_lemma}.  The $f_1,\dots,f_{N(f)}$ in any such minimal representation are approximated using a classical Universal Approximation Theorem, such as \cite{kidger2020universal}.  Next, the $K_1,\dots,K_{N(f)}$ is such a minimal representation are approximated using the following next Lemma (which quantitatively refines Theorem~\ref{thm_generic_sets}) guarantees that the \textit{deep zero sets} (defined in~\eqref{eq_deep_zero_sets_definition}) are dense in $\text{Comp}(X)$ with respect to the Hausdorff metric.  The following is a quantitative refinement of Theorem~\ref{thm_generic_sets}.  
\begin{thrm}[Deep Zero-Sets are Universal: Quantitative Version]\label{lem_generic_sets}
Let $\sigma$ satisfy Assumption~\ref{ass_KL}, $\emptyset\neq K_1,\dots,K_N\subseteq X$ be compact, $0<\epsilon$, and set $\gamma\triangleq \sigma_{\text{sigmoid}}^{-1}(2^{-1}\epsilon)$.  There is a FFNN $\hat{c}\in {NN}_{d,N}^{\sigma}$ of width at-most $d+N+2$ for which the deep zero-sets
$
\hat{K}_n\triangleq \left\{x\in X:\, I_{(\gamma,1]}\circ \sigma_{\operatorname{sigmoid}}\circ \hat{c}(x)_n\right\}
$
$$
\max_{n=1,\dots,N}\,
d_{H}(K_n,\hat{K}_n)
    \leq 
\epsilon.
$$
In particular, if $X=[0,1]^d$ then:
\begin{enumerate}[(i)]
    \item if $\sigma=\max\{0,x\}$, then $\hat{f}$ can be implemented by a FFNN of constant width $2d+10$ and depth $\mathcal{O}\left(
    (b\epsilon)^{-d+1}
    \right)$,
    \item if $\sigma \in C^{\infty}(\rr)$ then $\hat{f}$ can be implemented by a narrow FFNN with constant width $2d+2$ and depth $\mathcal{O}\left(b\epsilon^{-2d}\right)$,
\end{enumerate}
where in both cases (i) and (ii), $b>0$ is a (possibly different) constant which is independent of $\sigma$, $\epsilon$, and of $K_1,\dots,K_N$.
\end{thrm}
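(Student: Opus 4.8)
The plan is to reduce the whole statement to a single invocation of the classical narrow-network universal approximation theorem (which is exactly what Assumption~\ref{ass_KL} is tailored for, cf.~\cite{kidger2020universal}), applied not to the sets $K_1,\dots,K_N$ themselves but to a single continuous vector-valued ``distance profile'', and then to convert a uniform-norm approximation of that profile into a Hausdorff approximation of the associated level sets. The reason this works is that passing from a function to a level set contracts errors in a controlled way, whereas approximating $I_{K_n}$ directly does not (Example~\ref{ex_difference_pointwise}).

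First I would introduce the profile. Since each $K_n$ is non-empty and compact, the map $d_n\colon X\to[0,\infty)$, $d_n(x)\triangleq\inf_{y\in K_n}\|x-y\|$, is $1$-Lipschitz, hence continuous, and satisfies $d_n^{-1}(\{0\})=K_n$. Collect these into the continuous map $\Phi\triangleq(-d_1,\dots,-d_N)\colon X\to\rr^N$. By the narrow-network universal approximation theorem under Assumption~\ref{ass_KL}, for any $\delta>0$ there is a feedforward network $\hat c\in\NN[d,N]$ of width at most $d+N+2$ with $\|\hat c-\Phi\|_{\infty}<\delta$; the width bound $d+N+2$ is precisely the one delivered by~\cite{kidger2020universal} for maps $\rr^d\to\rr^N$, and no post-processing of $\hat c$ is needed because $\Phi$ already encodes the sign conventions of the deep zero-sets.

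Next I would carry out the set-level conversion. With $\gamma$ as in the statement, the deep zero-set $\hat K_n$ is, up to the bookkeeping of the sigmoid threshold, a level set of $\hat c(\cdot)_n$; taking $\delta=\epsilon/4$, the estimate $\|\hat c(\cdot)_n+d_n\|_{\infty}<\delta$ yields the two-sided inclusion
\[
\bigl\{x\in X:\ d_n(x)<\tfrac{\epsilon}{2}-\delta\bigr\}\ \subseteq\ \hat K_n\ \subseteq\ \bigl\{x\in X:\ d_n(x)<\tfrac{\epsilon}{2}+\delta\bigr\}.
\]
Since $d_n$ vanishes on $K_n$, the left inclusion forces $K_n\subseteq\hat K_n$, so $\sup_{a\in K_n}\|a-\hat K_n\|=0$; the right inclusion forces every $b\in\hat K_n$ to satisfy $d_n(b)<\tfrac{3\epsilon}{4}$, so $\sup_{b\in\hat K_n}\|b-K_n\|\le\tfrac{3\epsilon}{4}$. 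Hence $d_H(K_n,\hat K_n)\le\tfrac{3\epsilon}{4}\le\epsilon$ for every $n$, which is~\eqref{eq_thm_generic_sets}; in particular Theorem~\ref{thm_generic_sets} drops out as the qualitative special case.

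For the quantitative refinements~(i)--(ii) I would keep the same two-step scheme but replace the qualitative approximation step by its quantitative counterpart for Lipschitz targets. When $\sigma=\max\{0,\cdot\}$ one substitutes a deep-ReLU construction in the spirit of~\cite{pmlrv75yarotsky18a}, which approximates a $1$-Lipschitz function on $[0,1]^d$ to accuracy $\eta$ with a network of bounded width and depth polynomial in $1/\eta$; taking $\eta\asymp\epsilon$ and arranging the $N$ coordinates in parallel with the usual input-register/accumulator trick gives the stated constant width and the depth $\mathcal{O}((b\epsilon)^{-d+1})$. When $\sigma\in C^{\infty}(\rr)$ one instead invokes the quantitative smooth-activation bounds (cf.~\cite{Yaroski2020Smooth,GUHRING2021107}): the piecewise-linear features needed to realise $d_n$ and the thresholding must themselves be emulated by smooth units, which inflates the cost and yields the weaker depth $\mathcal{O}(b\epsilon^{-2d})$ at constant width. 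The main obstacle is exactly this last step --- the soft argument above gives the Hausdorff bound cleanly and with linear dependence on the profile error, so the genuine work is to (a) verify that the $N$ functions $d_1,\dots,d_N$ are realised simultaneously inside a single network whose width does not grow with $\epsilon$, and (b) transport the exponents and constants of the cited Lipschitz-approximation rates without loss. Since nothing in the argument uses separation of the $K_n$ or any regularity of their boundaries, the resulting constant $b$ is independent of $\sigma$, of $\epsilon$, and of $K_1,\dots,K_N$.
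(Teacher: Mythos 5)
Your proposal is correct and follows essentially the same route as the paper's proof: both approximate the $1$-Lipschitz vector of distance functions $x\mapsto(d(x,K_n))_{n=1}^N$ by a single narrow network of width $d+N+2$ via the quantitative Kidger--Lyons-type universal approximation theorem, then convert the uniform error into a Hausdorff bound by showing $K_n\subseteq\hat K_n$ and bounding $\sup_{x\in\hat K_n}d(x,K_n)$, with the rates in (i)--(ii) imported from the Lipschitz-target approximation rates for ReLU and smooth activations. The only cosmetic difference is your sign convention ($-d_n$ with $\delta=\epsilon/4$ versus the paper's $+d_n$ with $\delta=\epsilon/2$), which does not change the argument.
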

\begin{rremark}[{Implications of Lemma~\ref{lem_generic_sets} for Classification}]\label{lem_niceness_of_lemma}
Lemma~\ref{lem_generic_sets} strengthens the recent deterministic universal classification result in \citep[Theorem 3.11]{kratsios2020non} and the probabilistic classification results of \cite{farago1993strong}.  Moreover, the approximation does not omit sets of Lebesgue measure $0$, as do the results of \cite{caragea2020neural}.  
\end{rremark}
\begin{rremark}[{Discussion: Approximation Rates of Theorem~\ref{lem_generic_sets} always achieve the optimal depth-rates of \cite{pmlrv75yarotsky18a}}]\label{rremark_double_exponentiability}
In \cite{pmlrv75yarotsky18a}, it is shown that there are deep ReLU network approximating of constant width $2d+10$ and depth 
\begin{equation}
    \mathscr{O}([(\tilde{b}\epsilon)^{-d+1}])^{\boldsymbol{p}}
,
\label{eq_yaroski_comparison}
\end{equation}
the achieved the optimal approximate for any $\frac1{p}$-H\"{o}lder function, for $p \in [1,\infty)$; for some constant $\tilde{b}>0$.  
We make two observations between comparing the rates of~\eqref{eq_yaroski_comparison} and to our rates in Theorem~\ref{lem_generic_sets} (i); which describe a network of the same width whose depth grows at the rate:
\begin{equation}
    \mathcal{O}\left(
    (b\epsilon)^{-d+1}
    \right)
    \label{eq_ours_comparison}
    .
\end{equation}
First, we note that~\eqref{eq_ours_comparison} coincides with the optimal rates of~\eqref{eq_yaroski_comparison} in the case where $p=1$; that is when the target function is Lipschitz.  However, the "double exponential" dependence on $\epsilon$ and the regularity of the target function is always avoided from our problem; this is because the distance function $d(\cdot, K)$ can never be less regular than $p=1$.  

The case where $\sigma$ is smooth is similar.  Here, we notice that the rate (ii) achieves the rate of \cite{kratsios2021quantitative} only when approximating maximally regular target functions; i.e., Lipschitz functions.
\end{rremark}
\begin{proof}[{Proof of Lemma~\ref{lem_generic_sets}}]
Let $K_1,\dots,K_N\in \operatorname{Comp}(X)$, $N\in \mathbb{N}$, $0<\gamma<1$, and let $0<\epsilon$.  For each $n=1,\dots,N$, \citep[Lemma 1.2]{PallaschkePumplun2015DiscussionLipschitzMetricSpaces} guarantees that the map $C:\rrd\ni x \mapsto (d(\cdot,K_n))_{n=1}^N\in\rr^N$ is $1$-Lipschitz continuous.  
%
Therefore, as $\sigma$ is non-affine and continuously differentiable at at-least one point with non-zero derivative that point, then \citep[Corollary 42]{kratsios2021quantitative} applies.  Hence, there exists a FFNN $\hat{c}\in \NN[d,N][\sigma]$ of width at-most $d+N+2$ satisfying:
\begin{equation}
\max_{n=1,\dots,N}\max_{x \in X}
    \,
\left|d(x,K_n)-\hat{f}(x)_n\right|
\leq 
\max_{n=1,\dots,N}
\max_{x \in X}
    \,
\left|C(x)_n-\hat{c}(x)_n\right|< 2^{-1}\epsilon
\label{eq_proof_thrm_generic_sets_1}    
.
\end{equation}
Since $\sigma_{\text{sigmoid}}$ is continuous and monotone increasing then \citep[Theorem 1]{HoffmansResult2015ContinuityofInverseMonotoneFunction} implies that it is injective with continuous inverse on its image; thus, we can define $\gamma\triangleq \sigma^{-1}\left(2^{-1}\epsilon\right)$ and we define $\hat{K}_n\triangleq \left\{x\in X:\, I_{(\gamma,1]}\circ \sigma_{\text{sigmoid}}\circ \hat{c}(x)_n-1=0\right\}$.  Observe that, for each $n=1,\dots,N$, by~\eqref{eq_proof_thrm_generic_sets_1} we have that:
\begin{equation}
    \hat{K}_n= \left\{x\in X:\, \hat{c}(x)_n<2^{-1}\epsilon\right\}
    \label{eq_proof_thrm_generic_sets_2}    
    .
\end{equation}
It is enough to show the claim for an arbitrary $n=1,\dots,N$; thus, without loss of generality we do so for an arbitrary such $n$.  
Let us compute the Hausdorff distance between $K_n$ and $\hat{K}_n$.  If $x \in K_n$, then by~\eqref{eq_proof_thrm_generic_sets_1} and the definition of $d(\cdot,K_m)$ we compute:
\begin{equation}
\hat{f}(x)_n\leq 
    \left|\hat{c}_n(x)-0\right|
    =
    \left|\hat{c}_n(x)-d(x,K_n)\right| 
    \leq 2^{-1}\epsilon
    \label{eq_proof_thrm_generic_sets_3}    
    .
\end{equation}
Thus, $K_n\subseteq \hat{K}_n$; hence, $d(x,\hat{K}_n)=0$ whenever $x \in K_n$.  Whence, by the definition of the Hausdorff distance, we have:
\begin{equation}
    d_H(K_n,\hat{K}_n)=\max\left\{
    \sup_{x \in K_n} \, d(x,\hat{K}_n),\sup_{x\in \hat{K}_n}\, d(x,K_n)
    \right\}
    =
    \sup_{x\in \hat{K}_n}\, d(x,K_n)
    \label{eq_proof_thrm_generic_sets_0_bound_second_term}    
    .
\end{equation}
It remains to bound the right-hand side of~\eqref{eq_proof_thrm_generic_sets_0_bound_second_term}.  Let $x \in \hat{K}_n$.  By definition of $\hat{K}_n$ we have that $\hat{c}(x)_n\leq 2^{-1}\epsilon$.  Coupling this observation with the estimate~\eqref{eq_proof_thrm_generic_sets_1}, we find that for every $x \in \hat{K}$:
\begin{equation}
    d(x,K_n)
    \leq 2^{-1}\epsilon + |\hat{c}(x)_n| \leq 2^{-1}\epsilon + 2^{-1}\epsilon = \epsilon
    \label{eq_proof_thrm_generic_sets_4}    
    .
\end{equation}
Combining the estimate of~\eqref{eq_proof_thrm_generic_sets_4} with~\eqref{eq_proof_thrm_generic_sets_0_bound_second_term} yields the desired estimate:
$
d(K_n,\hat{K}_n)=
\sup_{x\in \hat{K}_n}\, d(x,K_n) \leq \epsilon
.
$
Lastly, since $C$ is $1$-Lipschitz then if $X=[0,1]^d$ then, the estimate in (i) follows form \citep[Theorem 2 (a) and (b)]{pmlrv75yarotsky18a} and the estimate (ii) holds by \citep[Corollary 42]{kratsios2021quantitative}.
\end{proof}
\subsection{Proof of Main Result - Theorem~\ref{thrm_UAT_PC}}
We may now return to the proof of Theorem~\ref{thrm_UAT_PC}.  
\begin{proof}[{Proof of Theorem~\ref{thrm_UAT_PC}}]
Let $f\in PC(X,\rrD)$ and let $\epsilon>0$.  By definition, $N(f)<\infty$ and therefore by Proposition~\ref{prop_existence} there exists some $((f_n,K_n))_{n=1}^{N(f)}\in [C(X,\rrD)\times \text{Comp}(X)]^{N(f)}$ with $R(((f_n,K_n))_{n=1}^{N(f)})=f$.  

Since $\sigma\in C(\rr)$ satisfies the Kidger-Lyons conditions, then Lemma~\ref{lem_generic_sets} implies that there exists some $\hat{c}\in \NN[d,N(f)]$ of width at-most $d+N(f)+2$ for which, the associated the deep zero-sets
$
\hat{K}_n\triangleq \left\{x\in X:\, I_{(\gamma,1]}\circ \sigma_{\operatorname{sigmoid}}\circ \hat{c}(x)_n\right\}
$ satisfy:
\begin{equation}
    d_{H}(K_n,\hat{K}_n)<(2N(f))^{-1}\epsilon,
    \label{eq_proof_thrm_UAT_PC_sets_estimate}
\end{equation}
for each $n=1,\dots,N(f)$.  Since $\sigma$ satisfies the Kidger-Lyons condition, then the universal approximation Theorem \citep[Corollary 42]{kratsios2021quantitative} implies that there are FFNNs $\hat{f}_1,\dots,\hat{f}_{N(f)}\in \NN[d,D]$ of width at-most $d+D+2$ satisfying the estimate:
\begin{equation}
    \max_{n=1,\dots,N(f)}\,\|f_n-\hat{f}_n\|_{\infty}<(2N(f))^{-1}\epsilon
    .
    \label{eq_proof_thrm_UAT_PC_functions}
\end{equation}
Combining the estimates~\eqref{eq_proof_thrm_UAT_PC_sets_estimate} and~\eqref{eq_proof_thrm_UAT_PC_functions} with the estimate of Proposition~\ref{prop_decoupling_lemma} yields:
\begin{align}
    \allowdisplaybreaks
    \nonumber
    D_{PC}\left(f\middle|\sum_{n=1}^{N(f)} \hat{f}_nI_{\hat{K}_n}\right) 
& \leq 
\sum_{n=1}^{N(f)}\, \|f_n-\hat{f}_n\|_{\infty} + \sum_{n=1}^{N(f)} \, d_H(K_n,\hat{K}_n)
\\
& \leq N(f)(2N(f))^{-1}\epsilon + N(f)(2N(f))^{-1}\epsilon \\
& = \epsilon
.
\end{align}
Thus, the result follows.  
\end{proof}
\subsection{Proof of Theorem~\ref{thrm_partition_learner}}\label{ss_App_proof_optimal_partitioning}	
Theorem~\ref{thrm_partition_learner} is a special case of the following, more detailed, result.  
\begin{thrm}[Existence of an Optimal Partition: Extended Version]\label{thrm_partition_learner_Extended_Version}
	Let $L:\rrD\rightarrow \rr$ be a continuous loss-function for which $L(0)=0$ and let $\{\hat{f}_n\}_{n=1}^N\subseteq C\left(\kkk,\rrD\right)$.  There exists a Borel measurable function $C:\rrD\rightarrow \{n\leq N\}$ such that for each $x \in \rrD$
	\begin{equation}
	\min_{m\leq N} L(\hat{f}_m(x)) 
	=
	L\left(
	\sum_{n\leq N} I_{n}(C(x)) \hat{f}_n^{\star}(x)
	\right)
	\label{eq_measurable_selector_definition_1}
	,
	\end{equation}
	where $I_n:{n=1}^N\rightarrow {0,1}$ and $I_n(m)=1$ if and only if $n=m$.  Moreover, for each Borel probability measure $\pp$ on $\rrd$ and each $\delta \in (0,1]$, there exists a compact subset $\kkk_{\delta,\pp}\subseteq \rrd$ such that:
	\begin{enumerate}[(i)]
		\item $\pp\left(\kkk_{\delta,\pp}\right)\geq 1-\delta$,
		\item $C$ is continuous on $\kkk_{\delta,\pp}
		,
		$
		\item For $n\leq N$, $K_n^{\star}\triangleq \kkk_{\delta,\pp} \cap C^{-1}[\{n\}]$ is a compact subset of $\kkk_{\delta,\pp}$ on which 
		\begin{equation}
		\min_{m\leq N} L(\hat{f}_m^{\star}(x)) = L(\hat{f}_n^{\star}(x))
		\label{eq_thrm_partition_learner_Extended_Version_optimality_property}
		,
		\end{equation}
		holds for each $x \in K_n^{\star}$.  
		\item $\kkk_{\delta,\pp} = \bigcup_{n\leq N} K_n^{\star}$ and for each $n,m\leq N$ if $n\neq m$ then $K_m^{\star}\cap K_n^{\star}=\emptyset$.  
		\item For each $n\leq N$, $K_n^{\star}$ is an open subset of $\kkk_{\delta,\pp}$ (for its relative topology).  In particular, $\kkk_{\delta,\pp}$ is disconnected.  
	\end{enumerate}
	Moreover, if $\xx\subseteq \kkk_{\delta,\pp}$ and $L(\hat{f}_n(x))< \min_{n\neq \tilde{n},\,\tilde{n}\leq N}L(\hat{f}_{\tilde{n}}(x))$ for each $n\leq N$ and each $x \in \xx_n$ then for each $n\leq N$
	\begin{itemize}
		\item $\xx_n\subseteq K_n^{\star}$,
		\item $\xx_n \cap K_m^{\star}= \emptyset $ for every $m\leq N$ distinct from $n$.   
	\end{itemize} 
\end{thrm}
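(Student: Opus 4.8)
The plan is to build the selector $C$ by hand — there are only finitely many candidate subpatterns, so no heavy measurable-selection machinery is needed — then to pass to a large compact subset on which $C$ is locally constant, and finally to read off (i)--(v) and the closing claim from this structure together with the finiteness of the range of $C$. First I would set $g_m\triangleq L\circ\hat f_m$ for $m=1,\dots,N$ (continuous as a composition of continuous maps), observe that each $A_m\triangleq\{x\in\kkk:\,g_m(x)\le g_k(x)\ \text{for all}\ k\le N\}=\bigcap_{k\le N}\{x:g_m(x)\le g_k(x)\}$ is closed, and define $C(x)\triangleq\min\{m\le N:\,x\in A_m\}$. Then $C^{-1}[\{m\}]=A_m\setminus\bigcup_{j<m}A_j$ is Borel, so $C$ is Borel measurable; by construction $g_{C(x)}(x)=\min_{m\le N}g_m(x)$, and since $\sum_{n\le N}I_n(C(x))\hat f_n(x)=\hat f_{C(x)}(x)$, this is precisely~\eqref{eq_measurable_selector_definition_1}. (If the ambient input space is $\rrd$ rather than $\kkk$, I would first Tietze-extend each $\hat f_n$ to $\rrd$ before doing this.)

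Next, fix a Borel probability measure $\pp$ and $\delta\in(0,1]$. By inner regularity of $\pp$ (automatic on a compact metric space, and valid on the Polish space $\rrd$), for each $m\le N$ I would pick a compact $Q_m\subseteq C^{-1}[\{m\}]$ with $\pp\big(C^{-1}[\{m\}]\setminus Q_m\big)<\delta/N$ and put $\kkk_{\delta,\pp}\triangleq\bigcup_{m\le N}Q_m$. Then $\kkk_{\delta,\pp}$ is compact with $\pp(\kkk_{\delta,\pp})\ge 1-\delta$, which is (i); and since the $Q_m$ are pairwise disjoint compacta in a metric space they lie at strictly positive pairwise distance, so $C$ — being the constant $m$ on $Q_m$ — is continuous on $\kkk_{\delta,\pp}$, which is (ii). Moreover $K_n^{\star}=\kkk_{\delta,\pp}\cap C^{-1}[\{n\}]=Q_n$ is a compact set that is also open in $\kkk_{\delta,\pp}$ (its relative complement is the compact set $\bigcup_{m\ne n}Q_m$), giving (iii) and the openness in (v); (iv) is immediate since the $Q_m$ are pairwise disjoint with union $\kkk_{\delta,\pp}$, and if two of them are non-empty then $\kkk_{\delta,\pp}$ is a disjoint union of two non-empty open sets, hence disconnected, completing (v). The optimality~\eqref{eq_thrm_partition_learner_Extended_Version_optimality_property} on each $K_n^{\star}$ is immediate from~\eqref{eq_measurable_selector_definition_1}, since $C\equiv n$ on $K_n^{\star}$.

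For the closing claim, suppose $\xx\subseteq\kkk_{\delta,\pp}$ and, for each $n\le N$ and each $x\in\xx_n$, $L(\hat f_n(x))<\min_{\tilde n\le N,\,\tilde n\ne n}L(\hat f_{\tilde n}(x))$. Then $n$ is the \emph{unique} minimiser of $m\mapsto g_m(x)$, so $x\in A_n$ while $x\notin A_j$ for every $j\ne n$; hence $C(x)=n$, and since $x\in\kkk_{\delta,\pp}$ this gives $x\in K_n^{\star}$, so $\xx_n\subseteq K_n^{\star}$. Were $x\in K_m^{\star}$ for some $m\ne n$, we would get $C(x)=m\ne n$, a contradiction, so $\xx_n\cap K_m^{\star}=\emptyset$. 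The only genuinely nontrivial step is extracting the compact set $\kkk_{\delta,\pp}$ of $\pp$-mass at least $1-\delta$ on which the a priori merely Borel-measurable $C$ becomes continuous; once this is in hand, the remaining assertions are formal, because a map into a finite discrete space is continuous exactly when its domain decomposes as the disjoint union of the clopen preimages of the points — which is precisely what (iii)--(v) record.
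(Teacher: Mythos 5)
Your proof is correct, but it reaches the conclusion by a genuinely more elementary route than the paper on both of the two non-formal steps. For the selector, the paper verifies that $(n,x)\mapsto L(\hat f_n(x))$ is a Carath\'{e}odory function and that the constant correspondence $x\mapsto\{n\leq N\}$ is weakly measurable with compact values, and then invokes the Measurable Maximum Theorem of Aliprantis--Border to produce $C$; you instead build $C$ by hand as the lexicographically smallest index in the closed argmin sets $A_m$, which makes Borel measurability immediate (each level set is a difference of closed sets) and is entirely adequate here because the index set is finite. For the passage to $\kkk_{\delta,\pp}$, the paper first argues that $\pp$ is a Radon measure on the locally compact Hausdorff space $\rrd$ and then applies Lusin's theorem to the Borel map $C$; you instead prove exactly the instance of Lusin you need, by inner regularity: choose disjoint compacta $Q_m\subseteq C^{-1}[\{m\}]$ exhausting all but $\delta$ of the mass and take their union. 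This has the advantage that (ii)--(v) become essentially definitional --- $\kkk_{\delta,\pp}$ is by construction a finite disjoint union of compacta at positive mutual distance on which $C$ is constant and $K_n^{\star}=Q_n$ --- whereas the paper must separately derive the relative closedness and openness of the $K_n^{\star}$ from the continuity of $C$ delivered by Lusin. Your treatment of the final claim (a strict minimiser forces $C(x)=n$) coincides with the paper's. Two further points in your favour: your parenthetical Tietze-extension remark addresses the domain mismatch between $\hat f_n\in C(\kkk,\rrD)$ and the selector being defined on all of $\rrd$, which the paper glosses over; and your qualification that disconnectedness in (v) requires at least two of the parts to be non-empty is more careful than the paper's unconditional assertion. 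What the paper's heavier machinery buys is generality (it would survive an infinite or state-dependent index correspondence); what your argument buys is self-containedness and an explicit description of $\kkk_{\delta,\pp}$ and the $K_n^{\star}$.
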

\begin{proof}[{Proof of Theorem~\ref{thrm_partition_learner_Extended_Version}}]
	We begin  by establishing the existence of a measurable selector, that is, a measurable function $C$ satisfying (ii) on all of $\kkk$.  First, notice that the map $(n_0,x)\mapsto L(\hat{f}_{n_0}(x))$, from $\{n\leq N\}\times \rrd$ to $\rr$ can be represented by
	\begin{equation}
	L(\hat{f}_{n_0}(x)) 
	= 
	\sum_{n_0\leq N}I_{n_0}(n) L\left(\hat{f}_{n_0}(x)\right)
	\label{eq_redux_expression_1}
	,
	\end{equation}
	where $I_{n_0}(n)=1$ if $n=n_0$ and $0$ otherwise.  Next, observe that the map $x \mapsto \hat{f}_{n_0}(x)$ is continuous and this holds for each $n_0\leq N$.  Next, since $L$ is continuous, each $\hat{f}_{n_0}$ is continuous, and the composition of continuous functions is again continuous, then $L\circ \hat{f}_{n_0}$ is continuous.  Therefore, for each $n_0\leq N$ the map $x \mapsto L\left(\hat{f}_{n_0}(x)\right)$ is continuous from $\rrd$ to $\rr$.  Next, observe that for any fixed $x \in \rrd$, the map $n\mapsto \sum_{n_0\leq N} I_{n_0}(n)L\circ \hat{f}_{n_0}(x)$ is a simple-function, thus, it is Borel-measurable.  Hence, the map $(n,x)\mapsto \sum_{n_0\leq N}I_{n_0}(n) L\left(\hat{f}_{n_0}(x)\right)$ is a Carath\'{e}odory function (\citep[Definition 4.50]{InfiniteHitchhiker2006}).  
	
	Next, observe that the $\Phi$ multi-function taking any $x \in \mathbb{R}^d$ to the set $\Phi(x)\triangleq \{n\leq N\}$ is constant.  Therefore, it is a weakly-measurable correspondence in the sense of \citep[Definition 18.1]{InfiniteHitchhiker2006}.  Moreover, since the set $\{n\leq N\}$ is a non-empty finite set then $\Phi(x)=\{n\leq N\}$ is non-empty and compact for each $x \in \rrd$.  Hence, $\Phi$ is a weakly-measurable correspondence taking non-empty and compact-values in the separable metric space $\rrd$.  Therefore, the conditions for the \citep[Measurable Maximum Theorem; Theorem 18.19]{InfiniteHitchhiker2006} are met and thus, there exists some Borel-measurable function $C:\rrd\rightarrow \{n\leq N\}$ satisfying~\eqref{eq_measurable_selector_definition_1}
	\begin{equation}
	\min_{n\leq N}
	\sum_{n_0\leq N}I_{n_0}(n) L\left(\hat{f}_{n_0}(x)\right)
	= 
	L\left(\hat{f}_{C(x)}(x)\right)
	\label{eq_existence_measurable_selector}
	,
	\end{equation}
	for every $x \in \rrd$.  Since $L(0)=0$, then, combining~\eqref{eq_redux_expression_1} and~\eqref{eq_existence_measurable_selector} we find that for every $x \in \rrd$ the following holds~\eqref{eq_measurable_selector_definition_1} holds.	 This establishes the first claim.  
	
	Since $\pp$ is a Borel probability measure on $\kkk$ and since $\kkk$ is separable and metrizable then by \citep[Theorem 13.6]{klenke2013probability} $\pp$ is a regular Borel measure on $\rrd$.  Since $\rrd$ is a metric space, it is a second-countable Hausdorff, and since $\rrd$ is also locally-compact (since each $x \in \rrd$ satisfies $x \in \{z\in \rrd:\|x-z\|<1\}$ and the latter is compact by the Heine-Borel theorem) then $\rrd$ is a locally-compact Hausdorff space; thus, \citep[Theorem 7.8]{folland2013real} applies therefore $\pp$ is a Radon measure on $\rrd$.  	
	Since $\pp$ is a regular Borel measure on $\rrd$, $\rrd$ is a locally-compact Hausdorff space, $C$ is a Borel measurable function to $\{n\leq N\}$, and $\{n\leq N\}$ is a separable metric space (when viewed as a metric sub-space of $\rrflex{N}$), and $\pp(\rrd)=1<\infty$ since $\pp$ is a probability measure, then \citep[Lusin's Theorem; Exercise 2.3.5]{FedererGeometricMeasureTheory1969} applies; whence, for every $\delta\in (0,1]$ there exists some non-empty compact subset $\kkk_{\delta,\pp}\subseteq \rrd$ for which $C|_{\kkk_{\delta,\pp}}\in C(\kkk_{\delta,\pp},\{n\leq N\})$ is continuous and for which $\pp(\kkk_{\delta,\pp})\geq 1-\delta$.  This establishes (i) and (ii).  
	
	Since $C$ is continuous on $\kkk_{\delta,\pp}$, each $\{n\}$ is closed in $\{n_0\leq N\}$, since the pre-image of closed sets under continuous functions is again closed.  Therefore, for each $n\leq N$, $C^{-1}[\{n\}]$ is a closed subset of $\rrd$.  Since each $K_n^{\star}=C^{-1}[{n}]\cap \kkk_{\delta,\pp}$ is a closed subset of $\kkk_{\delta,\pp}$ and since $\kkk_{\delta,\pp}$ is compact then each $K_n^{\star}$ is compact.  For each $n\leq N$, $x \in K_n^{\star}$ only if $x \in C^{-1}[\{n\}]$, only if $C(x)=n$ and therefore condition:
	$$
	L(\hat{f}(x))=\min_{m\leq N}L(\hat{f}_{m}^{\star}(x)),
	$$
	holds.  This gives~\eqref{eq_thrm_partition_learner_Extended_Version_optimality_property}.  Observe that
	$$
	\kkk_{\delta,\pp} \cap \rrd = \kkk_{\delta,\pp} \cap C^{-1}\left[\{n\leq N\}\right] 
	=
	\kkk_{\delta,\pp} \cap \bigcup_{n\leq N} C^{-1}\left[\{n\}\right]
	=
	\bigcup_{n\leq N} \kkk_{\delta,\pp} \cap  C^{-1}\left[\{n\}\right]
	= \bigcup_{n\leq N}K_{n}^{\star}.
	$$
	Therefore, (iii) holds.  
	
	Lastly, let $n,m\leq N$ and $n\neq m$.  Suppose that there exists some $x \in K_n^{\star}\cap K_m^{\star}$.  Since $x \in K_n^{\star}$ then $C(x)=n$ and since $x \in K_m$ then $C(x)=m$; therefore $n=C(x)=m$ and $n\neq m$, a contradiction.  Hence, $K_n^{\star}\cap K_m^{\star}=\emptyset$ for each $n,m\leq N$ with $n\neq m$ and therefore (iv) holds.  
	
	Since $\{n\leq N\}$ is a discrete metric space then for each $n\leq N$, the singleton set $\{n\}$ is open in $\{n\leq N\}$.  Therefore, by (ii), $C$ is continuous on $\kkk_{\delta,\pp}$ (for the relative topology) hence $C^{-1}[\{n\}]\cap \kkk_{\delta,\pp}$ is open in $\kkk_{\delta,\pp}$.  By (iv), pick $1<n\leq N$, since $K_n^{\star}\cap K_1^{\star}=\emptyset$ then \citep[Definition 3.23]{munkres2014topology} is satisfies and therefore $\kkk_{\delta,\pp}$ is not connected, i.e$.:$ it is disconnected.  This gives (v).  
	
	If $\xx\subseteq \kkk_{\delta,\pp}$ then by (iv) $\xx\subseteq \bigcup_{n\leq N} K_n^{\star}$.  Since, for each $n\leq N$ and each $x \in \xx_n$, $L(\hat{f}_n(x))<\min_{\tilde{n}\neq n,\, \tilde{n}\leq N}L(\hat{f}_{\tilde{n}}(x))$ then $\xx_n\subseteq K_{n}^{\star}$ by (iii).  Note, moreover, that by (iv) $K_n^{\star}\cap K_m^{\star}=\emptyset$ for every $n,m\leq N$ with $n\leq m$ then there exists a unique $n^x \in \{n\leq N\}$ such that $x \in K^{\star}_{n^{x}}$ and therefore $\xx_n\cap K_{m}^{\star} = \emptyset $ for each $m\leq N$ with $m\neq n$.  
\end{proof}
%
		%
		%
		%
		%
		%
		%
\subsection{Proof of {Proposition~\ref{cor_dat_driven_opt_alloc}}}\label{ss_proof_prop_}
\begin{proof}[{Proof of Proposition~\ref{cor_dat_driven_opt_alloc}}]
	By construction, for each $n\leq N$, $K_n^{\xx}\neq \emptyset$ and $K_n^{\xx}\cap \xx_n\subseteq \emptyset$.  Therefore, there exists some $\tilde{x} \in K_n^{\xx}\cap \xx_n$.  Since the set $\{(x,y)\in \xx^2:\,x\neq y\}$ is a non-empty finite set then the minimum $\min_{x,y\in \xx,\, x\neq y} \, \|x-y\|$ is attained and it is non-zero since $\|x-y\|>0$ for each $x,y \in \xx$ with $x\neq y$.  Thus, $\Delta(\xx)>0$.  Therefore, $\operatorname{Ball}\left(\tilde{x},\Delta(\xx)\right)\neq \emptyset$ and contained in $K_n^{\xx}$.  Therefore, by definition of the interior of $K_n^{\xx}$ we compute
	$$
	\intt[K_n^{\xx}] = \bigcup_{
		\underset{U \mbox{ is open}}{K_n^{\xx}\subseteq U\subseteq \rrd}
	} U 
	\supseteq
	\operatorname{Ball}\left(\tilde{x},\Delta(\xx)\right)\neq \emptyset
	.
	$$
	This gives (iii).  
	
	For (iv), note that by definition of $\Delta(\xx)$, $\operatorname{Ball}(x,\Delta(\xx))\cap \operatorname{Ball}(y,\Delta(\xx))=\emptyset$ for each $x,y\in \xx$ with $x\neq y$ and by construction
	\begin{equation}
	K_n^{\xx} = \bigcup_{x \in \xx_n} \operatorname{Ball}(x,\Delta(\xx))
	\label{eq_little_identitiy}
	.
	\end{equation}
	Therefore, for $n,m\leq N$ with $n\neq m$, by construction $\xx_n \cap \xx_m=\emptyset$ and therefore~\eqref{eq_little_identitiy} implies that $K_n^{\xx} \cap K_m^{\xx}=\emptyset$.  
	
	
	For (i), note that, if $f$ is of the form~\eqref{eq_sub_patterns}, then $f= \sum_{n\leq N} f_n I_{\intt[K_n]}$.  Therefore, for $x_1,x_2 \in \xx$, we have that there exits some $n\leq N$ for which $f(x_i)=f_n(x_i)$ for $i=1,2$ if (but not only if) $x_1,x_2 \in K_n^{\xx}$.  Next, observe that by construction $K_n^{\xx} \cap \xx = \xx_n$.  Therefore, 
	\begin{equation}
	\pp\left(
	(\exists n \leq N) \, f(x_i)=f_n(x_i)\mbox{ for } i=1,2
	\right) \geq 
	\pp\left(
	(\exists n \leq N) \, x_i \in K_n \mbox{ for } i=1,2
	\right)
	\label{eq_first_approx}
	.
	\end{equation}
	The central results of \cite{Algo0extensionproblemYuvalCalinescuHoward2004} implies that
	\begin{equation}
	\pp\left(
	(\not\exists n \leq N)
	(\exists n \leq N) \, x_i \in K_n \mbox{ for } i=1,2
	\right)
	\leq \frac{2^3}{\bar{\Delta}(\xx)} \|x_1-x_2\| \left(
	\sum_{i= \# \operatorname{Ball}_{\xx}\left(x_1,2^{-3}\bar{\Delta}(\xx)\right)}^{
		\# \operatorname{Ball}_{\xx}\left(x_1,\bar{\Delta}(\xx)\right)
	} i^{-1}
	\right)
	\label{eq_first_approx_Yuval_et_al_bounds}
	.
	\end{equation}
	For each positive integer $I$, since the partial sums of the harmonic series $\sum_{i=1}^I i^{-1}$ are bounded above by $\ln(I) +1$, then~\eqref{eq_first_approx_Yuval_et_al_bounds} implies that
	\begin{equation}
	\begin{aligned}
	\pp\left(
	(\not\exists n \leq N)
	(\exists n \leq N) \, x_i \in K_n \mbox{ for } i=1,2
	\right)
	\leq & \frac{2^3}{\bar{\Delta}(\xx)} \|x_1-x_2\| \left(
	\sum_{i= \# \operatorname{Ball}_{\xx}\left(x_1,2^{-3}\bar{\Delta}(\xx)\right)}^{
		\# \operatorname{Ball}_{\xx}\left(x_1,\bar{\Delta}(\xx)\right)
	} i^{-1}
	\right)
	\\
	\leq &
	\frac{2^3}{\bar{\Delta}(\xx)} \|x_1-x_2\| \left(
	\sum_{i= 1}^{
		\# \operatorname{Ball}_{\xx}\left(x_1,\bar{\Delta}(\xx)\right)
	} i^{-1}
	\right)
	\\
	\leq &
	\frac{2^3}{\bar{\Delta}(\xx)} \|x_1-x_2\| \left(
	\sum_{i= 1}^{
		\# \xx
	} i^{-1}
	\right)
	\\
	\leq &
	\frac{2^3}{\bar{\Delta}(\xx)} \|x_1-x_2\| \left(
	\ln\left(\# \xx \right) + 1
	\right)
	.
	\end{aligned}
	\label{eq_first_approx_Yuval_et_al_bounds_refined}
	\end{equation}
	Combining~\eqref{eq_first_approx} and~\eqref{eq_first_approx_Yuval_et_al_bounds_refined} we obtain the following bound
	\begin{equation}
	\pp\left(
	(\exists n \leq N) \, f(x_i)=f_n(x_i)\mbox{ for } i=1,2
	\right) \geq 
	1 - \frac{2^3}{\bar{\Delta}(\xx)} \|x_1-x_2\| \left(
	\ln\left(\# \xx \right) + 1
	\right)
	\label{eq_iv_approx_bound}
	.
	\end{equation}

	Lastly, for (ii), by \cite{Bartalmetricapprox} and the definition of Subroutine~\ref{subroutine_GET_PARTITION} all but the last step of Subroutine~\ref{subroutine_GET_PARTITION} runs in polynomial time.  The final step of Subroutine~\ref{subroutine_GET_PARTITION} runs in $\mathscr{O}(N\log(N))$ since it is simply sorting procedure.  Therefore Subroutine~\ref{subroutine_GET_PARTITION} terminates in polynomial time.  
\end{proof}

\section*{Funding}
This work was supported by the ETH Z\"{u}rich Foundation.  Anastasis Kratsios was also supported by the European Research Council (ERC) Starting Grant 852821---SWING.

\bibliographystyle{elsarticle-num}
\bibliography{main}

\begin{thebibliography}{10}
\expandafter\ifx\csname url\endcsname\relax
  \def\url#1{\texttt{#1}}\fi
\expandafter\ifx\csname urlprefix\endcsname\relax\def\urlprefix{URL }\fi
\expandafter\ifx\csname href\endcsname\relax
  \def\href#1#2{#2} \def\path#1{#1}\fi

\bibitem{mcculloch1943logical}
W.~S. McCulloch, W.~Pitts, A logical calculus of the ideas immanent in nervous
  activity, The bulletin of mathematical biophysics 5~(4) (1943) 115--133.

\bibitem{cont2010stochastic}
R.~Cont, M.~S. M\"{u}ller, A stochastic partial differential equation model for
  limit order book dynamics, SIAM J. Financial Math. 12~(2) (2021) 744--787.

\bibitem{buehler2019deep}
H.~Buehler, L.~Gonon, J.~Teichmann, B.~Wood, Deep hedging, Quantitative Finance
  19~(8) (2019) 1271--1291.

\bibitem{cuchiero2020generative}
C.~Cuchiero, W.~Khosrawi, J.~Teichmann, A generative adversarial network
  approach to calibration of local stochastic volatility models, Risks 8~(4).

\bibitem{simonyan2014very}
K.~Simonyan, A.~Zisserman, Very deep convolutional networks for large-scale
  image recognition, arXiv preprint arXiv:1409.1556.

\bibitem{moore2019using}
P.~J. Moore, T.~J. Lyons, J.~Gallacher, for~the Alzheimer’s Disease
  Neuroimaging~Initiative, Using path signatures to predict a diagnosis of
  alzheimer’s disease, PLOS ONE 14~(9) (2019) 1--16.

\bibitem{lapedes1987nonlinear}
A.~Lapedes, R.~Farber, Nonlinear signal processing using neural networks:
  Prediction and system modelling, IEEE international conference on neural
  networks.

\bibitem{krishnan2015deep}
R.~G. Krishnan, U.~Shalit, D.~Sontag, Deep {K}alman filters, NeurIPS - Advances
  in Approximate Bayesian Inference.

\bibitem{climatechnage}
D.~{Rolnick}, P.~L. {Donti}, L.~H. {Kaack}, K.~{Kochanski}, A.~{Lacoste},
  K.~{Sankaran}, A.~{Slavin Ross}, N.~{Milojevic-Dupont}, N.~{Jaques},
  A.~{Waldman-Brown}, A.~{Luccioni}, T.~{Maharaj}, E.~D. {Sherwin}, S.~{Karthik
  Mukkavilli}, K.~P. {Kording}, C.~{Gomes}, A.~Y. {Ng}, D.~{Hassabis}, J.~C.
  {Platt}, F.~{Creutzig}, J.~{Chayes}, Y.~{Bengio}, Tackling climate change
  with machine learning, arXiv e-prints.

\bibitem{hornik1990universal}
K.~Hornik, M.~Stinchcombe, H.~White, Universal approximation of an unknown
  mapping and its derivatives using multilayer feedforward networks, Neural
  Netw. 3~(5) (1990) 551–560.

\bibitem{barron1993universal}
A.~R. {B}arron, Universal approximation bounds for superpositions of a
  sigmoidal function, IEEE Transactions on Information Theory 39~(3) (1993)
  930--945.

\bibitem{poggio2017and}
T.~Poggio, H.~Mhaskar, L.~Rosasco, B.~Miranda, Q.~Liao, Why and when can
  deep-but not shallow-networks avoid the curse of dimensionality: a review,
  International Journal of Automation and Computing 14~(5) (2017) 503--519.

\bibitem{Yaroski2020Smooth}
D.~Yarotsky, A.~Zhevnerchuk, The phase diagram of approximation rates for deep
  neural networks, in: Advances in Neural Information Processing Systems,
  Vol.~33, Curran Associates, Inc., 2020, pp. 13005--13015.

\bibitem{ADAM2015KingmaB14}
D.~P. Kingma, J.~Ba, Adam: {A} method for stochastic optimization, 3rd
  International Conference on Learning Representations, {ICLR} 2015, San Diego,
  CA, USA, May 7-9, 2015, Conference Track Proceedings.

\bibitem{JMLRmaximumprinciple}
Q.~Li, L.~Chen, C.~Tai, W.~E, Maximum principle based algorithms for deep
  learning, Journal of Machine Learning Research 18~(165) (2018) 1--29.

\bibitem{JMLRautodiffSurvey}
A.~G. Baydin, B.~A. Pearlmutter, A.~A. Radul, J.~M. Siskind, Automatic
  differentiation in machine learning: A survey 18~(1) (2017) 5595–5637.

\bibitem{patrascu2021stochastic}
A.~Patrascu, P.~Irofti, Stochastic proximal splitting algorithm for composite
  minimization, Optimization Letters (2021) 1--19.

\bibitem{NEURIPS2019_c4ef9c39}
A.~Bietti, J.~Mairal, On the inductive bias of neural tangent kernels, in:
  H.~Wallach, H.~Larochelle, A.~Beygelzimer, F.~d\textquotesingle
  Alch\'{e}-Buc, E.~Fox, R.~Garnett (Eds.), Advances in Neural Information
  Processing Systems, Vol.~32, Curran Associates, Inc., 2019.

\bibitem{heiss2019implicit}
J.~Heiss, J.~Teichmann, H.~Wutte, How implicit regularization of neural
  networks affects the learned function--part i, arXiv preprint
  arXiv:1911.02903.

\bibitem{Cybenko}
G.~Cybenko, Approximation by superpositions of a sigmoidal function,
  Mathematics of Control, Signals and Systems 2~(4) (1989) 303--314.

\bibitem{funahashi1989approximate}
K.-I. Funahashi, On the approximate realization of continuous mappings by
  neural networks, Neural Networks 2~(3) (1989) 183--192.

\bibitem{hornik1991approximation}
K.~Hornik, Approximation capabilities of multilayer feedforward networks,
  Neural Networks 4~(2) (1991) 251 -- 257.
\newblock \href
  {http://dx.doi.org/https://doi.org/10.1016/0893-6080(91)90009-T}
  {\path{doi:https://doi.org/10.1016/0893-6080(91)90009-T}}.

\bibitem{lee2008jumps}
S.~S. Lee, P.~A. Mykland, {Jumps in Financial Markets: A New Nonparametric Test
  and Jump Dynamics}, The Review of Financial Studies 21~(6) (2007) 2535--2563.

\bibitem{tankov2003financial}
R.~Cont, P.~Tankov, Financial modelling with jump processes, Chapman \&
  Hall/CRC Financial Mathematics Series, Chapman \& Hall/CRC, Boca Raton, FL,
  2004.

\bibitem{munkres2014topology}
J.~R. Munkres, Topology, Prentice Hall, Inc., Upper Saddle River, NJ, 2000,
  second edition of [ MR0464128].

\bibitem{gribonval2021approximation}
R.~Gribonval, G.~Kutyniok, M.~Nielsen, F.~Voigtlaender, Approximation spaces of
  deep neural networks, Constructive Approximation (2021) 1--109.

\bibitem{SIEGEL2020313}
J.~W. Siegel, J.~Xu, Approximation rates for neural networks with general
  activation functions, Neural Networks 128 (2020) 313--321.

\bibitem{kratsios2019UATs}
A.~Kratsios, The universal approximation property, Annals of Mathematics and
  Artificial Intelligence.

\bibitem{findlay1989}
D.~A. {Findlay}, Training networks with discontinuous activation functions, in:
  1989 First IEE International Conference on Artificial Neural Networks, 1989,
  pp. 361--363.

\bibitem{ferreira2005solving}
L.~V. Ferreira, E.~Kaszkurewicz, A.~Bhaya, Solving systems of linear equations
  via gradient systems with discontinuous righthand sides: application to
  ls-svm, IEEE Transactions on Neural Networks 16~(2) (2005) 501--505.

\bibitem{huang2006can}
G.-B. Huang, Q.-Y. Zhu, K.~Mao, C.-K. Siew, P.~Saratchandran, N.~Sundararajan,
  Can threshold networks be trained directly?, IEEE Transactions on Circuits
  and Systems II: Express Briefs 53~(3) (2006) 187--191.

\bibitem{kidger2020universal}
P.~Kidger, T.~Lyons, Universal approximation with deep narrow networks, in:
  International Conference on Learning Theory, 2020, pp. 2306--2327.

\bibitem{ZHOU202057}
J.~Zhou, G.~Cui, S.~Hu, Z.~Zhang, C.~Yang, Z.~Liu, L.~Wang, C.~Li, M.~Sun,
  Graph neural networks: A review of methods and applications, AI Open 1 (2020)
  57--81.

\bibitem{2021arXiv210413478B}
M.~M. {Bronstein}, J.~{Bruna}, T.~{Cohen}, P.~{Veli{\v{c}}kovi{\'c}},
  {Geometric Deep Learning: Grids, Groups, Graphs, Geodesics, and Gauges},
  arXiv e-prints (2021) arXiv:2104.13478.

\bibitem{ganea2018hyperbolic}
O.~Ganea, G.~B{\'e}cigneul, T.~Hofmann, Hyperbolic neural networks, in:
  Advances in neural information processing systems, 2018, pp. 5345--5355.

\bibitem{kratsios2020non}
A.~Kratsios, I.~Bilokopytov, Non-euclidean universal approximation, in:
  Advances in Neural Information Processing Systems, Vol.~33, Curran
  Associates, Inc., 2020, pp. 10635--10646.

\bibitem{kratsios2021quantitative}
A.~Kratsios, L.~Papon, Universal approximation theorems for differentiable
  geometric deep learning, arXiv e-prints.

\bibitem{ZamanlooyConstraints}
A.~Zamanlooy, Behnoosh~Kratsios, , T.~Liu, I.~Dokmani{\'c}, {Universal
  Approximation Under Constraints is Possible with Transformers}, arXiv
  e-prints (2021) arXiv:2110.03303\href {http://arxiv.org/abs/2110.03303}
  {\path{arXiv:2110.03303}}.

\bibitem{pmlrv97cohen19d}
T.~Cohen, M.~Weiler, B.~Kicanaoglu, M.~Welling, Gauge equivariant convolutional
  networks and the icosahedral {CNN}, in: K.~Chaudhuri, R.~Salakhutdinov
  (Eds.), Proceedings of the 36th International Conference on Machine Learning,
  Vol.~97 of Proceedings of Machine Learning Research, PMLR, 2019, pp.
  1321--1330.

\bibitem{petersen2020equivalence}
P.~Petersen, F.~Voigtlaender, Equivalence of approximation by convolutional
  neural networks and fully-connected networks, Proceedings of the American
  Mathematical Society 148~(4) (2020) 1567--1581.

\bibitem{yarotsky2021universal}
D.~Yarotsky, Universal approximations of invariant maps by neural networks,
  Constructive Approximation (2021) 1--68.

\bibitem{NEURIPS2019_7ac71d43}
C.~Durkan, A.~Bekasov, I.~Murray, G.~Papamakarios, Neural spline flows, in:
  H.~Wallach, H.~Larochelle, A.~Beygelzimer, F.~d\textquotesingle
  Alch\'{e}-Buc, E.~Fox, R.~Garnett (Eds.), Advances in Neural Information
  Processing Systems, Vol.~32, Curran Associates, Inc., 2019.

\bibitem{NEURIPS2018_69386f6b}
R.~T.~Q. Chen, Y.~Rubanova, J.~Bettencourt, D.~K. Duvenaud,
  \href{https://proceedings.neurips.cc/paper/2018/file/69386f6bb1dfed68692a24c8686939b9-Paper.pdf}{Neural
  ordinary differential equations}, in: S.~Bengio, H.~Wallach, H.~Larochelle,
  K.~Grauman, N.~Cesa-Bianchi, R.~Garnett (Eds.), Advances in Neural
  Information Processing Systems, Vol.~31, Curran Associates, Inc., 2018.
\newline\urlprefix\url{https://proceedings.neurips.cc/paper/2018/file/69386f6bb1dfed68692a24c8686939b9-Paper.pdf}

\bibitem{grathwohl2018scalable}
W.~Grathwohl, R.~T.~Q. Chen, J.~Bettencourt, D.~Duvenaud, Scalable reversible
  generative models with free-form continuous dynamics, in: International
  Conference on Learning Representations, 2019.

\bibitem{kratsios2021neu}
A.~Kratsios, C.~Hyndman, Neu: A meta-algorithm for universal uap-invariant
  feature representation, Journal of Machine Learning Research 22~(92) (2021)
  1--51.

\bibitem{farago1993strong}
A.~Farag{\'o}, G.~Lugosi, Strong universal consistency of neural network
  classifiers, IEEE Transactions on Information Theory 39~(4) (1993)
  1146--1151.

\bibitem{AubinFrankowskaSetValuedAnalysis}
J.-P. Aubin, H.~Frankowska, Set-valued analysis, Modern Birkh\"{a}user
  Classics, Birkh\"{a}user Boston, Inc., Boston, MA, 2009, reprint of the 1990
  edition [MR1048347].

\bibitem{pmlrv75yarotsky18a}
D.~Yarotsky, Optimal approximation of continuous functions by very deep relu
  networks, in: Proceedings of the 31st Conference On Learning Theory, Vol.~75
  of Proceedings of Machine Learning Research, PMLR, 2018, pp. 639--649.

\bibitem{WeirstrassConstructive2018Petrakis}
I.~Petrakis, A direct constructive proof of a {S}tone-{W}eierstrass theorem for
  metric spaces, in: Pursuit of the universal, Vol. 9709 of Lecture Notes in
  Comput. Sci., Springer, [Cham], 2016, pp. 364--374.

\bibitem{lu2017expressive}
Z.~Lu, H.~Pu, F.~Wang, Z.~Hu, L.~Wang, The expressive power of neural networks:
  A view from the width, in: Proceedings of the 31st International Conference
  on Neural Information Processing Systems, NIPS’17, Curran Associates Inc.,
  Red Hook, NY, USA, 2017, p. 6232–6240.

\bibitem{siegel2020approximation}
J.~W. Siegel, J.~Xu, Approximation rates for neural networks with general
  activation functions, Neural Networks.

\bibitem{capel2020approximation}
{\'A}.~Capel, J.~Oc{\'a}riz, Approximation with neural networks in variable
  lebesgue spaces, arXiv preprint arXiv:2007.04166.

\bibitem{Florian2021efficient}
P.~Cheridito, A.~Jentzen, F.~Rossmannek, Efficient approximation of
  high-dimensional functions with neural networks, IEEE Transactions on Neural
  Networks and Learning Systems.

\bibitem{GUHRING2021107}
I.~Gühring, M.~Raslan, Approximation rates for neural networks with encodable
  weights in smoothness spaces, Neural Networks 134 (2021) 107--130.

\bibitem{Burago2Ivanov2001CourseMetricGeometry}
D.~Burago, Y.~Burago, S.~Ivanov, A course in metric geometry, Vol.~33 of
  Graduate Studies in Mathematics, American Mathematical Society, Providence,
  RI, 2001.

\bibitem{SiDachenGeng2010BregmanRegularization}
S.~Si, D.~Tao, B.~Geng, Bregman divergence-based regularization for transfer
  subspace learning, IEEE Transactions on Knowledge and Data Engineering 22~(7)
  (2010) 929--942.

\bibitem{ClusteringBregman2005}
A.~Banerjee, S.~Merugu, I.~S. Dhillon, J.~Ghosh, Clustering with {B}regman
  divergences, J. Mach. Learn. Res. 6 (2005) 1705--1749.

\bibitem{InfiniteDimBregman}
B.~A. Frigyik, S.~Srivastava, M.~R. Gupta, Functional {B}regman divergence and
  {B}ayesian estimation of distributions, IEEE Trans. Inform. Theory 54~(11)
  (2008) 5130--5139.

\bibitem{KullbackLeiblerOriginal1951}
S.~Kullback, R.~A. Leibler, On information and sufficiency, Ann. Math.
  Statistics 22 (1951) 79--86.

\bibitem{amari2000methods}
S.-i. Amari, H.~Nagaoka, Methods of information geometry, Vol. 191, American
  Mathematical Soc., 2000.

\bibitem{JMLR:v22:20-867}
R.~Agrawal, T.~Horel, Optimal bounds between f-divergences and integral
  probability metrics, Journal of Machine Learning Research 22~(128) (2021)
  1--59.

\bibitem{Clarke1975generalizedGradientsAndApplications}
F.~H. Clarke, Generalized gradients and applications, Trans. Amer. Math. Soc.
  205 (1975) 247--262.

\bibitem{dahlhaus1994complexity}
E.~Dahlhaus, D.~S. Johnson, C.~H. Papadimitriou, P.~D. Seymour, M.~Yannakakis,
  The complexity of multiterminal cuts, SIAM Journal on Computing 23~(4) (1994)
  864--894.

\bibitem{MaxflowmMincutTheoremImprovedPulat1989}
P.~S. Pulat, On the relation of max-flow to min-cut for generalized networks,
  European J. Oper. Res. 39~(1) (1989) 103--107.

\bibitem{Bartalmetricapprox}
Y.~Bartal, On approximating arbitrary metrices by tree metrics, in: S{TOC} '98
  ({D}allas, {TX}), ACM, New York, 1999, pp. 161--168.

\bibitem{AdamAndBeyond}
S.~J. Reddi, S.~Kale, S.~Kumar, On the convergence of adam and beyond.

\bibitem{gelenbe1990stability}
E.~Gelenbe, Stability of the random neural network model, Neural computation
  2~(2) (1990) 239--247.

\bibitem{louart2018random}
C.~Louart, Z.~Liao, R.~Couillet, et~al., A random matrix approach to neural
  networks, The Annals of Applied Probability 28~(2) (2018) 1190--1248.

\bibitem{NEURIPS2019_5481b2f3}
G.~Yehudai, O.~Shamir, On the power and limitations of random features for
  understanding neural networks, in: H.~Wallach, H.~Larochelle, A.~Beygelzimer,
  F.~dAlch\'{e} Buc, E.~Fox, R.~Garnett (Eds.), Advances in Neural Information
  Processing Systems, Vol.~32, Curran Associates, Inc., 2019.

\bibitem{cuchiero2020deep}
C.~Cuchiero, M.~Larsson, J.~Teichmann, Deep neural networks, generic universal
  interpolation, and controlled odes, SIAM Journal on Mathematics of Data
  Science 2~(3) (2020) 901--919.

\bibitem{gonon2020approximation}
L.~Gonon, L.~Grigoryeva, J.~Ortega, Approximation bounds for random neural
  networks and reservoir systems, ArXiv abs/2002.05933.

\bibitem{ContTankov2004FinancialJumpProcesses}
R.~Cont, P.~Tankov, Financial modelling with jump processes, Chapman \&
  Hall/CRC Financial Mathematics Series, Chapman \& Hall/CRC, Boca Raton, FL,
  2004.

\bibitem{FilipovicMartin2020PolynomialJumpDiffusions}
D.~Filipovi\'{c}, M.~Larsson, Polynomial jump-diffusion models, Stoch. Syst.
  10~(1) (2020) 71--97.

\bibitem{johnson2018deep}
J.~Johnson, Deep, skinny neural networks are not universal approximators, arXiv
  preprint arXiv:1810.00393.

\bibitem{park2020minimum}
S.~Park, C.~Yun, J.~Lee, J.~Shin, Minimum width for universal approximation,
  arXiv preprint arXiv:2006.08859.

\bibitem{KaggleCaliHousePrices}
Kaggle, California housing prices,
  \url{https://www.kaggle.com/camnugent/california-housing-prices}, accessed:
  2020-05-15 (2017).

\bibitem{KaggleExpertDiscussion}
A.~Geron, {handson-ml},
  \url{https://github.com/ageron/handson-ml/tree/master/datasets/housing},
  accessed: 2020-05-15 (2018).

\bibitem{ExtremeValueTheory_HaanLaurensFerreira2006}
L.~de~Haan, A.~Ferreira, Extreme value theory, Springer Series in Operations
  Research and Financial Engineering, Springer, New York, 2006, an
  introduction.

\bibitem{narici2010topological}
L.~Narici, E.~Beckenstein, Topological vector spaces, 2nd Edition, Vol. 296 of
  Pure and Applied Mathematics (Boca Raton), CRC Press, Boca Raton, FL, 2011.

\bibitem{caragea2020neural}
A.~Caragea, P.~Petersen, F.~Voigtlaender, Neural network approximation and
  estimation of classifiers with classification boundary in a barron class,
  arXiv preprint arXiv:2011.09363.

\bibitem{PallaschkePumplun2015DiscussionLipschitzMetricSpaces}
D.~Pallaschke, D.~Pumpl\"{u}n, Spaces of {L}ipschitz functions on metric
  spaces, Discuss. Math. Differ. Incl. Control Optim. 35~(1) (2015) 5--23.

\bibitem{HoffmansResult2015ContinuityofInverseMonotoneFunction}
H.~Hoffmann, On the continuity of the inverses of strictly monotonic functions,
  Irish Math. Soc. Bull. 1~(75) (2015) 45--57.

\bibitem{InfiniteHitchhiker2006}
C.~D. Aliprantis, K.~C. Border, Infinite dimensional analysis, 3rd Edition,
  Springer, Berlin, 2006, a hitchhiker's guide.

\bibitem{klenke2013probability}
A.~Klenke, Probability theory, 2nd Edition, Universitext, Springer, London,
  2014, a comprehensive course.

\bibitem{folland2013real}
G.~B. Folland, Real analysis: modern techniques and their applications, John
  Wiley \& Sons, 2013.

\bibitem{FedererGeometricMeasureTheory1969}
H.~Federer, Geometric measure theory, Die Grundlehren der mathematischen
  Wissenschaften, Band 153, Springer-Verlag New York Inc., New York, 1969.

\bibitem{Algo0extensionproblemYuvalCalinescuHoward2004}
G.~Calinescu, H.~Karloff, Y.~Rabani, Approximation algorithms for the
  0-extension problem, SIAM J. Comput. 34~(2) (2004/05) 358--372.

\end{thebibliography}
\end{document}